\title{A Unified Model and Dimension \\
for Interactive Estimation}
\newcommand{\relu}{\textit{relu}}
\definecolor{cerulean}{rgb}{0.10, 0.58, 0.75}
\newcommand{\newdim}{dissimilarity\xspace}
\newcommand{\Newdim}{Dissimilarity\xspace}
\newcommand{\regret}{\text{\upshape Regret}}
\newcommand{\score}{\rho}
\newcommand{\scoreSQ}{\score_{\text{\upshape SQ}}}
\newcommand{\scoreSphere}{\score_{\text{\upshape sphere}}}
\newcommand{\scoreBandits}{\score_{\text{\upshape bandits}}}
\newcommand{\reward}{r}  %
\newcommand{\given}{\mathbin{\vert}}
\newcommand{\ball}{\mathcal{B}}
\newcommand{\sphere}{\mathcal{S}}
\DeclareMathOperator*{\argmin}{argmin}
\DeclareMathOperator*{\argmax}{argmax}
\newtheorem{definition}{Definition}
\newtheorem{theorem}{Theorem}
\newtheorem{lemma}[theorem]{Lemma}
\newtheorem{example}[theorem]{Example}
\newtheorem{proposition}[theorem]{Proposition}
\newtheorem{corollary}[theorem]{Corollary}
\theoremstyle{remark}
\newtheorem*{claim*}{Claim}
\newcommand{\card}[1]{\lvert#1\rvert}
\newcommand{\Card}[1]{\left\lvert#1\right\rvert}
\newcommand{\abs}[1]{\lvert#1\rvert}
\newcommand{\braces}[1]{\{#1\}}
\newcommand{\BigBraces}[1]{\Bigl\{#1\Bigr\}}
\newcommand{\bigBracks}[1]{\bigl[#1\bigr]}
\newcommand{\BigBracks}[1]{\Bigl[#1\Bigr]}
\newcommand{\BiggBracks}[1]{\Biggl[#1\Biggr]}
\newcommand{\norm}[1]{\lVert #1 \rVert}
\newcommand{\BigParens}[1]{\Bigl(#1\Bigr)}
\newcommand{\bigParens}[1]{\bigl(#1\bigr)}
\newcommand{\biggParens}[1]{\biggl(#1\biggr)}
\newcommand{\bigGiven}{\mathbin{\bigm\vert}}
\newcommand{\BigGiven}{\mathbin{\Bigm\vert}}
\newcommand{\BigAbs}[1]{\Bigl\lvert#1\Bigr\rvert}
\newcommand{\bigAbs}[1]{\bigl\lvert#1\bigr\rvert}
\newcommand{\biggBraces}[1]{\biggl\{#1\biggr\}}
\newcommand{\bracesauto}[1]{\left\{{#1}\right\}}
\newcommand{\floor}[1]{\left\lfloor{#1}\right\rfloor}
\newcommand{\Ceil}[1]{\left\lceil#1\right\rceil}
\newcommand{\Parens}[1]{\left(#1\right)}
\newcommand{\set}[1]{\{#1\}}
\newcommand{\Set}[1]{\left\{#1\right\}}
\newcommand{\bigSet}[1]{\bigl\{#1\bigr\}}
\newcommand{\Alg}{{\textsf{\upshape Alg}}}
\newcommand{\one}{\mathbf{1}}
\DeclareMathOperator{\rank}{rank}
\newcommand{\AlgOLS}{\mathcal{R}\textit{eg}}
\newcommand{\RegretOLS}[1]{\mathrm{Regret}_{\mathcal{R}\textit{eg}}(#1)}
\def\X{{\mathcal X}}
\def\O{{\mathcal O}}
\def\H{{\mathcal H}}
\def\A{{\mathcal A}}
\def\B{{\mathcal B}}
\def\F{{\mathcal F}}
\def\Z{{\mathcal Z}}
\def\E{{\mathbb E}}
\def \Mb{{\mathbf M}}
\def \Ub{{\mathbf U}}
\newcommand{\Db}{{\mathbf D}}
\newcommand{\Sb}{{\mathbf S}}
\newcommand{\Ab}{{\mathbf A}}
\newcommand{\Kb}{{\mathbf K}}
\newcommand{\C}{\mathcal{C}}
\newcommand{\reals}{{\mathbb R}}
\def\d{\mathtt{d}}
\def\dmon{\overline{\d}}
\newcommand{\dimSQ}{\mathtt{dim}_\text{\upshape SQ}}
\newcommand{\dimE}{\mathtt{dim}_\text{\upshape E}}
\newcommand{\dimEmon}{\overline{\mathtt{dim}}_\text{\upshape E}}
\newcommand{\dsq}{d_\text{\upshape SQ}}
\newcommand{\drho}{d_{\rho}}
\newcommand{\ab}{\mathbf{a}}
\newcommand{\ub}{\mathbf{u}}
\newcommand{\vb}{\mathbf{v}}
\newcommand{\zb}{\mathbf{z}}
\newcommand{\zhat}{\hat{z}}
\newcommand{\thetab}{\boldsymbol{\theta}}
\author{%
  Nataly Brukhim \\
  Princeton University\\
  \texttt{nbrukhim@princeton.edu} \\
  \And
  Miroslav Dud{\'i}k \\
  Microsoft Research\\
  \texttt{mdudik@microsoft.com} \\
  \And
  Aldo Pacchiano \\
  Microsoft Research\\
  \texttt{apacchiano@microsoft.com} \\
  \And
  Robert Schapire \\
  Microsoft Research\\
  \texttt{schapire@microsoft.com} \\
}
\begin{document}

\maketitle

\begin{abstract}

We study an abstract framework for interactive learning called
\emph{interactive estimation} in which
the goal is to estimate a target from its ``similarity'' to points queried by the learner.
We introduce a combinatorial measure called \emph{\newdim dimension}
which largely captures learnability in our model.
We present a simple, general, and broadly-applicable algorithm, for which we obtain both regret and PAC generalization bounds that are polynomial in the new dimension.
We show that our framework subsumes and thereby unifies
two classic learning models:
statistical-query learning and structured bandits.
We also delineate how the \newdim dimension is related to well-known parameters for both frameworks, in some cases yielding significantly improved analyses.

\end{abstract}

\section{Introduction}

We study a general interactive learning protocol called \emph{interactive estimation}.
In this model, the learner repeatedly queries the environment with an element from a set of \emph{alternatives},
and observes a stochastic reward whose expectation is given by an arbitrary measure of the ``similarity'' between the queried alternative and the unknown ground truth.
Thus, in rough terms,
the goal is to estimate a target from its similarity to queried alternatives.
By studying such a general abstraction of interactive learning,
we are able to reason about the properties of a very broad family of learning settings, and to make connections across a variety of contexts.

Our results are based on a combinatorial complexity measure we introduce called the \emph{\newdim dimension}, which we show largely captures learnability in our model. Intuitively, this measure corresponds to the length of the longest sequence of alternatives
in which each one has a similar suboptimal value of similarity to all its predecessors. We then use the measure to analyze the performance of a simple, broadly-applicable class of algorithms which repeatedly make new queries that best fit the preceding observations.  We prove both regret bounds and PAC generalization bounds that are all polynomial in the \newdim dimension.

We show that our learning framework subsumes two classic learning models that were seemingly unrelated prior to this work:

First, our model subsumes
the statistical query (SQ) model, introduced by \citet{kearns1998efficient} for designing noise-tolerant learning algorithms.
In the SQ model, the learner can sequentially ask certain queries of an oracle, who responds with answers that are only approximately correct, with the goal of correctly estimating a target. Despite its simplicity, it has been proven to be a powerful model. Indeed, a wide range of algorithmic techniques in machine learning are implementable using SQ learning. Thus, it has been
proven useful, not only for designing noise-tolerant algorithms, but also for its connections to other noise
models,  and as an explanatory tool to prove
hardness of many problems
(see the survey of \citet{reyzin2020statistical}).
We show that our framework subsumes the SQ model, and furthermore that the \newdim dimension generalizes well-known parameters that characterize SQ learnability.

Second, our model captures
structured bandits, in which the learner repeatedly chooses actions
which yield stochastic rewards, with the goal of minimizing regret relative to the best action in hindsight. Over more than a decade, the eluder
dimension \citep{russo2013eluder} has been a central technique for analyzing regret for contextual
bandits and reinforcement learning (RL) with function approximation \citep{wen2013efficient,osband2014model,  wang2020reinforcement, dong2021provable}.
We will see that
the \newdim dimension is upper-bounded by the eluder dimension, and that there can in fact be a large gap between the two. This sometimes leads to an improved analysis when relying on the proposed \newdim measure rather than the eluder dimension.

Because SQ and bandits are both subsumed by our framework, all the results mentioned above directly apply to those settings as well,
including the applicability of our general-purpose algorithms.

To summarize, our main contributions are as follows:
\begin{itemize}
    \item \textbf{Unified framework.} We derive a general framework which captures various interactive learning settings, including specifically SQ and bandits.

\item \textbf{Novel dimension, performance bounds.} We introduce the \newdim dimension that largely characterizes learnability in our framework. We study a general, simple algorithm, and give a novel analysis that results in both regret and PAC generalization bounds that are polynomial in the new
dimension. We also give lower bounds in the SQ and bandit settings.
 \item \textbf{Improved analysis.}
We show instances in which the standard analysis of a certain class of algorithms using the eluder dimension yields bounds that are arbitrarily large, but in which an analysis using our dimension yields low regret bounds.
\end{itemize}

\textbf{Related work.\;\;}
The interactive estimation model we consider in this work is defined with respect to an evaluation function that can be thought of as an arbitrary measure of the “similarity” between
the queried alternative and the target.
Previously, \citet{balcan2006theory} developed a theory
of similarity-based learning that generalizes kernel methods,
providing sufficient conditions for a similarity function to be useful for learning.
\citet{chen2009similarity} review several approaches to classification  based on similarity between examples, including, for instance, kernels and nearest neighbors.
\citet{ben1995learning} studied a learning-by-distances model that resembles ours using a metric as a measure of similarity.
In comparison to these works,
our model admits an arbitrary similarity measure for which we derive a general dimension, algorithm, and bounds.

In the context of bandit and reinforcement learning, a parameter called the decision-estimation coefficient (DEC) has recently been proposed by \citet{foster2021statistical}
to characterize learnability in interactive decision making. Unlike DEC, our dimension is combinatorial in nature, and applies to settings like SQ, which are not captured by DEC.

As discussed above, our model subsumes SQ and bandits,
both of which have been extensively studied
(see the references above as well as various surveys \citep{lattimore2020bandit,reyzin2020statistical}).

\section{Setting} \label{sec:setting}

In this paper, we study an interactive learning protocol called \emph{interactive estimation}. In this protocol, the learner is provided with a set $\Z$ of \emph{alternatives}, and an \emph{evaluation function} $\score:\Z\times\Z\to [-1,1]$.
Intuitively, $\score$ can be viewed as a measure of ``similarity,'' though it need not be symmetric.
There is also a distinguished alternative $z^*\in\Z$ called the \emph{target}, fixed throughout the interaction, and unknown to the learner.
In each of a sequence of steps $t=1,\ldots,T$, the learner selects
one alternative $z_t\in\Z$ and receives a stochastic \emph{reward} $r_t\in [-1,1]$ drawn independently, conditioned on $z_t$,
with expectation satisfying $\E[r_t\given z_t]=\score(z_t\given z^*)$.
Informally,
by choosing alternatives and observing their similarity to $z^*$, the learner aims to get close to the target.
The special case when $\reward_t=\score(z_t\given z^*)$,
that is, when rewards are deterministic functions of the queried alternatives, is referred to as the \emph{deterministic setting}.

We generally assume $\score(z^*\given z^*)\geq \score(z\given z^*)$ for all $z\in\Z$ and denote this optimal value as $\alpha^*\coloneqq\score(z^*\given z^*)$. We will assume that the value of $\alpha^*$ is known to the learner or that we are provided with an alternate \emph{optimality level} $\alpha\le\alpha^*$ such that the task is to identify $z$ with $\score(z\given z^*)\ge\alpha$. At the end of Section~\ref{sec:upper} we discuss how this assumption can be relaxed.

We consider two alternative goals for a learner in this model: \emph{sublinear regret} and \emph{PAC generalization}.
A learner achieves \emph{sublinear regret} relative to an optimality level $\alpha\le\alpha^*$ if
$\regret(T,\alpha) = o(T)$, where
\[
\regret(T,\alpha) =  \smash[t]{\sum_{t=1}^T}
\BigParens{
  \alpha - \score(z_t\given z^*)
}.
\]
We say that a learner achieves \emph{PAC generalization} if
for any $\epsilon, \delta > 0$ and $\alpha\le\alpha^*$,
with probability at least $1-\delta$ (over the randomness of the query responses and the learner's own randomization),
after $m(\epsilon,\delta,\alpha)$ interactions in the protocol above, the learner outputs $\zhat$ such that
$\score(\zhat\given z^*) \ge  \alpha - \epsilon$. The function $m(\epsilon,\delta,\alpha)$ is referred to as sample complexity.
We recover standard notions of regret and PAC generalization by setting $\alpha=\alpha^*$.

\begin{example}[Point on a sphere]
\label{ex:sphere}
Let $\norm{\cdot}$ denote the standard Euclidean norm in $\reals^n$ and let
$\Z=\sphere_{n-1}=\braces{\zb\in\reals^n:\:\norm{\zb}=1}$ be the unit sphere in $\reals^n$.
The goal is to estimate an unknown point $\zb^*\in\sphere_{n-1}$ based on rewards equal to the inner product between the queries and the target, that is, $r_t=\scoreSphere(\zb_t\given\zb^*)\coloneqq
\langle \zb_t , \zb^*\rangle$.
\end{example}

 We now introduce the two main examples corresponding to classic learning models that are subsumed by the interactive estimation model.

\begin{example}[Structured bandits]
\label{ex:bandits}
Let $\A$ be an action set, $\F$ a space of reward functions $f : \A \rightarrow[-1,1]$, and ${f^*\in\F}$ the target reward function.
In step $t$, the learner chooses an action $a_t\in\A$ and receives reward $r_t\in[-1,1]$ with $\E[r_t\given a_t]=f^*(a_t)$. The goal is to maximize the sum of rewards.
Let $a^* = \argmax_{a \in \A} f^*(a)$ be an optimal action.
To represent bandits in our formalism, we let $\Z = \F \times \A$,
$z^* = (f^*, a^*)$,  and
$\scoreBandits\bigParens{(f,a)\bigGiven(f^*, a^*)}  = f^*(a) $.
\end{example}

The structured bandit problem has been extensively studied, and Example \ref{ex:bandits} captures its expressiveness within our framework. For example, it recovers the possibly simplest case of $K$-armed bandits, by considering $\A = \{1,...,K\}$ and $\F = [0,1]^K$. At each round, the learner chooses arm $a_t \in \A$ and observes a reward $r_t$ which is drawn from a distribution with mean $f^*(a_t)$.
See Appendix \ref{appendix:mab} for a more concrete example of $K$-armed bandits instantiated within our framework.
In Section \ref{sec:bandits} we also give concrete bounds for other example classes including linear bandits and GLM bandits.

\begin{example}[SQ learning] \label{ex:SQ}
Given a domain~$\X$, the goal is to learn a binary
classifier $h^*:\X\to\braces{\pm 1}$ from some hypothesis
class $\H\subseteq\{\pm 1\}^\X$, based on training examples $(x,y)$ drawn from some distribution $D$ such that $y=h^*(x)$. In step $t$, the learner produces a hypothesis
$h_t$ and observes the accuracy of $h_t$ on a fresh finite sample. In this case, $\Z=\H$, the evaluation function is equal to the expected accuracy $\scoreSQ(h\given h^*)= \E_{x\sim D}[h(x)h^*(x)] $, and the reward is the empirical accuracy on a fresh sample.\looseness=-1
\end{example}

The SQ learning model considered in this work (Example~\ref{ex:SQ})  differs from the original model {of~\citet{kearns1998efficient}}
because it is restricted, as in previous works~\cite{bshouty2002using, feldman2017statistical, yang2005new}, to so-called \emph{correlational} queries (called CSQs) and assumes stochastic responses, as opposed to allowing arbitrary queries and adversarial responses.
We discuss relationships between various SQ variants in Appendix~\ref{appendix:sq}.

We finish this section by introducing a central concept of this paper, a new combinatorial complexity measure called the \emph{\newdim dimension} which, as we will see, largely captures learnability in the interactive estimation protocol.

\begin{definition}[\Newdim dimension] \label{def:new_dim}
For a set $\Z$, scalars $\alpha\in\reals$, $\epsilon > 0$,  and evaluation function $\score:\Z\times\Z\to [-1,1]$, the \emph{\newdim dimension} $\d_\score(\Z, \alpha, \epsilon)$  is the largest integer $d$ for which there exist $z_1,\dotsc, z_d \in \Z$ with  $\score(z_i\given z_i ) \ge \alpha$, and a scalar $c \le \alpha -  \epsilon$, such that for all $i<j$,
\[
\BigAbs{\score(z_{i}\given z_j) - c} \le \frac{\epsilon}{\sqrt{d}}.
\]
Furthermore, denote the \emph{monotonic \newdim dimension} as $\dmon_\score(\Z, \alpha, \epsilon)\coloneqq \max_{\epsilon' \ge \epsilon} \d_\score(\Z, \alpha, \epsilon')$.
\end{definition}

Note that $\d_\score(\Z, \alpha, \epsilon)=0$ if there is no $z$ such that $\score(z\given z)\ge\alpha$, and otherwise $\d_\score(\Z, \alpha, \epsilon)\ge 1$. In particular, if $\alpha\le\alpha^*$ then $\d_\score(\Z, \alpha, \epsilon)\ge 1$.

In rough terms, this dimension corresponds to the longest sequence of points with $\alpha$-large self-evaluation, such that the evaluation $\score(z_i\given z_j)$ of each point $z_i$ relative to every {successive point $z_j$} is ``small'' (significantly less than $\alpha$), and also tightly clustered around some value $c$.
Thus, each point is similar to itself, but dissimilar {from all successive points to about the same degree}.
The idea is illustrated in Figure~\ref{fig:dim}. {The monotonic dissimilarity
dimension is the tightest upper bound on the dissimilarity dimension that is non-increasing in $\epsilon$.}

\begin{figure}
\centering
\includegraphics[width = 13cm]{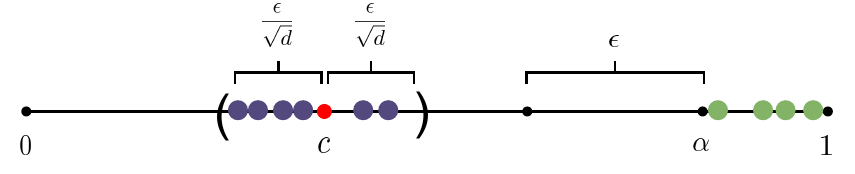}
\caption{An illustration of the \newdim dimension $\d_\rho(\Z, \alpha, \epsilon) = 4$. The figure shows $\rho$ values on the $[0,1]$ interval for a sequence of elements $z_1,z_2,z_3,z_4$. The four green points represent self-evaluation values $\rho(z_i\given z_i)$ for $i=1,\dotsc,4$; all are greater than or equal to $\alpha$. The six blue points represent values $\rho(z_i\given z_j)$ for $i < j$; all are within the distance $\epsilon/\sqrt{d}$ from the value $c\le\alpha - \epsilon$.}
\label{fig:dim}
\end{figure}

Various concrete examples where the \newdim dimension can be bounded are provided in %
Section~\ref{sec:bandits}.
For instance, using a general bound for linear bandits from Section~\ref{sec:bandits}, we can show that for the task of finding a point on a sphere based on inner products (Example~\ref{ex:sphere}), the dimension $\dmon_{\scoreSphere}(\Z, \alpha, \epsilon) \le 4n + 3$, a bound that is independent of both $\alpha$ and $\epsilon$.

\section{Algorithms and upper bounds} \label{sec:upper}

In this section we analyze algorithms for the interactive estimation protocol, which we call \emph{interactive estimation algorithms}.
We show that when an interactive estimation algorithm satisfies
two properties, \emph{large self-evaluations} and \emph{decaying estimation error}, then its regret can be bounded using the \newdim dimension. We introduce a simple algorithm (Algorithm~\ref{alg:MAIN1}), which satisfies these properties for many standard classes of alternatives.
The first property requires that the algorithm only select alternatives that would achieve the expected reward of at least $\alpha$ if they were the target:

\begin{definition}[$\alpha$-large self-evaluations]
    An interactive estimation algorithm has $\alpha$-large self-evaluations if at every time step $t=1,\dotsc,T$, it selects a query $z_t$ such that $z_t \in \Z_\alpha$, where
\begin{equation}
    \Z_\alpha = \{ z \in \Z : \score(z\given z) \ge \alpha\}.
\end{equation}
\end{definition}

Algorithm~\ref{alg:MAIN1} satisfies this property, with the optimality level $\alpha$ provided as input. At the end of the section, we discuss the case when $\alpha$ is not provided and $\alpha^*$ is unknown.  We derive an optimistic version of Algorithm~\ref{alg:MAIN1} that achieves $\alpha^*$-large self-evaluations with high probability.

The second property states that the queries produced by the algorithm provide increasingly good estimates %
of the expected rewards in the previous rounds (that is they are good estimators
\emph{with the benefit of hindsight}), as quantified by the square loss.

\begin{definition}[Decaying estimation error]\label{assume:est:error}
An interactive estimation algorithm has decaying estimation error if there exists $C_{T,\delta} \ge 0$ growing sublinearly in~$T$, that is, $C_{T,\delta}=o(T)$, such that with probability at least $1-\delta$ the sequence of queries $z_1,\dotsc,z_T$ produced by the algorithm satisfies
\begin{equation}
\label{equation::least_squares_main_guarantee}
\sum_{i=1}^{t-1} \BigParens{ \score(z_i\given z_t) - \score(z_i\given z^*) }^2 \leq C_{T,\delta}
\end{equation}
 for all $t\in\{1,\dotsc,T\}$ simultaneously.
\end{definition}

\begin{algorithm}[t]
    \caption{Interactive Estimation via Least Squares}\label{alg:MAIN1}
    \begin{algorithmic}[1]
        \STATE \textbf{Input:} {set of alternatives} $\Z$, evaluation function $\rho$,
               optimality level $\alpha$, number of steps $T$.
        \FOR{$t=1,\dotsc,T$}
        \STATE Submit the query
               $\displaystyle
                z_t = \argmin_{z\in\Z_\alpha}
                      \sum_{i=1}^{t-1}
                      \BigParens{\score(z_i\given z) - \reward_i}^2$.
               \label{step:zj}
        \STATE Observe reward $\reward_t$.
        \ENDFOR
    \end{algorithmic}
\end{algorithm}

Algorithm~\ref{alg:MAIN1} optimizes an empirical version of
Eq.~\eqref{equation::least_squares_main_guarantee}, with the observed rewards $r_i$ in place of the expectations $\score(z_i\given z^*)$. Thus, in the deterministic setting, with $\reward_i=\score(z_i\given z^*)$, Algorithm~\ref{alg:MAIN1} satisfies this property with $C_{T,\delta}=0$. It can also be shown that it satisfies this property when the set of alternatives is finite:

\begin{theorem}\label{thm:C_T_delta_finite_Z}
Assume that $|\Z| < \infty$.
Then Algorithm \ref{alg:MAIN1} satisfies the decaying estimation error property with
 $
 C_{T,\delta} = O\bigParens{\ln(2T|\Z|/\delta)}
 $.
\end{theorem}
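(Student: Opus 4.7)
The plan is to interpret Algorithm~\ref{alg:MAIN1} as empirical risk minimization over a finite hypothesis class, with squared loss, and then use a standard martingale concentration argument together with a union bound over $\Z_\alpha$ and $t \in [T]$.

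First I would observe that $z^* \in \Z_\alpha$, since $\rho(z^*\given z^*) = \alpha^* \geq \alpha$. Hence $z^*$ is a feasible choice for the \texttt{argmin} at every round. Writing $\xi_i \coloneqq r_i - \rho(z_i \given z^*)$, the hypothesis $\E[r_i \given z_i] = \rho(z_i \given z^*)$ makes $(\xi_i)_{i \geq 1}$ a bounded martingale difference sequence (with $|\xi_i| \leq 2$) with respect to the natural filtration generated by $(z_1, r_1, z_2, r_2, \dotsc)$. The optimality of $z_t$ gives
\[
\sum_{i=1}^{t-1}\bigParens{\rho(z_i\given z_t) - r_i}^2
\;\le\;
\sum_{i=1}^{t-1}\bigParens{\rho(z_i\given z^*) - r_i}^2.
\]
Expanding both sides and cancelling the $\xi_i^2$ terms yields the familiar inequality
\[
\sum_{i=1}^{t-1}\bigParens{\rho(z_i\given z_t) - \rho(z_i\given z^*)}^2
\;\le\;
2 \sum_{i=1}^{t-1}\xi_i\bigParens{\rho(z_i\given z_t) - \rho(z_i\given z^*)}.
\]

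Next, for every \emph{fixed} $z \in \Z_\alpha$, the sequence $M_i(z) \coloneqq \xi_i\bigParens{\rho(z_i\given z) - \rho(z_i\given z^*)}$ is an MDS whose increments are bounded by $4$ and whose conditional variance is controlled by $4\bigParens{\rho(z_i\given z) - \rho(z_i\given z^*)}^2$. I would apply Freedman's inequality (or a Bernstein-type bound for MDS) to obtain, with probability at least $1 - \delta'$, a bound of the form
\[
\sum_{i=1}^{t-1} M_i(z)
\;\le\;
\sqrt{8\,V_t(z)\,\ln(1/\delta')} \,+\, \tfrac{8}{3}\ln(1/\delta'),
\qquad V_t(z) \coloneqq \sum_{i=1}^{t-1}\bigParens{\rho(z_i\given z) - \rho(z_i\given z^*)}^2.
\]
Then I would take a union bound over all $z \in \Z_\alpha$ and $t \in \{1,\dotsc,T\}$, setting $\delta' = \delta/(T|\Z|)$, so that the bound holds simultaneously for every such pair; this is where finiteness of $\Z$ is used.

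Finally I would specialize $z = z_t$ (permitted because the event holds for all $z \in \Z_\alpha$), plug into the expansion above, and denote $X_t \coloneqq V_t(z_t)$. The result is an inequality of the shape $X_t \le c_1\sqrt{X_t \ln(T|\Z|/\delta)} + c_2 \ln(T|\Z|/\delta)$, from which the quadratic-in-$\sqrt{X_t}$ rearrangement yields $X_t = O\bigParens{\ln(T|\Z|/\delta)}$, as required. The main obstacle is handling the data-dependence of $z_t$, and this is precisely what the union bound over the finite class $\Z_\alpha \subseteq \Z$ is designed to circumvent; everything else is a routine self-bounding computation.
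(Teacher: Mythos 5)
Your proposal is correct and follows essentially the same route as the paper: the least-squares optimality inequality, Freedman's inequality applied to the martingale difference sequence $\xi_i(\rho(z_i\given z)-\rho(z_i\given z^*))$, a union bound over the function class and over $t$, and a final self-bounding rearrangement. The only cosmetic difference is that the paper proves a general covering-number version of this guarantee (Theorem~\ref{thm::new_least_squares_estimator}, Corollary~\ref{cor:alg:MAIN1}) and then deduces Theorem~\ref{thm:C_T_delta_finite_Z} in one line by noting $\mathcal{N}(\Z_\alpha,\epsilon)\le|\Z|$, whereas you instantiate the argument directly over the finite set $\Z_\alpha$, which avoids the discretization error term but yields the same bound.
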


In the general case, when $\Z$ is infinite, we show that Algorithm \ref{alg:MAIN1} satisfies the decaying estimation error property with $C_{T,\delta} = O\bigParens{\log(TN/\delta)}$ where $N$ is a suitable covering number of $\Z$ (see Corollary~\ref{cor:alg:MAIN1} in Appendix~\ref{section::least_squares}).  For example, for \emph{linear bandits}, which is an instance of Example \ref{ex:bandits} in which the action set and function class correspond to a subset of the unit ball in $\mathbb{R}^n$, we obtain $C_{T,\delta} = O\bigParens{n\log(1/\epsilon) + \log(T/\delta)}$.

In Appendix~\ref{appendix::regression_oracle}, we discuss an approach in which we have access to an online regression oracle for the least squares problem in step~\ref{step:zj}. We show that a suitably modified version of Algorithm \ref{alg:MAIN1} has a decaying estimation error as long as the online regression oracle achieves a sublinear regret
(but without further dependence on a covering number).

To develop some intuition how Algorithm \ref{alg:MAIN1} works,
we can again consider the $K$-armed bandit problem (a special case of Example \ref{ex:bandits}), and suppose that $\alpha=0.75$. In each step, the algorithm picks a pair $(f_t,a_t)$, where $f_t\in[0,1]^K$ is the vector of mean reward estimates and $a_t$ is the arm with the largest mean estimate. The estimates $f_t(a)$, $a=1,\dotsc,K$, are formed by optimizing the least squares error of the observed rewards, under the constraint that at least one of the mean estimates must be above $0.75$. As a result, the algorithm pulls the arm with the largest average reward as long as that average is above $0.75$ (arms that have not been pulled are assumed to have averages above $0.75$). If all the averages are below $0.75$ then the algorithm selects the arm $a$ with the smallest value $n_a(0.75-\hat{\mu}_a)^2$, where $n_a$ is how many times the arm has been pulled so far and $\hat{\mu}_a$ is its average reward; it can be verified that this solves the least squares problem subject to the constraint that at least one of the mean estimates is above $0.75$.\looseness=-1

We next state our main results: a regret bound and a PAC generalization guarantee. They are both based on bounding how many ``bad'' queries
any algorithm with large self-evaluations and decaying estimation error can make. Concretely, we say that a query $z \in \Z$ is $\epsilon$-\textit{bad} if its
 suboptimalty gap is greater than $\epsilon$, that is, if
\[
     \score(z\given z^*) < \alpha - \epsilon.
\]
The next lemma shows that the number of $\epsilon$-bad queries is upper bounded polynomially in the \newdim dimension.

\begin{lemma}[Few bad queries] \label{lem:few_bad_queries}
  Let $\epsilon,\delta>0$, and let $d= \dmon_\score(\Z, \alpha, \epsilon) < \infty$
  for some set~$\Z$, evaluation function $\score$ and $\alpha\le\alpha^*$.
Let $\Alg$ be an interactive estimation algorithm with $\alpha$-large self-evaluations and a decaying
estimation error with some $C_{T,\delta}$. Then, with probability at least $1-\delta$, the number of $\epsilon$-bad queries that $\Alg$ makes in $T$ steps is
at most
$2d^{1.5}\ln(4/\epsilon) + 12d^{2.5}C_{T,\delta}/\epsilon^2$.
Consequently, if $C_{T,\delta}\ge\ln(2T)$, then with probability at least $1-\delta$, the number of $\epsilon$-bad queries is at most $36d^{2.5}C_{T,\delta}/\epsilon^2$, and if $C_{T,\delta}=0$ then it
is at most $2d^{1.5}\ln(4/\epsilon)$.
\end{lemma}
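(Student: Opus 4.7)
The plan is to stratify the $\epsilon$-bad queries by their suboptimality scale, bucket them within each stratum by the value of $\score(\cdot\given z^*)$, and then use the decaying estimation error together with a Tur\'an-type argument to extract, within each bucket, a subsequence that --- if longer than $d$ --- would witness a value of the \newdim dimension contradicting the monotonic bound.

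First I would condition on the probability-$\ge 1-\delta$ event that \eqref{equation::least_squares_main_guarantee} holds for all rounds $t\le T$, and record its Markov consequence: for any threshold $\tau>0$ and any round $t$, at most $C_{T,\delta}/\tau^2$ of the indices $i<t$ satisfy $|\score(z_i\given z_t)-\score(z_i\given z^*)|>\tau$. Next I would partition the $\epsilon$-bad queries into geometric levels, where level $k\ge 0$ contains those $w$ with $\score(w\given z^*)\in[\alpha-2^{k+1}\epsilon,\alpha-2^k\epsilon)$. Since $\score\in[-1,1]$, there are fewer than $\log_2(4/\epsilon)$ nonempty levels, so it is enough to show that the level-$k$ count $K_k$ satisfies $K_k\le O\!\bigl(d^{1.5}+d^{2.5}C_{T,\delta}/(2^k\epsilon)^2\bigr)$; a geometric sum over $k$ then yields the $d^{1.5}\ln(4/\epsilon)$ and $d^{2.5}C_{T,\delta}/\epsilon^2$ terms in the statement.

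For the per-level bound, set $\epsilon_k=2^k\epsilon$ and $\tau_k=\epsilon_k/\sqrt{d+1}$, and partition the level-$k$ interval $[\alpha-2\epsilon_k,\alpha-\epsilon_k)$ into $\lceil 2\sqrt{d+1}\,\rceil$ subintervals of width $\tau_k/2$ whose right endpoints $c$ all satisfy $c\le\alpha-\epsilon_k$. Pigeonhole picks a bucket containing $M\ge K_k/(2\sqrt{d+1})$ queries, each with $|\score(w\given z^*)-c|\le\tau_k/2$. On these $M$ vertices build an undirected graph, placing an edge between $w_i$ (queried first) and $w_j$ whenever $|\score(w_i\given w_j)-\score(w_i\given z^*)|>\tau_k/2$; the Markov consequence bounds the number of such ``bad predecessors'' of each $w_j$ by $4C_{T,\delta}/\tau_k^2$, hence the average degree by $8C_{T,\delta}/\tau_k^2$, so the Caro--Wei bound yields an independent set $z'_1,\dots,z'_L$ of size $L\ge M/(1+8C_{T,\delta}/\tau_k^2)$. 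The $\alpha$-large self-evaluation property gives $\score(z'_i\given z'_i)\ge\alpha$, and two triangle-inequality hops through $z^*$ give $|\score(z'_i\given z'_j)-c|\le\tau_k$ for every $i<j$ in the set. If $L\ge d+1$, then setting $\epsilon'=\tau_k\sqrt{d+1}=\epsilon_k$ turns this into a witness certifying $\d_\score(\Z,\alpha,\epsilon_k)\ge d+1$, which contradicts $\dmon_\score(\Z,\alpha,\epsilon)=d$ because $\epsilon_k\ge\epsilon$ and the monotonic dimension dominates $\d_\score$ at every scale $\ge\epsilon$. Hence $L\le d$, which yields $M\le d(1+8C_{T,\delta}/\tau_k^2)$ and $K_k\le 2\sqrt{d+1}\,M$; summing over the $<\log_2(4/\epsilon)$ levels gives the main bound, and the two ``consequently'' cases follow by absorbing the logarithmic term when $C_{T,\delta}\ge\ln(2T)$ and dropping it when $C_{T,\delta}=0$.

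The hard part is really just bookkeeping: one has to pick the bucket width, the tolerance $\tau_k$, and the target witness length $d+1$ jointly so that (i) the two triangle-inequality hops through $z^*$ still leave pairs within $\tau_k$ of $c$, (ii) the bucket's $c$ satisfies $c\le\alpha-\epsilon_k$, and (iii) the extracted witness certifies the dimension at some scale $\epsilon_k\ge\epsilon$ --- essential for invoking the monotonic bound. The geometric level decomposition is the key trick that avoids the extra $1/\epsilon$ factor one would pick up by bucketing on a single scale across the full $[-1,\alpha-\epsilon)$ range.
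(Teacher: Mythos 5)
Your proposal is correct and follows essentially the same strategy as the paper: stratify the suboptimality interval geometrically, place each stratum's queries in a graph, and pit the decaying-estimation-error bound against a clique/independence bound coming from the \newdim dimension. The two cosmetic differences are (i) you use a dyadic stratification followed by a further $\Theta(\sqrt{d})$-way bucketing, where the paper collapses this into a single geometric decomposition with ratio $q=1+1/\sqrt d$ (yielding $\sim\sqrt d\ln(1/\epsilon)$ intervals directly), and (ii) you put edges on \emph{bad} pairs, bound degrees, and invoke Caro--Wei to extract an independent set that becomes a witness sequence, whereas the paper puts edges on \emph{good} pairs, bounds the clique number by $d$, and invokes Tur\'an to upper-bound edges --- the two are literally complementary-graph views of the same argument. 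Your route does establish the claimed $O\bigl(d^{1.5}\ln(1/\epsilon)+d^{2.5}C_{T,\delta}/\epsilon^2\bigr)$ form, but a crude accounting of your constants (about $6d^{1.5}\ln(4/\epsilon)$ from the first term and roughly $90d^{2.5}C_{T,\delta}/\epsilon^2$ from the second) overshoots the stated $2d^{1.5}\ln(4/\epsilon)+12d^{2.5}C_{T,\delta}/\epsilon^2$; to match those you would want the paper's tighter single-scale $q$-decomposition rather than the looser dyadic-then-bucket split.
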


The above result is the core component of our main theorems. The proof is given in Appendix~\ref{appendix:few_bad_queries_pf}; here we sketch the main ideas. The goal is to show that the  ``bad''
interval $[-1, \alpha - \epsilon]$ cannot contain too many queries made by $\Alg$. The proof starts by partitioning this interval
into disjoint subintervals and then bounds the number of queries in each subinterval. It does so by constructing a graph with nodes corresponding to queries, which are connected by an edge if they satisfy the dimension conditions. The decaying errors that imply a certain minimum number of edges (as a function of number of queries). On the other hand, the dissimilarity dimension bounds the size of the largest clique, which implies an upper bound on the number of edges (using Tur\'an’s Theorem~\citep{turaan1941extremal}, a standard result from extremal graph theory). Combining the bounds yields an upper bound on the number of queries in the subinterval. Summing across subintervals proves the lemma.\looseness=-1

The following theorems use Lemma \ref{lem:few_bad_queries} to bound both the regret and PAC sample complexity. The proofs are deferred to Appendices~\ref{appendix:pf_main_regret}
and~\ref{appendix:pf_main_pac}.

\begin{theorem}[Regret]\label{thm:regret_optimism}
  Let $\delta, T > 0$,
  and let $d= \dmon_\score(\Z, \alpha, 1/T)$
  for some set $\Z$, evaluation function~$\score$ and $\alpha\le\alpha^*$.
Let $\Alg$ be an interactive estimation algorithm with $\alpha$-large self-evaluations and a decaying
estimation error with some $C_{T,\delta}$. If $C_{T,\delta}\ge\ln(2T)$ then with probability at least $1-\delta$, the regret of $\Alg$ satisfies
\[
\mathrm{Regret}(T,\alpha)\le 1+ 12d^{1.25}\sqrt{C_{T,\delta} T}.
\]
In the deterministic setting,
$
\mathrm{Regret}(T,\alpha)\le 1+12d^{1.5}
$.
\end{theorem}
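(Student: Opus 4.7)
The plan is to convert the bound on the number of $\epsilon$-bad queries from Lemma~\ref{lem:few_bad_queries} into a bound on the total regret via a sorted-suboptimality argument. Let $\Delta_t = \alpha - \score(z_t\given z^*)$ denote the instantaneous suboptimality at step~$t$, and let $\Delta_{(1)}\ge\Delta_{(2)}\ge\dotsb\ge\Delta_{(T)}$ be the values $\Delta_t$ sorted in decreasing order. Then $\mathrm{Regret}(T,\alpha)=\sum_{t=1}^T\Delta_{(t)}$, so the task is to bound each order statistic $\Delta_{(t)}$. The key observation is that $\Delta_{(t)}>\epsilon$ means there are at least $t$ queries that are $\epsilon$-bad, which by the lemma forces $t$ to be small.

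First I would condition on the high-probability event (probability at least $1-\delta$) on which the decaying-estimation-error guarantee, and hence the conclusion of Lemma~\ref{lem:few_bad_queries}, holds. The crucial technical point is that the lemma is stated with the \emph{monotonic} \newdim dimension $\dmon_\score(\Z,\alpha,\epsilon)$: since this quantity is non-increasing in~$\epsilon$, the hypothesis $d=\dmon_\score(\Z,\alpha,1/T)$ of the theorem simultaneously upper-bounds $\dmon_\score(\Z,\alpha,\epsilon)$ for every $\epsilon\ge 1/T$. Thus, once and for all, the number of $\epsilon$-bad queries is at most $36d^{2.5}C_{T,\delta}/\epsilon^2$ for every $\epsilon\ge 1/T$ in the stochastic case, and at most $2d^{1.5}\ln(4/\epsilon)$ in the deterministic case.

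Next I would apply these bounds with $\epsilon=\Delta_{(t)}$, distinguishing two cases: either $\Delta_{(t)}\le 1/T$, or $\Delta_{(t)}>1/T$ and the lemma's bound applies. In the stochastic case this yields $t\le 36d^{2.5}C_{T,\delta}/\Delta_{(t)}^{\,2}$, i.e., $\Delta_{(t)}\le 6d^{1.25}\sqrt{C_{T,\delta}/t}$, so uniformly
\[
\Delta_{(t)}\;\le\;\tfrac{1}{T}\,+\,6d^{1.25}\sqrt{C_{T,\delta}/t}.
\]
Summing and using $\sum_{t=1}^T t^{-1/2}\le 2\sqrt{T}$ then yields $\mathrm{Regret}(T,\alpha)\le 1+12d^{1.25}\sqrt{C_{T,\delta}T}$, matching the stated bound. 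In the deterministic case one instead gets $\Delta_{(t)}\le 4e^{-t/(2d^{1.5})}$ whenever $\Delta_{(t)}>1/T$, and the resulting geometric series $\sum_{t\ge 1}4e^{-t/(2d^{1.5})}=4/(e^{1/(2d^{1.5})}-1)\le 8d^{1.5}$ (using $e^x-1\ge x$) delivers $\mathrm{Regret}(T,\alpha)\le 1+8d^{1.5}\le 1+12d^{1.5}$.

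I expect the main obstacles to be bookkeeping rather than conceptual. One subtlety is that $\Delta_t$ can be negative when $\alpha<\alpha^*$, which only helps us: sorting is still well defined, and any such term decreases the regret, so the upper bound on $\sum_t\Delta_{(t)}$ remains valid. A second subtle point is to justify using the single high-probability event to apply the bound on $N(\epsilon)$ for all $\epsilon\ge 1/T$ simultaneously: this is fine because conditional on that event the sequence of queries is fixed, and Lemma~\ref{lem:few_bad_queries} is a deterministic statement about that sequence for any chosen $\epsilon$. Finally, the $1/T$ truncation is precisely what makes the choice $d=\dmon_\score(\Z,\alpha,1/T)$ sharp and allows the monotonicity of $\dmon$ to discharge the dimension dependence cleanly at all relevant scales.
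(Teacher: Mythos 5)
Your proof is correct and follows essentially the same route as the paper: sort the per-step suboptimality gaps, use Lemma~\ref{lem:few_bad_queries} together with the monotonicity of $\dmon_\score$ at all scales $\epsilon\ge1/T$ to bound the $t$-th order statistic, and sum (using $\sum_t t^{-1/2}\le 2\sqrt T$ in the stochastic case and a geometric series in the deterministic case). The paper simply factors the sorting-and-summing step out into two standalone helper lemmas (Lemmas~\ref{lemma::helper_regret_result} and~\ref{lemma::helper_regret_result_log}, with truncation parameter $\tau=1/T$), but the underlying computation is the one you carry out inline.
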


For an algorithm with a decaying estimation error, the term  $C_{T,\delta}$ is sublinear in $T$, implying a sublinear regret in Theorem~\ref{thm:regret_optimism}. For example,  Algorithm~\ref{alg:MAIN1} has a decaying estimation error with $C_{T,\delta}$ that scales logarithmically with
$T/\delta$ for many standard function classes, and so the overall regret scales as $O(\sqrt{T\log T})$
(see Corollary~\ref{cor:alg:MAIN1} in Appendix~\ref{section::least_squares}).\looseness=-1

\begin{algorithm}[t]
    \caption{PAC Interactive Estimation}\label{alg:MAIN12}
    \begin{algorithmic}[1]
        \STATE
        \textbf{Input:}
          {set of alternatives} $\Z$,
          {evaluation} function $\rho$,
          optimality level $\alpha$,\\
        \hphantom{\textbf{Input:}}
          base interactive estimation algorithm $\Alg$, parameters $T$, $n_1$, $n_2$.
\smallskip
        \STATE Run algorithm $\Alg$ with the provided $\Z, \rho, \alpha, T$. \label{step:run_base}
        \STATE Sample $n_1$ indices $t_1,\dotsc,t_{n_1}$ uniformly at random from $\{1,\dots,T\}$.
        \STATE For each $\ell=1,\dotsc,n_1$, submit query $z_{t_{\ell}}$ for $n_2$ times; denote the average response $\bar{\reward}_{t_{\ell}}$.
        \STATE Let $\smash{\hat{\ell}}=\argmax_{\ell\in\{1,\dotsc,n_1\}} \bar{\reward}_{t_{\ell}}$.
        \STATE
        \textbf{Output} $\zhat = z_{t_{\hat{\ell}}}$.
    \end{algorithmic}
\end{algorithm}

To derive PAC generalization guarantees, we apply a variant of online-to-batch reduction to any algorithm with large self-evaluations and a decaying estimation error. The resulting approach, shown in Algorithm~\ref{alg:MAIN12}, satisfies the following guarantee (proved in Appendix~\ref{appendix:pf_main_pac}):

\begin{theorem}[PAC generalization]\label{thm:main_pac}
  Let $\epsilon,\delta>0$, and let $d= \dmon_\score(\Z, \alpha, \epsilon)$
  for some set $\Z$, evaluation function $\score$ and $\alpha\le\alpha^*$.
Let $\Alg$ be an interactive estimation algorithm with $\alpha$-large self-evaluations and a decaying
estimation error with $C_{T,\delta}\ge\ln(2T)$,
and suppose that we run
Algorithm~\ref{alg:MAIN12} with
  $\Alg$ as the base algorithm,
  $T\ge64d^{2.5}(C_{T,\delta/2})/\epsilon^2$,
  $n_1=\lceil\log_2(4/\delta)\rceil$,
  and
  $n_2=\lceil 128\ln(8n_1/\delta)/\epsilon^2\rceil$.
Then, with probability at least $1-\delta$, the output $\zhat \in \Z$ satisfies
\[
  \score(\zhat\given z^*) \ge \alpha - \epsilon,
\]
and the overall number of issued queries is
${O}\bigParens{\smash{\frac{d^{2.5}(C_{T,\delta/2}) +\ln^2(1/\delta)}{\epsilon^2}}}$.

In the deterministic setting,
it suffices to run $\Alg$
with $T>2d^{1.5}\ln(4/\epsilon)$ and return $\hat{z} = z_{\hat{t}}$ where $\hat{t} = \argmax_{t\in\{1,\dotsc,T\}} r_t$ is the index of the largest observed reward.
Then, with probability~1, we obtain $\score(\zhat\given z^*) \ge \alpha - \epsilon$ and issue at most $O(d^{1.5}\ln(4/\epsilon))$ queries.
\end{theorem}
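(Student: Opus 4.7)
The plan is to decompose the proof into three stages matching the structure of Algorithm~\ref{alg:MAIN12}: (i) use Lemma~\ref{lem:few_bad_queries} to guarantee that a constant fraction of the $T$ queries produced by $\Alg$ are $\epsilon$-good; (ii) show that uniform sub-sampling of $n_1$ indices captures at least one such index with high probability; and (iii) show that averaging $n_2$ fresh rewards yields estimates accurate enough that the argmax step selects a good query.

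For stage (i), I would apply Lemma~\ref{lem:few_bad_queries} to $\Alg$ with failure parameter $\delta/2$; since $T\ge 64d^{2.5}C_{T,\delta/2}/\epsilon^2$ and $C_{T,\delta/2}\ge\ln(2T)$, the lemma yields, with probability $\ge 1-\delta/2$, that at most $36d^{2.5}C_{T,\delta/2}/\epsilon^2 \le 9T/16$ of the queries $z_1,\dotsc,z_T$ are $\epsilon$-bad, leaving a constant fraction of indices in $\{1,\dotsc,T\}$ that are $\epsilon$-good. For stage (ii), conditional on this event, each uniformly drawn $t_\ell$ misses the good set with probability $\le 9/16$, so the probability that \emph{every} one of $t_1,\dotsc,t_{n_1}$ is $\epsilon$-bad is at most $(9/16)^{n_1}$; the choice $n_1=\lceil\log_2(4/\delta)\rceil$ drives this below $\delta/4$ (modulo a small constant discussed below), hence some $\ell^\star$ satisfies $\score(z_{t_{\ell^\star}}\given z^*)\ge \alpha-\epsilon$.

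For stage (iii), each $\bar{\reward}_{t_\ell}$ is a mean of $n_2$ i.i.d.\ rewards in $[-1,1]$ with expectation $\score(z_{t_\ell}\given z^*)$, so Hoeffding together with a union bound over the $n_1$ indices gives, for $n_2=\lceil 128\ln(8n_1/\delta)/\epsilon^2\rceil$, that $\bigAbs{\bar{\reward}_{t_\ell}-\score(z_{t_\ell}\given z^*)}\le \epsilon/8$ for every $\ell$ simultaneously with probability $\ge 1-\delta/4$. Since $\bar{\reward}_{\hat\ell}\ge \bar{\reward}_{\ell^\star}$ by definition of the argmax, chaining the two Hoeffding gaps yields $\score(\zhat\given z^*)\ge \alpha-\epsilon$. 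A union bound over the three failure events gives success probability $\ge 1-\delta$, and the total query count $T+n_1 n_2 = O\bigParens{(d^{2.5}C_{T,\delta/2}+\ln^2(1/\delta))/\epsilon^2}$.

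The deterministic case is an immediate corollary of the $C=0$ branch of Lemma~\ref{lem:few_bad_queries}: the number of $\epsilon$-bad queries is at most $2d^{1.5}\ln(4/\epsilon)$, so any $T>2d^{1.5}\ln(4/\epsilon)$ forces at least one $z_t$ to be $\epsilon$-good, and since $\reward_t=\score(z_t\given z^*)$ exactly, the choice $\hat t=\argmax_t \reward_t$ directly satisfies $\score(\zhat\given z^*)\ge \alpha-\epsilon$ with probability one using $O(d^{1.5}\ln(4/\epsilon))$ queries. The main obstacle is bookkeeping rather than conceptual: one must split the $\delta$ budget as $\delta/2+\delta/4+\delta/4$ and the $\epsilon$ slack between the Stage~(ii) guarantee and the Hoeffding gap so that the stated constants $64$, $128$, and $\lceil\log_2(4/\delta)\rceil$ deliver the bounds tightly. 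In particular, Stage~(ii) requires the good-fraction to be at least $1/2$, which is just barely consistent with the ratio $36/64$ from the lemma and can be addressed either by slightly tightening the constant in Lemma~\ref{lem:few_bad_queries} or by inflating $n_1$ to absorb the gap between $(9/16)^{n_1}$ and $(1/2)^{n_1}$.
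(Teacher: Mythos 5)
Your overall structure (bound bad queries by Lemma~\ref{lem:few_bad_queries}, sub-sample, Hoeffding-estimate, take argmax) matches the paper's. But there is a real flaw in your Stage~(iii), not merely a constant to be tuned.

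You run Stages~(i)--(iii) entirely at the $\epsilon$-bad level, and in Stage~(iii) you write that since $\bar{\reward}_{\hat\ell}\ge\bar{\reward}_{\ell^\star}$, ``chaining the two Hoeffding gaps yields $\score(\zhat\given z^*)\ge\alpha-\epsilon$.'' Trace the chain: $\score(z_{t_{\hat\ell}}\given z^*)\ge\bar{\reward}_{\hat\ell}-\epsilon/8\ge\bar{\reward}_{\ell^\star}-\epsilon/8\ge\score(z_{t_{\ell^\star}}\given z^*)-\epsilon/4\ge\alpha-\epsilon-\epsilon/4$. You only get $\alpha-5\epsilon/4$, not $\alpha-\epsilon$; the two $\epsilon/8$ Hoeffding errors eat into the target and there is no slack to absorb them. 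This is a conceptual gap, not bookkeeping: starting from ``there exists a sampled index with suboptimality $\le\epsilon$'' cannot alone guarantee the argmax query has suboptimality $\le\epsilon$ once you estimate $\score(\cdot\given z^*)$ from noisy samples.

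The paper's proof resolves this with a \emph{two-tier} threshold that your plan is missing. In Stages~(i) and~(ii) it works with $3\epsilon/4$-bad queries: Lemma~\ref{lem:few_bad_queries} (at scale $3\epsilon/4$) and the choice of $T$ ensure at least half the rounds are not $3\epsilon/4$-bad, so $n_1=\lceil\log_2(4/\delta)\rceil$ misses all of them with probability $\le(1/2)^{n_1}\le\delta/4$ (this also fixes your $(9/16)^{n_1}$ vs.\ $\log_2$ mismatch, which you correctly flagged). Then in Stage~(iii), the good sampled index $j$ has $\bar{\reward}_{t_j}\ge\alpha-3\epsilon/4-\epsilon/8=\alpha-7\epsilon/8$, while \emph{any} $\epsilon$-bad index $k$ has $\bar{\reward}_{t_k}<\alpha-\epsilon+\epsilon/8=\alpha-7\epsilon/8$. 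Hence the argmax is provably not $\epsilon$-bad, which is exactly the stated conclusion $\score(\zhat\given z^*)\ge\alpha-\epsilon$. The $\epsilon/4$ gap between the two thresholds is precisely what absorbs the two $\epsilon/8$ Hoeffding slacks. Your deterministic-case argument and the final query-count accounting $T+n_1n_2$ are fine; to make the stochastic case work you must separate the ``find a good index'' threshold ($3\epsilon/4$) from the ``output is $\epsilon$-good'' threshold.
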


\textbf{Unknown $\alpha^*$ and optimism.\;\;} Algorithms \ref{alg:MAIN1} and \ref{alg:MAIN12} achieve performance guarantees with respect to a provided optimality level $\alpha\le\alpha^*$. When it is not easy to provide a non-trivial $\alpha$ (for example, when $\alpha^*$ is unknown and cannot be non-trivially bounded), Algorithm~\ref{alg:OPTIMISM} uses the optimistic least squares algorithmic template (see, e.g., \citep{russo2013eluder}) to ensure $\alpha^*$-large self-evaluations with high probability and to achieve a sublinear $\regret(T,\alpha^*)$. Algorithm~\ref{alg:OPTIMISM} takes as input a confidence radius parameter~$R$ of the same order as the decaying estimation error parameter $\smash{C_{T,\delta}}$ for Algorithm~\ref{alg:MAIN1}. We can then show that $z^* \in\Z_t$ with high probability for all $t\in\{1,\dotsc,T\}$. Therefore, $z_t$ must satisfy $\score(z_t\given z_t) \geq \score(z^*\given z^*)=\alpha^*$. In %
Appendix~\ref{section::least_squares} we show this modified version of the algorithm satisfies the decaying estimation error property. This technique allows us to achieve a sublinear $\regret(T,\alpha^*)$ without knowing $\alpha^*$ beforehand.
Similar to the case of fixed $\alpha$, it is possible to derive a version of Algorithm~\ref{alg:OPTIMISM} that leverages an online regression oracle. (See Appendix~\ref{appendix::regression_oracle} for details.)\looseness=-1

\begin{algorithm}[t]
    \caption{Optimistic Interactive Estimation via Least Squares}\label{alg:OPTIMISM}
    \begin{algorithmic}[1]
        \STATE \textbf{Input:} {set of alternatives} $\Z$, evaluation function $\rho$, number of steps $T$, confidence-set radius $R$.
        \FOR{$t=1,\dotsc,T$}
     \STATE Compute confidence set
     \begin{align*}
        \zhat_t&=\argmin_{z\in \Z}\smash[t]{\sum_{i=1}^{t-1} \BigParens{\score(z_i\given z) - \reward_i}^2}, \label{eq:zhat_optimism}
    \\
       \Z_t &= \biggBraces{z\in\Z:\:\smash[b]{\sum_{i=1}^{t-1}} \BigParens{\score(z_i\given z) - \score(z_i\given\hat{z}_t )}^2 \leq R}.
\end{align*}
        \STATE Submit the query
               $
               z_t = \smash{\argmax_{z\in\Z_t } \score(z\given z)}$.
        \STATE Observe reward $\reward_t$.
        \ENDFOR
    \end{algorithmic}
\end{algorithm}

\section{Statistical queries}\label{sec:sq}

In this section we consider the statistical query (SQ) model, as defined in Example \ref{ex:SQ}.
In particular, we study the connection between our generalized framework and SQ learning, showing specifically that the \newdim dimension can be used to recover generalization bounds based on a known combinatorial parameter that characterizes SQ learning, called the \emph{strong SQ dimension}.
There are several notions of such a dimension
\citep{feldman2012complete, szorenyi2009characterizing}.
Here we focus on the one due to \citet{szorenyi2009characterizing}:

\begin{definition}[Strong SQ dimension, \citep{szorenyi2009characterizing}]
\label{def:ssqd}
For a fixed distribution $D$ over $\X$, the \emph{strong SQ dimension} of a {hypothesis} class $\H \subseteq \{\pm1\}^\X$ with respect to some $\epsilon >0$, denoted $\dimSQ(\H,\epsilon)$, is the largest number $d$ for which there exist $h_1,\dotsc, h_d \in \H$
such that:
\begin{enumerate}[noitemsep, label={\upshape(\alph*)}]
    \item $| \langle h_i, h_j \rangle|\le 1 - \epsilon $ for all $1 \le i< j \le d$, and
    \item $|  \langle h_i, h_{j} \rangle-  \langle h_{i'}, h_{j'} \rangle| \le \frac{1}{d}$ for all $1 \le i< j \le d$, $1 \le i'< j' \le d$,
\end{enumerate}
where $ \langle h, h' \rangle := \E_{x \sim D} [h(x)h'(x)]$.
\end{definition}

The \newdim and strong SQ dimensions are closely related to one another in the sense of each providing a kind of polynomial bound on the other, as stated in the next proposition
(see Appendix~\ref{sec:proof:lem:ssqd_to_newdim} for the proof).

\begin{proposition}\label{lem:ssqd_to_newdim}
  Let $D$ be a fixed distribution over $\X$,
  and let $\H \subseteq \{\pm1\}^\X$ be a hypotheses class.
  For $\epsilon >0$,
  let $\dsq(\epsilon)=\dimSQ(\H,\epsilon)$,
  and let $\drho(\epsilon) =\d_{\scoreSQ}(\H, 1, \epsilon)$.

  If $\drho(\epsilon)\geq 2$ then
    \begin{equation} \label{eq:lem:ssqd_to_newdim:1}
      \min\BigBraces{ \dsq(\epsilon),\,
                       \floor{4\epsilon^2\,(\dsq(\epsilon))^2} }
      \leq
      \drho(\epsilon)
      \leq
      \max\BigBraces{ \dsq(\epsilon/4),\,
                       4\epsilon^2\, (\dsq(\epsilon/4)+1)^2
                     }.
    \end{equation}
    Similarly, if $\drho(4\epsilon)\geq 2$ then
    \begin{equation} \label{eq:lem:ssqd_to_newdim:2}
      \min\bracesauto{ \drho(4\epsilon),\,
                       \floor{\frac{\sqrt{\drho(4\epsilon)}}
                                    {8\epsilon}
                             }
                     }
      \leq
      \dsq(\epsilon)
      \leq
      \max\bracesauto{ \drho(\epsilon),\,
                       \frac{\sqrt{\drho(\epsilon)+1}}
                                    {2\epsilon}
                     }.
    \end{equation}
\end{proposition}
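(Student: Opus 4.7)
The plan is to prove each of the four inequalities by a direct construction: from a witness for one dimension I extract a prefix of an appropriate size and, when going from the SQ side to the dissimilarity side, pick a suitable scalar~$c$.

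For the \textbf{lower bound on $\drho(\epsilon)$} in~\eqref{eq:lem:ssqd_to_newdim:1}, I would fix a strong SQ witness $h_1,\dotsc,h_d$ of size $d=\dsq(\epsilon)$, take its first $k=\min\bigBraces{d,\,\lfloor 4\epsilon^2 d^2\rfloor}$ elements, and let $c$ be the midpoint of the interval spanned by $\{\langle h_i,h_j\rangle:1\le i<j\le k\}$. Condition~(a) of the strong SQ definition yields $c\le 1-\epsilon$, and condition~(b) yields $|\langle h_i,h_j\rangle - c|\le 1/(2d)$, which is $\le \epsilon/\sqrt{k}$ by the choice of $k$. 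Combined with the trivial identity $\langle h_i,h_i\rangle = 1$, these hypotheses witness $\drho(\epsilon)\ge k$.

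For the \textbf{upper bound on $\drho(\epsilon)$} in~\eqref{eq:lem:ssqd_to_newdim:1}, I would fix a dissimilarity witness $h_1,\dotsc,h_d$ with scalar $c\le 1-\epsilon$ for $d=\drho(\epsilon)$, set $k=\min\bigBraces{d,\,\lfloor\sqrt{d}/(2\epsilon)\rfloor}$, and argue that $h_1,\dotsc,h_k$ witness $\dsq(\epsilon/4)\ge k$, from which the stated bound on $d$ follows by solving for $d$. Condition~(b) is immediate since any two of the inner products lie within $2\epsilon/\sqrt{d}\le 1/k$ of each other. The upper half of~(a), $\langle h_i,h_j\rangle\le 1-\epsilon/4$, follows from $c\le 1-\epsilon$ together with $d\ge 2$.

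The \textbf{main obstacle} will be the lower half of~(a), namely $\langle h_i,h_j\rangle\ge -(1-\epsilon/4)$, which can fail when $c$ is close to $-1$. My plan in that case is to dispense with the strong SQ construction and bound $d$ directly using the non-negativity $0\le\bigl\|\sum_{i\le d}h_i\bigr\|_2^2 = d+2\sum_{i<j}\langle h_i,h_j\rangle$. Combined with the upper bound $\langle h_i,h_j\rangle\le c+\epsilon/\sqrt{d}$ and the case-defining inequality $c<-1+\epsilon/4+\epsilon/\sqrt{d}$, this yields $(d-1)\bigParens{1-\epsilon/4-2\epsilon/\sqrt{d}}\le 1$, forcing $d$ to be small enough that the bound $4\epsilon^2(\dsq(\epsilon/4)+1)^2$ still accommodates it.

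The inequalities in~\eqref{eq:lem:ssqd_to_newdim:2} will then follow from the same two constructions applied with rescaled parameters: the lower bound on $\dsq(\epsilon)$ reuses the upper-bound-on-$\drho$ argument applied to a dissimilarity witness at level $4\epsilon$ (so that the strong SQ target parameter becomes $4\epsilon/4=\epsilon$ and the $2\epsilon/\sqrt{d}$ in condition~(b) becomes $8\epsilon/\sqrt{d}$, accounting for the $1/(8\epsilon)$ in the statement), while the upper bound on $\dsq(\epsilon)$ is obtained by rearranging the inequality $k\le 4\epsilon^2 d^2$ in the lower-bound-on-$\drho$ construction to solve for $d=\dsq(\epsilon)$ in terms of $k=\drho(\epsilon)$.
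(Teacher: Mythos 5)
Your construction for the lower bound on $\drho(\epsilon)$ is essentially identical to the paper's: take a strong-SQ witness, use the midpoint $c$ of the pairwise inner products, and extract a prefix of size $\min\{\dsq,\lfloor 4\epsilon^2\dsq^2\rfloor\}$. The paper then proves only the first inequality of Eq.~\eqref{eq:lem:ssqd_to_newdim:2} directly (by the same prefix-extraction from a dissimilarity witness at level $4\epsilon$), and obtains both upper bounds by contraposition, whereas you plan to run the same core construction for three of the four inequalities and close the last one by ``rearranging''; structurally the two proofs are close.

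What you do differently---and correctly---is flag the ``main obstacle.'' Strong-SQ condition~(a) is two-sided, $|\langle h_i,h_j\rangle|\le 1-\epsilon$, whereas the dissimilarity witness with $c\le 1-\epsilon$ delivers only the upper half $\langle h_i,h_j\rangle\le c+\epsilon/\sqrt{d}$. The paper's proof of the first inequality of Eq.~\eqref{eq:lem:ssqd_to_newdim:2} verifies only this upper bound and never addresses the lower half $\langle h_i,h_j\rangle\ge -(1-\epsilon)$, which can fail when $c$ is near $-1$. You spotted a genuine gap in the argument.

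Unfortunately your PSD patch does not close it, and the issue is not merely cosmetic. Take $\H=\{h,-h\}$, so the only off-diagonal inner product is $\langle h,-h\rangle=-1$. For small $\epsilon'>0$ one has $\drho(\epsilon')=2$ (take $c=-1$; a third element would force a repeat, which fails because the self-inner-product $1$ is far from $c$), while $\dimSQ(\H,\epsilon')=1$ since $|\langle h,-h\rangle|=1>1-\epsilon'$. Then for $\epsilon$ small enough that $\lfloor\sqrt{2}/(8\epsilon)\rfloor\ge 2$ the left side of the first inequality of Eq.~\eqref{eq:lem:ssqd_to_newdim:2} is $2$ but $\dsq(\epsilon)=1$; likewise for $16\epsilon^2<2$ the second inequality of Eq.~\eqref{eq:lem:ssqd_to_newdim:1} reads $2\le\max\{1,16\epsilon^2\}<2$. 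Your inequality $0\le d+2\sum_{i<j}\langle h_i,h_j\rangle$ is tight (equals $0$) at $d=2$ and gives no information here, so it cannot rescue this case, and the claimed statement appears to fail on this (admittedly degenerate) class. The fix would need either an extra hypothesis ruling out near-antipodal pairs, a slightly different definition of the dissimilarity dimension that constrains $c$ from below, or a strengthening of the assumption from $\drho\ge 2$ to something larger---your PSD estimate does become useful once $d\ge 3$, since then it forces $c\ge -1/(d-1)-\epsilon/\sqrt{d}$, and that bound combined with a short case analysis on the size of $\epsilon$ would complete the argument in that regime.
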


We next give a lower bound based on the strong SQ dimension, which together with Proposition~\ref{lem:ssqd_to_newdim} will allow us to lower bound sample complexity of any interactive estimation algorithm in the SQ setting in terms of the \newdim dimension.

\begin{theorem}[SQ lower bound]\label{thm:lb}
Let $\epsilon > 0$, and let $\H \subseteq \{\pm 1\}^\X$ be a hypothesis class with strong SQ dimension $\dsq=\dimSQ(\H, 2\epsilon) \ge 11$.
Let $\Alg$ be any interactive estimation algorithm with the property that for any
target $h^* \in \H$, $\Alg$ outputs an $\epsilon$-approximation to $h^*$ with probability at least $2/3$ using at most $m$ queries. Then $m > \sqrt[3]{\dsq}/12$.
\end{theorem}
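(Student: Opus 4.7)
The plan is to prove the lower bound by reducing the task to identifying one of a collection of carefully chosen witness hypotheses, and then applying a hypothesis-testing argument that exploits the near-isotropic structure enforced by the strong SQ definition.

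First, I would invoke Definition~\ref{def:ssqd} to extract $d := \dsq$ witnesses $h_1, \dotsc, h_d \in \H$ satisfying (a) $|\langle h_i, h_j\rangle| \le 1 - 2\epsilon$ for $i \ne j$, and (b) all distinct pairwise inner products within $1/d$ of a common value $c$. Using the identity $\langle h, h'\rangle = 1 - 2\Pr_{x\sim D}[h(x) \ne h'(x)]$ for $h, h' \in \{\pm 1\}^\X$, any $\epsilon$-approximation $\hat h$ (i.e.\ $\langle \hat h, h^*\rangle \ge 1 - \epsilon$) can be $\epsilon$-close to at most one $h_i$: were it close to both $h_i$ and $h_j$, a union bound would force $\langle h_i, h_j\rangle \ge 1 - 2\epsilon$, contradicting (a) up to a measure-zero borderline that can be handled by slightly tightening the tolerance. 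So the output effectively picks a unique index in $\{1,\dotsc,d\}$, and the task reduces to identifying $h^*$ among the $d$ witnesses.

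Second, I would place a uniform prior on $h^* \in \{h_1,\dotsc, h_d\}$ so that, by Yao's minimax, the Bayesian success probability must exceed $2/3$ if the algorithm succeeds on every target. The key structural fact is a variance bound: property (b) implies that the Gram matrix is close to $(1-c) I + c\mathbf{1}\mathbf{1}^T$, so the centered functions $\tilde h_i := h_i - \bar h$ (with $\bar h = \frac{1}{d}\sum_i h_i$) are nearly orthogonal with squared norm $\approx 1 - c$. Bessel's inequality then yields, for any query $h \in \H$,
\[
\mathrm{Var}_i\bigParens{\langle h, h_i\rangle} \le \frac{1-c}{d} + O(1/d^2),
\]
so no single query can separate the targets by much on average.

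Third, combining this variance bound with a Chebyshev-type tail estimate, at most $O(1/\tau^2)$ witnesses have mean response deviating by more than $\tau$ from the bulk on any given query. A hypothesis-testing argument (Le Cam's two-point method or Fano) converts this into a per-target signal requirement: identifying $h_i$ with probability $\ge 2/3$ requires that its accumulated signal $\sum_t (\mu_{t,i} - \bar\mu_t)^2$ exceed a constant, while the total signal summed over witnesses is bounded by $\sum_{i,t}(\mu_{t,i} - \bar\mu_t)^2 = O(m)$ by the variance bound. Balancing the threshold $\tau$ against the per-witness budget and totaling across queries yields the polynomial lower bound $m > \sqrt[3]{\dsq}/12$ after careful tracking of constants.

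The main obstacle lies in this final balancing step: choosing $\tau$ carefully, handling the stochastic (rather than adversarial) response model, and bookkeeping constants tightly enough to land precisely at the stated cubic-root form with the constant $1/12$ under the mild assumption $\dsq \ge 11$.
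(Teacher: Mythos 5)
Your route is fundamentally different from the paper's and, as sketched, has a real gap. The paper's proof of Theorem~\ref{thm:lb} is a two-step reduction: (i) invoke Feldman's simulation result (Theorem~\ref{th:unbiased-from-vstat}) to convert any interactive-estimation algorithm making $m$ queries to a \emph{stochastic} sample oracle into an algorithm making $m$ queries to an \emph{adversarial} SQ oracle $\mathcal{O}^{adv}(\tau)$ with tolerance $\tau = (\delta')^2/m = 1/(9m)$, losing only $\delta'$ in success probability; then (ii) apply Sz\"or\'enyi's adversarial-oracle lower bound (Theorem~\ref{thm:szorenyi_lb}) to conclude $m > \dsq\tau^2/3$. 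Substituting $\tau \approx 1/(9m)$ gives $m^3 \gtrsim \dsq$, which is exactly where the cube-root form and the constant $1/12$ come from; the assumption $\dsq\ge 11$ is what makes the tolerance constraint $\tau\ge 2/\sqrt{\dsq}$ hold. Your proposal instead tries to prove the bound from scratch via a Bayesian hypothesis-testing argument against a uniform prior on the witness set.

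The crucial gap is in your third step. You flag but do not resolve the adaptation from adversarial to \emph{stochastic} oracle responses, and this is precisely what the paper sidesteps by black-boxing Feldman's reduction. Your Bessel-type variance bound and Chebyshev tail estimate are the right ingredients for the adversarial SQ lower bound (they are essentially what underlies Sz\"or\'enyi's Theorem~8), but the passage to a per-target ``accumulated signal'' requirement under stochastic responses needs a careful Fano/KL analysis that you do not supply. Worse, the bound that naturally falls out of your own accounting contradicts the target: total signal over all queries and witnesses is $O(m)$, each of $d$ witnesses needs $\Omega(1)$ signal to be identified with constant probability, so you would conclude $m = \Omega(d)$ --- linear in $\dsq$, not a cube root. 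There is no step in your sketch that would deliberately degrade such a bound to $\sqrt[3]{\dsq}/12$. The cube root in the theorem is an artifact of the specific loss in the Feldman reduction (namely $\tau\propto 1/m$ interacting with Sz\"or\'enyi's $m\gtrsim\dsq\tau^2$), not an intrinsic feature of the problem; matching it with a direct argument would require either discovering that the paper's bound is loose or discarding information in a way your proposal does not describe. Either way, as it stands the argument neither compiles to the stated bound nor correctly handles the stochastic oracle.
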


The proof relies on a reduction to a lower bound of \citet{szorenyi2009characterizing}. However,  the lower bound of \citet{szorenyi2009characterizing} holds within an SQ model that differs from ours, in that it allows adversarial query responses. Therefore, we first need to show how to obtain a learning algorithm $\Alg'$ that can be used with an adversarial oracle from an interactive estimation algorithm $\Alg$ that uses an unbiased stochastic query oracle (as we assume in this work). To do this, we apply the reduction technique developed by \citet{feldman2017statistical}.
(See Appendix~\ref{appendix:sq_lb} for the full proof and additional details.)

Combining Theorem~\ref{thm:lb} and Proposition \ref{lem:ssqd_to_newdim} yields a lower bound on the sample complexity of interactive estimation in the SQ setting, for a sufficiently small $\epsilon$,
in terms of the \newdim dimension:

\begin{corollary}\label{cor:sq_dissim_lb}
Let $\epsilon>0$, and let $\H \subseteq \{\pm 1\}^\X$ be a hypothesis class with strong SQ dimension $\dimSQ(\H, 2\epsilon) \ge 11$.
Let  $\drho(\epsilon) = \d_{\scoreSQ}(\H,1,\epsilon)$.
Assume
$\epsilon \le 1/\bigParens{2\sqrt{\drho(\epsilon)}}$.
Let $\Alg$ be any interactive estimation algorithm with the property that for any
target $h^* \in \H$, $\Alg$ outputs an $\epsilon$-approximation to $h^*$ with probability at least $2/3$ using at most $m$ queries.
Then $m > \sqrt[3]{\drho(\epsilon)}/12$.
\end{corollary}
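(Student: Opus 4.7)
The plan is to combine Theorem~\ref{thm:lb} with Proposition~\ref{lem:ssqd_to_newdim}. Set $d:=\drho(\epsilon)$. Since $\dimSQ(\H,2\epsilon)\ge 11$ by hypothesis, Theorem~\ref{thm:lb} applies and gives $m > \sqrt[3]{\dimSQ(\H,2\epsilon)}/12$, so the remaining task is to show $\dimSQ(\H,2\epsilon)\ge d$; the conclusion $m > \sqrt[3]{d}/12$ then follows.

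To establish $\dimSQ(\H,2\epsilon)\ge d$, I would begin with the witnesses guaranteed by $\drho(\epsilon)=d$: hypotheses $h_1,\dotsc,h_d\in\H$ and a scalar $c\le 1-\epsilon$ such that $|\langle h_i,h_j\rangle - c|\le \epsilon/\sqrt{d}$ for $i<j$. The corollary's scale hypothesis $\epsilon\le 1/(2\sqrt{d})$ immediately yields the clustering condition of the strong SQ dimension at level $1/d$, since $|\langle h_i,h_j\rangle - \langle h_{i'},h_{j'}\rangle|\le 2\epsilon/\sqrt{d}\le 1/d$ by the triangle inequality. For the orthogonality condition $|\langle h_i,h_j\rangle|\le 1-2\epsilon$, I would use $c\le 1-\epsilon$ on the upper side, together with the elementary lower bound $c\ge -1/(d-1) - \epsilon/\sqrt{d}$ derived from $\|\sum_i h_i\|_2^2\ge 0$ (i.e.\ $d + 2\sum_{i<j}\langle h_i,h_j\rangle \ge 0$), which keeps $|c|$ bounded away from $1$ provided $d$ is not too small --- a condition one can read off from $\dimSQ(\H,2\epsilon)\ge 11$ via Proposition~\ref{lem:ssqd_to_newdim}.

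The main obstacle is precisely the factor-$2$ in the orthogonality condition: the construction just sketched controls $|\langle h_i,h_j\rangle|$ only up to $1-\epsilon + \epsilon/\sqrt{d}$, which matches the strong SQ dimension at scale $\epsilon$ rather than $2\epsilon$. Consistent with this, Proposition~\ref{lem:ssqd_to_newdim}, Eq.~\eqref{eq:lem:ssqd_to_newdim:2}, applied with the proposition's parameter set to $\epsilon/4$, gives only $\dsq(\epsilon/4)\ge \min\bigBraces{\drho(\epsilon),\,\lfloor \sqrt{\drho(\epsilon)}/(2\epsilon)\rfloor} = d$ under $\epsilon\le 1/(2\sqrt{d})$, and the monotonicity of $\dsq$ in $\epsilon$ points the wrong way. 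My plan is to close this gap by sharpening Proposition~\ref{lem:ssqd_to_newdim}'s construction in the regime $\epsilon\le 1/(2\sqrt{d})$, either by a finer accounting in the existing proof or by a case analysis on the sign and magnitude of $c$, so that the $\drho$-witnesses directly certify $\dimSQ(\H,2\epsilon)\ge d$. Combining this with Theorem~\ref{thm:lb} then yields the stated bound $m > \sqrt[3]{d}/12$.
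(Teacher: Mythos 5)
Your overall route---apply Theorem~\ref{thm:lb} and reduce to showing $\dimSQ(\H,2\epsilon)\ge\drho(\epsilon)$---is the one the paper implicitly intends, and you have correctly located the obstacle; but you have not closed the gap, and the repair you sketch cannot close it. The binding constraint is the \emph{upper} side of the orthogonality condition, not the lower side. The $\drho(\epsilon)$-witnesses only give $\langle h_i,h_j\rangle\le c+\epsilon/\sqrt d\le 1-\epsilon+\epsilon/\sqrt d$, and this quantity exceeds $1-\epsilon$ (hence certainly exceeds $1-2\epsilon$) for every $d\ge 2$, no matter how small $\epsilon$ is. Because the slack $\epsilon/\sqrt d$ scales together with $\epsilon$, neither shrinking $\epsilon$ nor passing to a shorter subsequence helps: the construction at best certifies $\dimSQ\bigParens{\H,\;\epsilon(1-1/\sqrt d)}\ge d$, and since $\epsilon(1-1/\sqrt d)<\epsilon<2\epsilon$, monotonicity of $\dimSQ$ in its tolerance argument tells you nothing about $\dimSQ(\H,2\epsilon)$. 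Your PSD lower bound $c\ge -1/(d-1)-\epsilon/\sqrt d$ from $\norm{\sum_i h_i}^2\ge 0$ only controls the side that was never in danger; no ``finer accounting'' of $c$, nor any case split on its sign, can manufacture a $2\epsilon$ margin when the witnesses themselves may legitimately have pairwise inner product $1-\epsilon+\epsilon/(2\sqrt d)>1-2\epsilon$. So the proposal is incomplete and the sketched fix is a dead end.

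For context, the paper itself gives no explicit derivation here (it merely asserts the corollary follows by ``combining'' the theorem with the proposition), and Proposition~\ref{lem:ssqd_to_newdim}'s own reduction from $\drho$ to $\dimSQ$ inflates the tolerance by a factor of four: $\drho(4\epsilon)$-witnesses yield $\dimSQ(\cdot,\epsilon)$-witnesses. Applied with the parameter $\epsilon/4$, this produces a lower bound on $\dimSQ(\H,\epsilon/4)$, not on $\dimSQ(\H,2\epsilon)$, and $\dimSQ$ is non-increasing in its tolerance, so the implication runs the wrong way---exactly as you note. A correct derivation of a corollary of this shape must therefore either state the $\dimSQ$ hypothesis (equivalently, invoke Theorem~\ref{thm:lb}) at a smaller tolerance, or bypass $\dimSQ$ at scale $2\epsilon$ entirely; it does not follow from the two cited results at the stated scales by the witness-transfer argument you propose.
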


\section{Bandits} \label{sec:bandits}

In this section we focus on the bandits setting described in Example~\ref{ex:bandits}.
We study the relationship between the \newdim dimension and the \emph{eluder dimension}~\citep{russo2013eluder}, a common combinatorial dimension for bounding regret of bandit algorithms. We show that eluder dimension can be used to upper bound the \newdim dimension, and we also highlight the cases when \newdim dimension leads to a tighter analysis.

Throughout this section we follow the setup introduced in Example~\ref{ex:bandits}. We consider an action set~$\A$, a class $\F$ of reward functions $f:\A\to[-1,1]$, and a target reward function $f^*\in\F$. We map this to our setting by considering the set of alternatives $\Z=\F\times\A$, evaluation function $\scoreBandits\bigParens{(f,a)\bigGiven(f', a')}={f'(a)}$ and the target $(f^*,a^*)$,
where $a^*=\argmax_{a\in\A} f^*(a)$.

\subsection{Comparison with eluder dimension}

We start by describing the relationship between our dimension and the eluder dimension.
Following \citet{russo2013eluder}, we define $\epsilon$-dependence and $\epsilon$-eluder dimension as follows:

\begin{definition}[$\epsilon$-dependence]
An action $a \in \A$ is \emph{$\epsilon$-dependent} on actions $\{ a_1, \dotsc, a_n\} \subseteq \A$ with respect to $\F$ if any pair of functions $f, f' \in \F$ satisfying $\sqrt{ \sum_{i=1}^n (f(a_i) - f'(a_i))^2 } \leq \epsilon $ also satisfies $\abs{f(a)-f'(a)}\leq\epsilon$. Furthermore, an action $a$ is \emph{$\epsilon$-independent} of $\{a_1,\dotsc, a_n\}$ with respect to $\F$ if it is not $\epsilon$-dependent on $\{a_1, \dotsc, a_n\}$.
\end{definition}

\begin{definition}[$\epsilon$-eluder {dimension}]
The \emph{$\epsilon$-eluder} dimension $\dimE(\F, \epsilon)$ is the length $d$ of the longest sequence of elements in $\A$ such that every element is $\epsilon$-independent of its predecessors. Moreover, the \emph{monotone eluder dimension} is defined as
$\dimEmon(\F, \epsilon)\coloneqq\max_{\epsilon' \ge \epsilon} \dimE(\F, \epsilon')$.
\end{definition}

The next theorem shows that the \newdim dimension is upper bounded by the eluder dimension (see Appendix \ref{appendix::upper_eluder_proof} for a proof):
\begin{theorem} \label{theorem::upper_eluder}
Let $\Z = \F \times \A$, $\score=\scoreBandits$, $\epsilon>0$, $\alpha\le\alpha^*$. Then %
$
 \dmon_{\score}( \Z, \alpha, 3\epsilon/2)
 \le
 9\,\dimEmon(\F,\epsilon)
$.
\end{theorem}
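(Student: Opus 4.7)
The plan is to take a sequence witnessing the left-hand side and apply the standard eluder-dimension pigeonhole argument, with $f_d$ serving as a global reference function.

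First, I would set up notation. Let $d=\dmon_{\score}(\Z,\alpha,3\epsilon/2)$ and pick $\epsilon'\ge 3\epsilon/2$ attaining this maximum, so that there exist $(f_1,a_1),\dotsc,(f_d,a_d)\in\F\times\A$ with $f_i(a_i)\ge\alpha$ for all $i$ and some $c\le\alpha-\epsilon'$ with $|f_j(a_i)-c|\le\epsilon'/\sqrt{d}$ whenever $i<j$. Set $\tilde\epsilon\coloneqq\epsilon'(1-1/\sqrt{d})$. Since $\dimEmon(\F,\epsilon)\ge 1$ whenever $\F$ and $\A$ are nonempty, the inequality is automatic for $d\le 9$, so I may assume $d\ge 10$; then $\tilde\epsilon>\epsilon$ because $\epsilon'\ge 3\epsilon/2$ and $1-1/\sqrt{d}>2/3$.

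Second, I would build reference pairs. For each $t\in\{1,\dotsc,d-1\}$, pair $f_t$ with $f_d$. From $f_t(a_t)\ge\alpha$, $f_d(a_t)\le c+\epsilon'/\sqrt{d}$ (which uses $t<d$), and $c\le\alpha-\epsilon'$, the gap at $a_t$ satisfies $f_t(a_t)-f_d(a_t)\ge\tilde\epsilon$. On each $a_s$ with $s<t$, both $f_t$ and $f_d$ are within $\epsilon'/\sqrt{d}$ of $c$, so $|f_t(a_s)-f_d(a_s)|\le 2\epsilon'/\sqrt{d}$, and hence
\[
  \sum_{s<t}\bigParens{f_t(a_s)-f_d(a_s)}^2\ \le\ \frac{4(t-1)\epsilon'^2}{d}.
\]

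Third, I would apply the eluder partitioning. Greedily split $a_1,\dotsc,a_{d-1}$ into groups $G_1,G_2,\dotsc$ by placing each $a_t$ into the first $G_j$ on which it is $\tilde\epsilon$-independent, opening a new group only when no such $G_j$ exists. By construction each group is an $\tilde\epsilon$-eluder-independent sequence, so $|G_j|\le\dimE(\F,\tilde\epsilon)\le\dimEmon(\F,\epsilon)$, using $\tilde\epsilon\ge\epsilon$. When $a_t$ opens a new group on top of $K_t$ existing ones, $a_t$ is $\tilde\epsilon$-dependent on each of them; applying the contrapositive of dependence to the pair $(f_t,f_d)$ (after an arbitrarily small shrinkage of $\tilde\epsilon$ to convert the boundary $\ge$ into a strict $>$) gives $\sum_{s\in G_j}(f_t(a_s)-f_d(a_s))^2\ge\tilde\epsilon^2$ for each existing $G_j$. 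Summing across the $K_t$ groups and invoking the previous display yields $K_t\tilde\epsilon^2\le 4(t-1)\epsilon'^2/d$, i.e.,\ $K_t\le 4(t-1)/(d(1-1/\sqrt{d})^2)$. A short calculation (equivalent to $14\sqrt{d}<3d+15$, which holds for every $d\ge 10$) shows this upper bound is strictly less than $7$, so $K_t\le 6$ and the total number of groups never exceeds $7$. Therefore $d-1\le 7\,\dimEmon(\F,\epsilon)$, and combining with $\dimEmon(\F,\epsilon)\ge 1$ gives $d\le 7\,\dimEmon(\F,\epsilon)+1\le 9\,\dimEmon(\F,\epsilon)$.

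The step I expect to demand the most care is the pigeonhole accounting. The usable gap at $a_t$ is only $\tilde\epsilon=\epsilon'(1-1/\sqrt{d})$ rather than the full $\epsilon'$, so the ratio $4\epsilon'^2/\tilde\epsilon^2$ becomes favorable only once $d$ is sufficiently large; simultaneously, the requirement $\tilde\epsilon\ge\epsilon$ (needed so that $\dimE(\F,\tilde\epsilon)$ is dominated by $\dimEmon(\F,\epsilon)$) forces $d\ge 10$, and the index $t=d$ does not participate in any pair of the above form. The case split $d\le 9$ versus $d\ge 10$ together with the trivial bound $\dimEmon(\F,\epsilon)\ge 1$ absorbs this leftover $+1$ and delivers the stated constant $9$.
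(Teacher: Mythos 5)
Your proof is correct but takes a genuinely different route from the paper's. The paper truncates the dissimilarity-witnessing sequence to its first $\lceil d/9\rceil$ elements and shows directly, using the \emph{consecutive} pairs $(f_j,f_{j+1})$, that these actions form an $\epsilon'$-eluder-independent sequence at a fixed scale $\epsilon' = 2\tau/3$: for $j\le\lceil d/9\rceil$ the accumulated discrepancy $\sum_{i<j}(f_j(a_i)-f_{j+1}(a_i))^2$ is strictly below $4\tau^2/9=(\epsilon')^2$ while $f_j(a_j)-f_{j+1}(a_j)>\epsilon'$, so one reads off $\dimE(\F,\epsilon')\ge\lceil d/9\rceil$ with no pigeonhole and no boundary shrinkage (the strict inequalities come for free because $\lceil d/9\rceil-1<d/9$). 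You instead fix $f_d$ as a global reference function, pair it with every $f_t$, and run the Russo--Van Roy greedy grouping argument to cap the number of eluder-independent ``levels'' at $7$; this is the pigeonhole machinery normally deployed in the eluder regret bound rather than in a dimension comparison. Your route requires the $\delta$-shrinkage to convert the nonstrict gap $f_t(a_t)-f_d(a_t)\ge\tilde\epsilon$ into the strict inequality the dependence contrapositive demands, and the grouping must be re-done at the shrunk scale $\tilde\epsilon-\delta$ before taking a limit --- a genuine extra step that the paper avoids. In exchange, your bookkeeping lands on $d\le 7\,\dimEmon(\F,\epsilon)+1$, marginally tighter than the paper's $d\le 9\,\dimEmon(\F,\epsilon)$, which you then coarsen back to the stated constant using $\dimEmon(\F,\epsilon)\ge 1$. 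One small caution shared by both arguments: $\dimEmon(\F,\epsilon)\ge 1$ is not automatic from $\F,\A\ne\emptyset$ --- it requires some action on which two functions in $\F$ differ by more than $\epsilon$ --- but the paper invokes the same premise for the $d\le 9$ case, so this is a latent assumption in the theorem rather than a gap you introduced.
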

 Nevertheless, as the next example shows, the eluder dimension can be arbitrarily large, while the \newdim  dimension remains constant. In this example, the action set is a circle in $\reals^2$, that is,
$\A=\C\coloneqq\{\vb\in\reals^2:\:\norm{\vb}=1\}$. We fix two open semicircles $U_0,U_1\subseteq\C$ with positive $x$ and $y$ coordinates, respectively, and
for any $N \in \mathbb{N}$ and $\epsilon>0$, construct a function class $\F_{N,\epsilon}$ with all the functions $f: \A \to [-1,1]$ obtained by the following process.
First, pick one of the semicircles~$U_j$ and any $N$ points from $U_j$. On each of these points, $f$ can equal either $+\epsilon$ or $-\epsilon$. Everywhere else in~$U_j$, $f$ equals zero, and everywhere outside $U_j$, it equals the linear function $\langle\vb,\ab\rangle$ parameterized by some $\vb\in\C\setminus U_j$. %
Thus, the functions are constructed to be ``simple'' (namely, linear) near the optimal action $\vb$,
but complex far from it. The eluder dimension is large to capture overall complexity, whereas the dissimilarity dimension is small to capture the simplicity near the optimum.
(See Appendix \ref{appendix:separation_eluder_dissimilarity_pf} for
the formal construction of $\smash{\F_{N,\epsilon}}$ and
the proof of Proposition \ref{prop::separation_eluder_dissimilarity}.)
\begin{proposition}
\label{prop::separation_eluder_dissimilarity}
Let $\epsilon \in (0, 1/2)$, $N \in \mathbb{N}$ and consider the action set $\A=\C$. Then, there is a function class $\F_{N,\epsilon} \subseteq [-1, 1]^\A$, such that for $\Z_{N, \epsilon}\coloneqq \F_{N, \epsilon}\times\A$, $\score=\scoreBandits$, it holds that
$\d_{\score}(\Z_{N, \epsilon}, 1, \epsilon) \leq 16$, but the eluder dimension is lower bounded as
$\dimE(\F_{N,\epsilon},\epsilon)\geq N$.
\end{proposition}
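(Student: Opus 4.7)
The plan is to handle the two bounds independently, leveraging the fact that each $f \in \F_{N,\epsilon}$ is ``spiky'' on a chosen semicircle but linear on its complement. I first make the construction precise: each $f$ is parameterized by a semicircle index $j \in \{0,1\}$, a set $P \subset U_j$ of $N$ distinguished points, a $\pm\epsilon$-labeling of $P$, and a vector $\vb \in \C \setminus U_j$; $f$ then equals its label on each point of $P$, zero elsewhere on $U_j$, and $\langle\vb,\cdot\rangle$ on $\C\setminus U_j$. For the eluder lower bound, I fix any $N$ distinct points $a_1,\dotsc,a_N\in U_0$ and, for each $n<N$, exhibit two functions $f,f'\in\F_{N,\epsilon}$ sharing $j=0$, the same $P=\{a_1,\dotsc,a_N\}$, and the same $\vb$, and differing only in the label assigned to $a_{n+1}$ (one chooses $+\epsilon$, the other $-\epsilon$). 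Then $\sqrt{\sum_{i\le n}(f(a_i)-f'(a_i))^2}=0\le\epsilon$ while $|f(a_{n+1})-f'(a_{n+1})|=2\epsilon>\epsilon$, witnessing $\epsilon$-independence of $a_{n+1}$ from its predecessors and giving $\dimE(\F_{N,\epsilon},\epsilon)\ge N$.

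For the dissimilarity upper bound, take a sequence $z_1,\dotsc,z_D$ with $z_i=(f_i,a_i)$ witnessing $\d_\score(\Z_{N,\epsilon},1,\epsilon)=D$, and let $\vb_i$ and $j_i$ denote the parameter and semicircle of $f_i$. The first step is structural: $\score(z_i\given z_i)=f_i(a_i)\ge 1$ combined with $\epsilon<1/2$ forces $a_i\notin U_{j_i}$ (values inside $U_{j_i}$ lie in $\{-\epsilon,0,+\epsilon\}\subset(-1,1)$), and $\langle\vb_i,a_i\rangle\ge 1$ with $\|\vb_i\|=\|a_i\|=1$ then forces $a_i=\vb_i$. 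The second step partitions the sequence: let $S_j=\{i:j_i=j\}$, so that for any $i<k$ in $S_0$, $\vb_i\in\C\setminus U_0$ lies in the linear region of $f_k$ and $\score(z_i\given z_k)=\langle\vb_k,\vb_i\rangle$. The dissimilarity condition thus gives $|\langle\vb_i,\vb_k\rangle-c|\le\delta:=\epsilon/\sqrt D$ for all $i\ne k$ in $S_0$, with $c\le 1-\epsilon$; symmetrically for $S_1$. (The cross pairs between $S_0$ and $S_1$ need not be used.)

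The third step is a rank/trace bound. The Gram matrix $\Mb$ of $\{\vb_i:i\in S_0\}\subset\reals^2$ has rank at most $2$, so $\Mb-c\Jb$ (with $\Jb$ the all-ones matrix) has rank at most $3$. For any symmetric matrix of rank $r$, Cauchy--Schwarz on its nonzero eigenvalues gives $(\mathrm{tr}(\cdot))^2\le r\|\cdot\|_F^2$; applied to $\Mb-c\Jb$ with trace $|S_0|(1-c)$ and squared Frobenius norm at most $|S_0|(1-c)^2+|S_0|(|S_0|-1)\delta^2$, and using $1-c\ge\epsilon$ together with $\delta=\epsilon/\sqrt D$, this yields an absolute upper bound on $|S_0|$; likewise on $|S_1|$, hence $D=|S_0|+|S_1|\le 16$. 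The main obstacle is precisely this last step: converting the two-dimensional embedding $\vb_i\in\reals^2$ into a quantitative cap on how many unit vectors can have pairwise inner products clustered within $\epsilon/\sqrt D$ of a fixed $c\le 1-\epsilon$. The rest of the argument is structural case-analysis driven by the piecewise definition of the functions in $\F_{N,\epsilon}$.
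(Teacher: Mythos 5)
Your proof is correct and reaches the same bound by essentially the same mechanism, but it is more self-contained in the quantitative step. Both arguments share the construction, the eluder lower bound via flipping a single $\pm\epsilon$ label, and the key structural reduction: self-evaluation $\ge 1$ with $\epsilon < 1/2$ forces $a_i = \vb_i \in \C\setminus U_{j_i}$, collapsing all pairwise scores within a semicircle group to inner products of unit vectors in $\reals^2$. The paper then routes through two of its own results: the subadditivity lemma (Lemma~\ref{lemma::subaditivity_newdim}) to split over the two semicircles, and Theorem~\ref{theorem::linear_dim_bound} (whose proof invokes Alon's perturbed-identity rank lower bound) to cap each piece. You instead partition by semicircle directly and inline a trace/Frobenius Cauchy--Schwarz argument on $\Mb - c\Jb$, which is effectively a re-derivation of Alon's lemma restricted to the symmetric (Gram) case -- cleaner here because $\alpha = 1$ already forces $\boldsymbol{\theta}_i = \ab_i$, so no symmetrization is needed. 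Your route avoids the citation chain and actually yields a slightly sharper constant (the inequality $(\lvert S_j\rvert - 3)(1-c)^2 \le 3(\lvert S_j\rvert - 1)\delta^2$ with $\delta = \epsilon/\sqrt D$ and $D \ge \lvert S_j\rvert$ gives $\lvert S_j\rvert \le 5$, hence $D \le 10 \le 16$), at the cost of not exposing subadditivity as a reusable lemma. One point to make explicit if you flesh this out: the dissimilarity condition only controls $\lvert\score(z_i \given z_k) - c\rvert$ for $i<k$, but since $\score(z_i\given z_k) = \langle\vb_k,\vb_i\rangle$ is symmetric in $i,k$ within $S_j$, the off-diagonal bound holds for all $i\ne k$, which is what the Frobenius estimate needs.
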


Thus, our regret bound based on the dissimilarity dimension implies that (optimistic) least squares algorithms have a regret independent of $N$. The same analysis with the eluder dimension~\citep{russo2013eluder} yields a regret bound scaling polynomially with $N$.
This shows that in the cases when the function classes are simple near the optimum, but complex far from it, the dissimilarity dimension can better capture the statistical complexity of bandit optimization
than the eluder dimension.

\subsection{Dissimilarity dimension bounds}\label{subsec:bandit_examples}

We next derive \newdim dimension bounds for several standard bandit classes. Existing bounds on eluder dimension can be used to immediately bound the dissimilarity dimension, but in several cases we are able to obtain tighter bounds.

We first consider \emph{linear bandits}.
Let $\ball_n=\{\vb \in \reals^{n}:\: \norm{\vb}\le 1\}$ be the unit ball in~$\reals^n$.
Actions are chosen from a set $\A\subseteq\ball_n$;
the reward function class is $\F^{lb} = \{f_{\thetab} : \thetab \in \Theta\}$,
where $\Theta \subseteq \ball_n$ and $f_{\thetab}(\ab) = \langle \thetab, \ab\rangle$.
The corresponding set of alternatives is denoted $\Z^{lb} = \F^{lb} \times \A$. In this case we obtain the following bound (see Appendix~\ref{appendix:linear_dim_bound_pf} for a proof):
\begin{theorem}[Linear bandits]\label{theorem::linear_dim_bound}
Let $\Z^{lb}$ be as defined above, let $\score=\scoreBandits$, and let $\epsilon>0$, $\alpha\le\alpha^*$. Then $\d_{\score}(\Z^{lb}, \alpha,  \epsilon) \le 4n+3$.
Moreover, when $\alpha=1$, then $\d_{\score}(\Z^{lb}, \alpha,  \epsilon) \le 2n+1$.
\end{theorem}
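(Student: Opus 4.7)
The plan is to derive both bounds via a rank-and-eigenvalue analysis on appropriate Gram matrices.

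For the simpler case $\alpha=1$, the inequality $\langle\thetab_i,\ab_i\rangle\ge 1$ together with $\thetab_i,\ab_i\in\ball_n$ forces, by the equality case of Cauchy--Schwarz, that $\thetab_i=\ab_i$ are unit vectors. Hence $\langle\thetab_j,\ab_i\rangle=\langle\ab_i,\ab_j\rangle$ is symmetric, and the dissimilarity condition $|\langle\thetab_j,\ab_i\rangle-c|\le\epsilon/\sqrt d$ for $i<j$ extends to all $i\ne j$. Let $G\in\reals^{d\times d}$ be the Gram matrix of the $\ab_i$'s: it is PSD of rank at most $n$, with $G_{ii}=1$ and $|G_{ij}-c|\le\epsilon/\sqrt d$ for $i\ne j$. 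Writing $G=cJ+(1-c)I+H$ where $J$ is the all-ones matrix (rank $\le 1$) and $H$ is symmetric, traceless, with $\|H\|_F^2\le(d-1)\epsilon^2$, the identity $(1-c)I+H=G-cJ$ has rank at most $n+1$, so at least $d-n-1$ eigenvalues of $H$ equal $-(1-c)$. Combining with $\mathrm{tr}(H)=0$ and Cauchy--Schwarz on the remaining positive eigenvalues gives the lower bound $\|H\|_F^2\ge(d-n-1)(1-c)^2 d/(n+1)$. Matching upper and lower bounds and using $(1-c)^2\ge\epsilon^2$ yields $d\le 2n+1$.

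For the general case $\alpha\le\alpha^*$, the plan is to lift to $\reals^{n+1}$ by augmenting $\widetilde{\ab}_i:=(\ab_i,1)$ and $\widetilde{\thetab}_j:=(\thetab_j,-c)$, so that $\widetilde{N}_{ij}:=\langle\widetilde{\thetab}_j,\widetilde{\ab}_i\rangle=\langle\thetab_j,\ab_i\rangle-c$ has rank at most $n+1$, with $\widetilde{N}_{ii}\ge\alpha-c\ge\epsilon$ and $|\widetilde{N}_{ij}|\le\epsilon/\sqrt d$ on the strict upper triangle. The strict lower triangle remains uncontrolled beyond $|\widetilde{N}_{ij}|\le 2$, which is what makes this case harder. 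To handle the asymmetry, I plan to pass to the $2d\times 2d$ Gram matrix $\mathcal{G}$ of the combined family $\{\widetilde{\ab}_1,\dots,\widetilde{\ab}_d,\widetilde{\thetab}_1,\dots,\widetilde{\thetab}_d\}\subseteq\reals^{n+1}$, which is PSD of rank at most $n+1$ and contains $\widetilde{N}$ as its cross block. Decomposing $\mathcal{G}$ as a structured ``pattern'' matrix (encoding the expected large diagonal cross-entries and small strict-upper-triangle cross-entries) plus a Frobenius-bounded perturbation, and then running the same rank-plus-eigenvalue argument as in the $\alpha=1$ case on this $2d$-dimensional Gram matrix, should yield $2d\le O(n+1)$. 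The factor $2$ separating the symmetric bound $2n+1$ from the general bound $4n+3$ is consistent with this doubling of the effective matrix size.

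The main obstacle is the uncontrolled lower triangle of $\widetilde{N}$: a naive combination of $\mathrm{tr}(\widetilde{N})\ge d\epsilon$ with $\mathrm{rank}(\widetilde{N})\le n+1$ does not give an $\epsilon$-independent bound on $d$, because $\|\widetilde{N}\|_F$ can be inflated to order $d$ by arbitrary lower-triangle entries. The key step will be to choose the decomposition of $\mathcal{G}$ (or, equivalently, a suitable symmetrization that pairs $\widetilde{N}$ with $\widetilde{N}^T$ on the diagonal blocks) so that the argument only ``sees'' the constrained portions of the problem, while still extracting the $\sqrt d$-tightness of the dissimilarity condition that yields a bound independent of $\epsilon$.
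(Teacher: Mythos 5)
Your $\alpha=1$ argument is correct and takes a genuinely different route from the paper's. The paper also forms the matrix $\Mb$ with $M_{ij}=\langle\thetab_i,\ab_j\rangle-c$, bounds $\rank(\Mb)\le n+1$ by writing $\Mb=\Kb-c\one\one^\top$ with $\Kb$ a submatrix of a rank-$\le n$ Gram matrix, and then lower-bounds the rank by renormalizing the diagonal to $1$ via $\Mb'=\Db\Mb\Db$ (with $D_{ii}=1/\sqrt{M_{ii}}$), symmetrizing, and invoking a black-box lemma of Alon on perturbed identity matrices to get $\rank(\Mb')>d/4$ in general (hence $d\le 4n+3$) and $\rank(\Mb')>d/2$ in the symmetric case (hence $d\le 2n+1$). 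Your decomposition $G=cJ+(1-c)I+H$ together with the trace/Frobenius estimate re-derives essentially that lower bound from scratch rather than citing Alon's lemma, and it cleanly gives $d<2n+2$ in the symmetric case. The two approaches buy the same constant; yours is more self-contained, the paper's is shorter once the lemma is granted.

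For the general $\alpha\le\alpha^*$ case, your proposal is incomplete, and the obstacle you flag is genuine -- in fact, the paper's own proof does not address it. Definition~\ref{def:new_dim} constrains $\score(z_i\given z_j)$ only for $i<j$, which in the linear-bandit instantiation bounds only one triangle of $\Mb$; the other triangle is completely unconstrained (beyond $|M_{ij}|\le O(1)$). The paper's appendix nevertheless asserts that ``all other entries'' of $\Mb$ lie in $[-\epsilon/\sqrt d,\epsilon/\sqrt d]$ and then symmetrizes $\Sb=(\Mb'+(\Mb')^\top)/2$ before applying Alon's lemma; that step needs both triangles small, which is exactly what you correctly observe does not follow from the definition. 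One can check that the gap is not merely cosmetic: with $n=1$, $\alpha=\epsilon=10^{-5}$, $c=0$, and scalars $\thetab_i=10^{-i/2}$, $\ab_i=10^{(i-10)/2}$ for $i=1,\dots,10$, all pairs lie in $\ball_1$, $\langle\thetab_i,\ab_i\rangle=\alpha$, and $\langle\thetab_j,\ab_i\rangle\le\epsilon/\sqrt{10}$ for all $i<j$, yielding a valid dissimilarity sequence of length $10>4n+3=7$. So neither your $2d\times 2d$ Gram-matrix plan nor the paper's symmetrization can close this gap as stated; one would need to either strengthen Definition~\ref{def:new_dim} to constrain all $i\ne j$, or exclude the regime where $\alpha$ is close to $\epsilon$. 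Your $\alpha=1$ case is exactly the regime where the asymmetry vanishes, which is why that half of your argument succeeds cleanly while the general half does not.
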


The proof proceeds by deriving an upper bound as well as a lower bound on the rank of the matrix~$\Mb$ with entries $M_{ij}=\rho(z_i\given z_j)-c$ obtained from elements $z_1,\dotsc,z_d$ that satisfy the dimension condition for $d = \d_\score(\Z, \alpha,  \epsilon)$ with a scalar $c$. The upper bound on the rank is $n+1$, and the lower bound is $d/4$ (which can be tightened to $d/2$ when $\Mb$ is symmetric). The upper bound is obtained by basic linear algebra and the lower bound from a standard result on ranks of perturbed identity matrices \citep[Lemma 2.2]{alon2009perturbed}.
Combining these bounds then yields the claim of Theorem~\ref{theorem::linear_dim_bound}.
Similar to existing bounds on eluder dimension \citep[Proposition 6]{russo2013eluder}, our bound in Theorem \ref{theorem::linear_dim_bound} is linear in $n$. However, the eluder dimension bound has an additional dependence on $1/\epsilon$, while our bound does not.\looseness=-1

Next, we consider \textit{generalized linear model} (GLM) bandits. Similar to linear bandits, the action set is $\A\subseteq\ball_n$, but the function class includes a nonlinearity. Specifically, we are provided with a function $g:\reals\to\reals$ that is differentiable and strictly increasing, and consider the function class $\F^{glm} = \{f_{\thetab} : \thetab \in \Theta\}$ where $\Theta\subseteq\ball_n$ and $f_{\thetab}(\ab)=g(\langle \thetab,  \ab \rangle)$.
Furthermore, we assume that there are $\underline{h}, \overline{h} >0$ such that for all $\ab\in\A$,
$\thetab\in\Theta$, we have  $\underline{h} \le g'(\langle \thetab,  \ab \rangle)\le\overline{h}$. Define $r =\overline{h} / \underline{h}$. We again denote $\Z^{glm} = \F^{glm} \times \A$.
Using an existing bound on the eluder dimension for GLM bandits (\cite{russo2013eluder}, Proposition 7) and the fact that our dimension is bounded by the eluder dimension (Theorem~\ref{theorem::upper_eluder}), we obtain the following bound (see Appendix \ref{appendix:glm_dim_bound_pf} for a proof):
\begin{theorem}[GLM bandits]\label{theorem::glm_dim_bound}
Let $\Z^{glm}$ be as defined above, let $\score=\scoreBandits$, and let $\epsilon>0$, $\alpha\le\alpha^*$. Then $\d_{\score}(\Z^{glm}, \alpha,  \epsilon) \le O(n r^2 \log(\overline{h}/\epsilon))$.
\end{theorem}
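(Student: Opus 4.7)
The plan is to chain two results: Theorem~\ref{theorem::upper_eluder}, which bounds the monotone dissimilarity dimension for bandits by the monotone eluder dimension of the reward class, and Proposition~7 of \citet{russo2013eluder}, which gives a known $O(nr^2\log(\overline{h}/\epsilon))$ upper bound on the eluder dimension of the GLM class $\F^{glm}$. This composition is exactly the strategy hinted at in the paragraph preceding the theorem statement.

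First, I would note that by the definition of the monotone dissimilarity dimension (as the non-increasing closure in $\epsilon$),
\[
\d_{\score}(\Z^{glm}, \alpha, \epsilon) \;\le\; \dmon_{\score}(\Z^{glm}, \alpha, \epsilon).
\]
Then, applying Theorem~\ref{theorem::upper_eluder} with the substitution $\epsilon' = 2\epsilon/3$ (so that $3\epsilon'/2 = \epsilon$) yields
\[
\dmon_{\score}(\Z^{glm}, \alpha, \epsilon) \;\le\; 9\,\dimEmon(\F^{glm}, 2\epsilon/3).
\]

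Next, I would invoke \citet[Proposition~7]{russo2013eluder}, which gives $\dimE(\F^{glm}, \epsilon'') = O(n r^2 \log(\overline{h}/\epsilon''))$ for every $\epsilon''>0$. Because this bound is itself non-increasing in $\epsilon''$, taking the monotone closure does not inflate it:
\[
\dimEmon(\F^{glm}, 2\epsilon/3) \;=\; \max_{\epsilon'' \ge 2\epsilon/3} \dimE(\F^{glm}, \epsilon'') \;=\; O\bigParens{n r^2 \log(\overline{h}/\epsilon)},
\]
where the factor of $3/2$ inside the logarithm and the constant $9$ from Theorem~\ref{theorem::upper_eluder} are absorbed into the $O(\cdot)$. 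Combining the three displays gives the claimed bound $\d_{\score}(\Z^{glm}, \alpha, \epsilon) \le O(n r^2 \log(\overline{h}/\epsilon))$.

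There is no real obstacle in this argument since both components are already available: the paper-internal comparison theorem supplies the reduction, and the GLM eluder bound is off-the-shelf. The only mild subtlety is verifying that the monotone closure of the eluder dimension can be dropped without loss, which is immediate because the Russo--Van Roy bound is a decreasing function of its argument. If one wanted to avoid the $9$ from Theorem~\ref{theorem::upper_eluder} and the $2/3$ rescaling, one could attempt a direct proof by adapting the rank/regularity argument used for linear bandits (Theorem~\ref{theorem::linear_dim_bound}) via the uniform bounds $\underline{h}, \overline{h}$ on $g'$; but for the purpose of the stated $O(\cdot)$ bound the two-step composition is clean and sufficient.
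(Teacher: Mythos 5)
Your proof is correct and follows exactly the approach the paper takes: its proof of this theorem is a single sentence citing Theorem~\ref{theorem::upper_eluder} and Proposition~7 of \citet{russo2013eluder}, which is precisely the composition you carry out. You simply supply the bookkeeping (passing to the monotone dimension, the $\epsilon' = 2\epsilon/3$ rescaling, and the observation that the monotone closure is harmless because the Russo--Van Roy bound is non-increasing in $\epsilon$) that the paper leaves implicit.
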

By considering a different proof technique, along the lines of Theorem \ref{theorem::linear_dim_bound}, it might be possible to tighten this bound. We leave this extension for future work.

Next, we consider a bandit setting that is similar to GLMs, but in this case the non-linearity
is provided by the non-differentiable rectified linear unit (ReLU) activation function  $\relu(x) = \max\{x, 0\}$. We consider the action set
$\A=\ball_n$, and the set of reward functions $\F^{\relu}$ consisting of all functions of the form
$f_{\thetab,b}(\ab) = \relu(\langle \thetab, \ab \rangle - b)$
for some $\thetab \in \ball_n$ and $b \in [0,1)$. The subset of $\F^{\relu}$ with a fixed value of $b$ is denoted $\F^{\relu}_b$, and we consider the set of alternatives $\Z^{\relu}_b = \F^{\relu}_b \times \ball_n$.

Unlike the classes considered above, this setting can be shown to be challenging to learn in the general case. %
Indeed, it turns out that eluder dimension (as well as a related measure called star dimension) is growing at least exponentially with $n$~\citep{li2021eluder,dong2021provable}.  The same lower bound can be shown for the \newdim dimension by a similar proof technique. The following theorem also provides an exponential upper bound, showing that in certain regimes the exponential dependence is tight (see Appendix \ref{appendix:relu_dim_bound_pf} for a proof):

\begin{theorem}[ReLU bandits]\label{theorem::relu_dim_bound}
Let $\Z^{\relu}$ be as defined above, let $\score=\scoreBandits$, and let $\epsilon,b>0$ such that
${b\le 1-\epsilon}$. Then $\d_{\score}(\Z^{\relu}_b, 1-b,  \epsilon) = O \bigParens{\epsilon^{-n/2}}$, and  $\d_{\score}(\Z^{\relu}_{1-\epsilon}, \epsilon,  \epsilon) = \Omega\bigParens{\epsilon^{-n/2}}$.
\end{theorem}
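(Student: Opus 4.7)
Both bounds reduce to a sphere-packing problem on $\sphere_{n-1}$. The reduction is the same in both parts: in the upper bound $\alpha=1-b$, and in the lower bound $\alpha=\epsilon$ with $b=1-\epsilon$ (so again $\alpha=1-b$). Hence the self-evaluation constraint $\scoreBandits\bigParens{(f_{\thetab},a)\bigGiven(f_{\thetab},a)}=\relu(\langle\thetab,a\rangle-b)\ge\alpha=1-b$ forces $\langle\thetab,a\rangle\ge 1$, and since $\thetab,a\in\ball_n$, Cauchy--Schwarz turns this into equality, so $\thetab=a$ is a unit vector; write $u_i\in\sphere_{n-1}$ for the common value. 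The pairwise evaluation then becomes $\scoreBandits(z_i\given z_j)=\relu(\langle u_i,u_j\rangle-b)$, which is symmetric in $i,j$.

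For the upper bound, combining $\relu\ge 0$ with the dimension condition and the restriction $c\le\alpha-\epsilon=1-b-\epsilon$ gives $\relu(\langle u_i,u_j\rangle-b)\le c+\epsilon/\sqrt{d}\le 1-b-\epsilon+\epsilon/\sqrt{d}$ for every $i\ne j$. A short case analysis on whether $\langle u_i,u_j\rangle$ exceeds the ReLU kink $b$ shows $\langle u_i,u_j\rangle\le 1-\epsilon+\epsilon/\sqrt{d}$ in both regimes (in the active regime this comes from the ReLU upper bound; in the zero regime it comes directly from $\langle u_i,u_j\rangle\le b\le 1-\epsilon$). For $d\ge 4$ this yields $\langle u_i,u_j\rangle\le 1-\epsilon/2$, so pairwise Euclidean distances on the sphere are at least $\sqrt{\epsilon}$. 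A standard volumetric packing upper bound on $\sphere_{n-1}$ then gives $d=O(\epsilon^{-n/2})$, with the case $d\le 3$ being trivial.

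For the lower bound, I invoke the matching volumetric sphere-packing lower bound to exhibit $\Omega(\epsilon^{-n/2})$ unit vectors $u_1,\dots,u_d\in\sphere_{n-1}$ with pairwise Euclidean distance at least $\sqrt{2\epsilon}$, i.e.~pairwise inner products at most $1-\epsilon$. Setting $c=0$, which is permissible because $c\le\alpha-\epsilon=0$, we have $\langle u_i,u_j\rangle-(1-\epsilon)\le 0$, so $\relu(\langle u_i,u_j\rangle-b)=0$ for all $i<j$, and thus $\bigAbs{\relu(\langle u_i,u_j\rangle-b)-c}=0\le\epsilon/\sqrt{d}$ trivially. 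Hence these $u_1,\dots,u_d$ witness $\d_{\score}(\Z^{\relu}_{1-\epsilon},\epsilon,\epsilon)\ge d=\Omega(\epsilon^{-n/2})$.

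The main technical work is the ReLU case analysis in the upper bound, which must yield the same inner-product upper bound regardless of whether the ReLU is in its linear or zero regime; and recognizing that $c=0$ is precisely the choice that trivializes the dimension condition in the lower bound, which in turn hinges on the critical parameter setting $b=1-\epsilon$, $\alpha=\epsilon$, so that $\alpha-\epsilon=0$ allows $c=0$. Once the reduction to a spherical-cap packing is in place, the remaining ingredient is the standard volumetric packing/unpacking of $\sphere_{n-1}$ at Euclidean scale $\sqrt{\epsilon}$.
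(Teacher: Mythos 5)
Your proof is correct. The reduction is the same as the paper's: the self-evaluation constraint $\relu(\langle\thetab,\ab\rangle-b)\ge 1-b$ forces $\langle\thetab,\ab\rangle\ge1$, which via Cauchy--Schwarz forces $\thetab=\ab\in\sphere_{n-1}$; pairwise evaluations then become the symmetric quantity $\relu(\langle u_i,u_j\rangle-b)$, and the problem becomes a sphere-packing question. Your lower bound is essentially identical to the paper's: build a $\sqrt{2\epsilon}$-packing of the sphere, take $c=0$ (legal because $\alpha-\epsilon=0$ when $b=1-\epsilon$, $\alpha=\epsilon$), and note the ReLU vanishes on every pair.

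Your upper bound, however, is a genuinely cleaner argument than the paper's. The paper splits on the value of the free parameter $c$: when $c\le\epsilon/3$ it bounds pairwise evaluations directly and applies a packing bound, and when $c>\epsilon/3$ it argues every ReLU is in its linear regime and invokes the linear-bandits dimension bound (Theorem~\ref{theorem::linear_dim_bound}) to get $O(n)$. You instead observe that the one-sided consequence of the dimension condition, $\relu(\langle u_i,u_j\rangle-b)\le c+\epsilon/\sqrt d\le 1-b-\epsilon+\epsilon/\sqrt d$, combined with a per-pair case split on whether $\langle u_i,u_j\rangle$ exceeds the kink $b$, gives $\langle u_i,u_j\rangle\le 1-\epsilon+\epsilon/\sqrt d$ \emph{uniformly in $c$}. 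This collapses the paper's two cases into a single packing bound at Euclidean scale $\sqrt\epsilon$ and avoids the dependence on the linear-bandits theorem entirely. Both routes give the same $O(\epsilon^{-n/2})$ rate, but yours is the one I would keep.
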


We note that previous work (\cite{dong2021provable}, Theorem 5.1) has shown that for a function class of one-layer neural networks with ReLU activations, obtaining sublinear regret requires $T = \Omega(\epsilon^{-(n-2)})$.

\section{Conclusion} \label{sec:conclusion}

In this paper, we have introduced a new model for interactive estimation and proposed a new combinatorial dimension, called dissimilarity dimension, to study the hardness of learning in this model. In (stochastic, correlational) statistical query learning, our dimension is polynomially related
to the strong SQ dimension. In bandits, our dimension is upper bounded by the eluder dimension, and there are examples where the dissimilarity dimension leads to much tighter regret bounds.

While this work provides an initial investigation of the dissimilarity dimension, many open questions remain. For example, our regret bound for the general setting scales as $d^{1.25}$. Is it possible to tighten this to linear dependence, as is the case, for example, for eluder dimension? On the algorithmic side, we currently require solving a least squares problem of size $t$ in iteration $t$. Although we also introduce an algorithm that leverages an online regression oracle (see Appendix~\ref{appendix::regression_oracle}), the oracle-based approach still requires solving a least squares problem (on the data smoothed by the oracle). Is it possible to derive dissimilarity-dimension-based regret bounds directly for the predictions produced by the oracle? Ultimately, we hope investigations of relationships between dissimilarity dimension and related notions may help us understand the hardness of learning in interactive settings.

\bibliography{bib}
\bibliographystyle{plainnat}

\newpage
\appendix
\newpage
\section{Missing proofs of Section \ref{sec:upper}}\label{appendix:upper}

\subsection{Analysis of Least Squares Algorithms (Algorithms~\ref{alg:MAIN1} and~\ref{alg:OPTIMISM})}
\label{section::least_squares}

Our analysis relies on the following variant of
Freedman’s inequality \citep{freedman1975tail} (see \citet[Lemma 9]{agarwal2014taming} and \citet[Theorem 1]{beygelzimer2011contextual}).
\begin{lemma}[Simplified Freedman’s inequality]\label{lemma:super_simplified_freedman}
Let $R >0$ and let $X_1, \dotsc, X_n$
be a sequence of real-valued random variables, such that for all $i \in [n]$ it holds that $X_i \le R$ and $\E[ X_i \given X_1,\dotsc,X_{i-1}]=0$. For any $\delta \in (0,1)$, and $\eta \in (0,1/R)$, with probability at least $1-\delta$,
\begin{equation}
   \sum_{i=1}^{n} X_i \leq    \eta \sum_{i=1}^{n}  \E[ X_i^2\given X_1,\dotsc,X_{i-1}] + \frac{\ln(1/\delta)}{\eta}.
   \end{equation}
\end{lemma}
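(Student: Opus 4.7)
The plan is to prove Lemma~\ref{lemma:super_simplified_freedman} by the classical exponential-supermartingale argument that underlies most Bernstein-type concentration inequalities for martingale differences. The hypotheses are that $X_i \le R$ and $\E[X_i \given X_1,\dotsc,X_{i-1}]=0$, and the restriction $\eta \in (0,1/R)$ ensures $\eta X_i \le 1$ almost surely. The key analytic input is the elementary inequality
\[
  e^y \le 1 + y + y^2 \qquad \text{for all } y \le 1,
\]
which I would verify in one line by checking that $g(y) = 1 + y + y^2 - e^y$ has its unique stationary point at $y = 0$ with $g(0) = 0$ and grows like $y^2$ as $y \to -\infty$, so $g \ge 0$ on $(-\infty, 1]$.

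Applying this inequality pointwise at $y = \eta X_i$, taking conditional expectations, and using the mean-zero hypothesis together with $1 + z \le e^z$, I would derive
\[
  \E\bigBracks{ e^{\eta X_i} \bigGiven X_1,\dotsc,X_{i-1} }
  \le 1 + \eta^2 \E\bigBracks{ X_i^2 \bigGiven X_1,\dotsc,X_{i-1} }
  \le \exp\BigParens{\eta^2 \E\bigBracks{ X_i^2 \bigGiven X_1,\dotsc,X_{i-1} }}.
\]
Equivalently, the random variables $Y_i = \exp\bigParens{ \eta X_i - \eta^2 \E\bigBracks{X_i^2 \given X_1,\dotsc,X_{i-1}} }$ satisfy $\E[ Y_i \given X_1,\dotsc,X_{i-1} ] \le 1$, so the partial products $M_t = \prod_{i=1}^t Y_i$ form a nonnegative supermartingale with $M_0 = 1$ and hence $\E[M_n] \le 1$.

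The proof then concludes by applying Markov's inequality to $M_n$ at the level $1/\delta$: with probability at least $1-\delta$, one has $M_n \le 1/\delta$, and taking logarithms and dividing by $\eta > 0$ yields the claimed bound. The argument is essentially textbook and contains no single hard step; the only place where care is required is in choosing the quadratic MGF bound so that the $X_i^2$ term ends up with coefficient $\eta$ (rather than $\eta^2/2$, as in the subgaussian Azuma--Hoeffding analogue). This is why the inequality $e^y \le 1 + y + y^2$ is the right tool here, and why the hypothesis $\eta R \le 1$ is exactly what is needed to apply it.
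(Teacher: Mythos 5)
Your proof is correct and is the standard exponential-supermartingale derivation of this Freedman/Bernstein-type bound. The paper does not actually prove Lemma~\ref{lemma:super_simplified_freedman}; it cites it as a known simplification of Freedman's inequality from \citet{agarwal2014taming} and \citet{beygelzimer2011contextual}, so there is no internal proof to compare against, and your argument is a correct self-contained derivation that matches the one underlying those references. The only cosmetic imprecision is the assertion that $y=0$ is the unique stationary point of $g(y)=1+y+y^2-e^y$: globally $g'$ does vanish again for some $y>1$, but on the only range you need, $y\le 1$ (which $\eta X_i\le\eta R<1$ guarantees), the stationary point at $y=0$ is indeed unique and the inequality $e^y\le 1+y+y^2$ holds, so the argument goes through unchanged.
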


Next we define an $\epsilon$-cover of a set $\Z$, that will be used in the bound of Theorem~\ref{thm::new_least_squares_estimator}.

\begin{definition}[$\epsilon$-cover]
Let $\psi$ be the pseudometric over the set $\Z$ defined, for any $z_1,z_2 \in \Z$, as
\begin{equation} \label{eq::def_norm_cov}
    \psi(z_1, z_2) =
    \sup_{z \in \Z}\,
    \BigAbs{\score(z\given z_1) - \score(z\given z_2)}.
\end{equation}
We say a set $N \subseteq\Z$ is an \emph{$\epsilon$-cover} of $\Z$ with respect to $\psi$
if for every $z \in \Z$ there exists some
$z' \in  N$ such that $\psi(z, z') \le \epsilon$. We denote by $\mathcal{N}(\Z, \epsilon)$ the minimum cardinality of
 any $\epsilon$-cover of $\Z$.
\end{definition}

For example, in the case of linear bandits (see Section~\ref{subsec:bandit_examples}) when $\Z = \Z^{lb}$ and $\score = \scoreBandits$, it can be shown that $\mathcal{N}(\Z^{lb}, \epsilon)$ is upper bounded by the $\ell_2$-covering number of the $n$-dimensional unit ball. This is because for any $z, z_1, z_2 \in \Theta \times \mathcal{A}$,
\[
\bigAbs{\rho(z\given z_1 ) - \rho(z\given z_2)}
  =
    \bigAbs{
      \langle \thetab_1, \ab \rangle - \langle \thetab_2, \ab \rangle
    }
  \leq
    \norm{\thetab_1 - \thetab_2}\norm{\ab}
  \leq
    \norm{\thetab_1 - \thetab_2}.
\]
The bound on $\mathcal{N}(\Z^{lb}, \epsilon)$ now follows because
the $\ell_2$-covering number of the unit ball with radius $\epsilon$ is $O\bigParens{(3/\epsilon)^n}$ (see, for example, Lemma D.1 of \citet{du2021bilinear}).

We next show that Algorithm~\ref{alg:MAIN1} satisfies the decaying estimation error property with $C_{T,\delta}$ that scales logarithmically with the covering number with respect to $\psi$.

\begin{theorem}[LS guarantee]\label{thm::new_least_squares_estimator}
Consider the setting from Section \ref{sec:setting}, where the learner sequentially issues the queries $z_1, \dotsc, z_T $ and receives responses $r_1, \dotsc, r_T$.
Assume there is $\beta \ge 0$ such that $\bigAbs{r_t - \E[r_t\given z_t]}\le \beta$ for all $t$, and $\beta'\ge 2\beta$ such that for all $z,z'$,
$\bigAbs{\score(z\given z') - \score(z\given z^*)} \le \beta'$.
Let $\smash{\tilde{\Z}}$ be a set of alternatives such that $z^* \in\smash{\tilde{\Z}}$ and let $\hat{z}_{t}$ be defined as the least squares optimizer,
\begin{equation*}
    \hat{z}_t = \argmin_{z \in \tilde{\Z}} \sum_{i=1}^{t-1} \BigParens{\score(z_i\given z) - \reward_i}^2.
\end{equation*}
Then, for any sequence of queries $z_1,\dotsc,z_T\in\tilde{\Z}$ (possibly equal to $\hat{z}_1, \dotsc, \hat{z}_T$), we have with probability $1-\delta$, for all $t \in [T]$ simultaneously, %
\begin{align*}
&
 \sum_{i=1}^{t-1}
   \BigParens{
     \score(z_i\given \hat{z}_{t}) - \score(z_i\given z^*)
   }^2
 \le C_{T,\delta},\text{ and}
\\
&
  z^* \in \biggBraces{z\in\tilde{\Z}:\:\sum_{i=1}^{t-1} \BigParens{\score(z_i\given z) - \score(z_i\given\hat{z}_t )}^2 \leq C_{T,\delta}},
\end{align*}
where $C_{T,\delta} = 16\beta \beta'\ln\bigParens{
      2T\mathcal{N}(\tilde{\Z}, \beta'\!/T) \bigm/ \delta
    }$.
\end{theorem}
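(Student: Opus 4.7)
The plan is to run the classical least-squares-with-Bernstein/Freedman argument, with the twist that $\hat{z}_t$ depends on the observations and therefore must be handled via an $\epsilon$-cover of $\tilde{\Z}$ under~$\psi$.

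\textbf{Step 1 (Basic inequality from optimality).} I would introduce the martingale noise $\eta_i=r_i-\score(z_i\given z^*)$, for which $\abs{\eta_i}\le\beta$ and $\E[\eta_i\given\mathcal{F}_{i-1}]=0$, where $\mathcal{F}_{i-1}$ is the natural filtration generated by $z_1,r_1,\dotsc,z_{i-1},r_{i-1},z_i$. Set $\Delta_i(z)=\score(z_i\given z)-\score(z_i\given z^*)$, so $\abs{\Delta_i(z)}\le\beta'$. Expanding $(\score(z_i\given z)-r_i)^2-(\score(z_i\given z^*)-r_i)^2$ yields $\Delta_i(z)^2-2\eta_i\Delta_i(z)$. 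Since $z^*\in\tilde{\Z}$ and $\hat{z}_t$ minimizes the empirical loss,
\[
\sum_{i=1}^{t-1}\Delta_i(\hat{z}_t)^2 \;\le\; 2\sum_{i=1}^{t-1}\eta_i\Delta_i(\hat{z}_t). \qquad (\star)
\]

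\textbf{Step 2 (Freedman on a cover).} The right-hand side is not a martingale because $\hat{z}_t$ is data-dependent, so I would fix a minimum $(\beta'/T)$-cover $N\subseteq\tilde{\Z}$ of size $\mathcal{N}(\tilde{\Z},\beta'/T)$ under the pseudometric~$\psi$ of \eqref{eq::def_norm_cov}, and invoke Lemma~\ref{lemma:super_simplified_freedman} at each fixed $z\in N$ with $X_i=2\eta_i\Delta_i(z)$. These satisfy $\abs{X_i}\le 2\beta\beta'$ and $\E[X_i^2\given\mathcal{F}_{i-1}]\le 4\beta^2\Delta_i(z)^2$. Choosing $\eta=1/(8\beta\beta')$ (legal since $\eta<1/(2\beta\beta')$ and, by $\beta'\ge2\beta$, we also get $4\eta\beta^2=\beta/(2\beta')\le 1/4$) and taking a union bound over $z\in N$ and $t\in[T]$ with failure parameter $\delta/\bigParens{2T\abs{N}}$ yields, with probability at least $1-\delta/2$,
\[
\sum_{i=1}^{t-1}2\eta_i\Delta_i(z)\;\le\;\frac{\beta}{2\beta'}\sum_{i=1}^{t-1}\Delta_i(z)^2+8\beta\beta'\ln\!\bigParens{2T\abs{N}/\delta}
\]
simultaneously for every $z\in N$ and every $t\in[T]$.

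\textbf{Step 3 (Transfer to $\hat{z}_t$).} For each $t$ I would pick $z'_t\in N$ with $\psi(\hat{z}_t,z'_t)\le\beta'/T$, which by definition of $\psi$ gives $\bigAbs{\Delta_i(\hat{z}_t)-\Delta_i(z'_t)}\le\beta'/T$ for every $i$. This implies both
\[
\sum_{i<t}2\eta_i\bigParens{\Delta_i(\hat{z}_t)-\Delta_i(z'_t)}\;\le\;2T\beta\cdot(\beta'/T)=2\beta\beta',
\qquad
\sum_{i<t}\Delta_i(z'_t)^2\;\le\;2\sum_{i<t}\Delta_i(\hat{z}_t)^2+2\beta'^2/T.
\]
Plugging these into $(\star)$ via the Step~2 bound applied at $z=z'_t$ gives, on the same high-probability event,
\[
\sum_{i<t}\Delta_i(\hat{z}_t)^2 \;\le\; \frac{\beta}{\beta'}\sum_{i<t}\Delta_i(\hat{z}_t)^2 + \frac{\beta\beta'}{T} + 8\beta\beta'\ln\!\bigParens{2T\abs{N}/\delta} + 2\beta\beta'.
\]
Using $\beta/\beta'\le 1/2$, rearranging, and absorbing the $O(\beta\beta')$ terms into the logarithm (since $\ln(2T\abs{N}/\delta)\ge\ln 2$) yields the first stated bound with the constant $16$ as claimed. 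The second stated inclusion is then just a rewriting of the first, because $\score(z_i\given z^*)-\score(z_i\given\hat{z}_t)=-\Delta_i(\hat{z}_t)$, so the condition defining membership of $z^*$ in the confidence set is literally $\sum_{i<t}\Delta_i(\hat{z}_t)^2\le C_{T,\delta}$.

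\textbf{Main obstacle.} The delicate part is threading the constants: I need $\eta$ small enough that the $4\eta\beta^2\sum\Delta_i^2$ term on the Freedman side can be absorbed into the left-hand side of $(\star)$ after the cover-induced factor-of-two blowup, yet large enough that $\ln(1/\delta)/\eta$ stays at the target level $16\beta\beta'\ln(\cdot)$; and simultaneously $\eta<1/(2\beta\beta')$ must hold. The hypothesis $\beta'\ge 2\beta$ is precisely what makes the choice $\eta=1/(8\beta\beta')$ satisfy all three constraints, so the proof would emphasize this as the key quantitative ingredient.
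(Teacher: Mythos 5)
Your overall plan is exactly the paper's: derive the basic optimality inequality, discretize $\tilde{\Z}$ via a $(\beta'/T)$-cover under $\psi$, apply Freedman at each cover element with a union bound over $z\in N$ and $t\in[T]$, transfer back to $\hat{z}_t$ paying $O(\beta\beta')$ in additive slop, and rearrange using $\beta'\ge 2\beta$. The paper does the cover transfer in two asymmetric places (once inside the martingale term, once in the quadratic term), which is cosmetically different from your symmetric $\Delta_i(\hat z_t)\leftrightarrow\Delta_i(z'_t)$ swaps, but the content is identical.

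There is, however, a concrete slip in the final constant bookkeeping. Having chosen Freedman failure parameter $\delta/(2T\abs{N})$, your Step 2 bound already carries $\ln\bigParens{2T\abs{N}/\delta}$, and you never spend the other $\delta/2$. After rearranging, you are left with
$\sum_{i<t}\Delta_i(\hat z_t)^2 \le 16\beta\beta'\ln\bigParens{2T\abs{N}/\delta} + 4\beta\beta' + 2\beta\beta'/T$,
and the additive $4\beta\beta' + 2\beta\beta'/T$ cannot be ``absorbed into the logarithm'' as you claim: the inequality $16\beta\beta'\ln(2T\abs{N}/\delta) + 6\beta\beta' \le 16\beta\beta'\ln(2T\abs{N}/\delta)$ is false, and there is no remaining factor of $2$ to upgrade inside the log. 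The paper avoids this by running the union bound with failure $\delta/(T\abs{N})$ (using the full $\delta$), obtaining a bound with $16\beta\beta'\ln\bigParens{T\mathcal{N}(\tilde{\Z},\beta'/T)/\delta}$ plus the same $4\beta\beta'+2\beta\beta'/T$, and then noting $4+2/T\le 16\ln 2$ to upgrade $T\mathcal{N}/\delta$ to $2T\mathcal{N}/\delta$, which lands exactly on the stated $C_{T,\delta}$. Replacing $\delta/(2T\abs{N})$ by $\delta/(T\abs{N})$ in your Step 2 and performing this absorption would complete your argument.
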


\begin{proof}%
For $i=1,\dotsc,T$, let $h_i=(z_1,r_1,\dotsc,z_{i-1},r_{i-1},z_i)$ denote the history of interaction up to the query $z_i$, but excluding the response $r_i$, and let
$\xi_i=\reward_i-\score( z_i \given z^*)$. In the interactive estimation setting, we then have $\E[\xi_i\given h_i] = 0$, and by the lemma assumption, $\E[\xi_i^2\given h_i] \le \beta^2$.

Since $\hat{z}_{t}$ is the minimizer of the least squares loss up to time $t$,
we have
\begin{equation*}
  \sum_{i=1}^{t-1} \BigParens{\score(z_i\given \hat{z}_{t}) - \reward_i}^2
  \leq
  \sum_{i=1}^{t-1} \BigParens{\score(z_i\given z^*) - \reward_i}^2,
\end{equation*}
which can be rewritten, substituting $\reward_i = \score(z_i\given z^*)  + \xi_i$, as
\begin{equation*}
  \sum_{i=1}^{t-1}
    \BigParens{\score(z_i\given \hat{z}_{t}) - \score(z_i\given z^*) - \xi_i
    }^2
  \leq
  \sum_{i=1}^{t-1} \xi_i^2.
\end{equation*}
Therefore, by re-arranging terms, we get
\begin{equation}
\label{eq::main_ls_ineq}
  \sum_{i=1}^{t-1}
    \BigParens{
      \score(z_i\given \hat{z}_{t} ) - \score(z_i\given z^*)
    }^2
  \leq
  2\sum_{i=1}^{t-1}
    \xi_i\BigParens{
      \score(z_i\given \hat{z}_t ) - \score(z_i\given z^*)
    }.
\end{equation}

Set $\epsilon_1=\beta'\!/T$, and let $N$ be a minimal $\epsilon_1$-cover of $\tilde{\Z}$ with respect to the pseudometric $\psi$ (see Eq.~\ref{eq::def_norm_cov}). Furthermore, let
$\hat{z}_t^\epsilon\in N$ be an element of this cover that is $\epsilon_1$-close to $\hat{z}_t$ (with respect to~$\psi$). Then,
\begin{align}
\notag
    \sum_{i=1}^{t-1}
      \BigParens{
        \score(z_i\given \hat{z}_t ) - \score(z_i\given z^*)
      }^2
  &\leq
    2\sum_{i=1}^{t-1}
      \xi_i\BigParens{
         \score(z_i\given \hat{z}_t ) - \score(z_i\given z^*)
      }
\\
\notag
  &=
    2\sum_{i=1}^{t-1}
      \xi_i\Bigl(
        \score(z_i\given \hat{z}_t ) - \score(z_i\given \hat{z}_t^\epsilon)
\\
\notag
  &\hphantom{{}=
      2\sum_{i=1}^{t-1}
        \xi_i\Bigl(
          \score(z_i\given \hat{z}_t )
    }
    + \score(z_i\given \hat{z}_t^\epsilon) - \score(z_i\given z^*)
    \Bigr)
\\
\label{eq:ls_b_abc}
  &\le
    2t\beta\epsilon_1
    +
    2\sum_{i=1}^{t-1}
      \xi_i\BigParens{
        \score(z_i\given \hat{z}_t^\epsilon) - \score(z_i\given z^*)
      },
\end{align}
where the first inequality follows from Eq.~\eqref{eq::main_ls_ineq}, and the
last inequality follows because $\abs{\xi_i} \le \beta$ and $\hat{z}_t^\epsilon$ is $\epsilon_1$-close to $\hat{z}_t$.

Now, for any $z \in N$ and $i \in [T]$, define
\begin{equation*}
    K^z_i =
    \xi_i \BigParens{
      \score(z_i\given z) - \score(z_i\given z^*)
    }.
\end{equation*}
Since $\E[\xi_i\given h_i] = 0$, we have, for any \emph{fixed} $z\in N$, $\E[K^z_i\given h_i]=0$. This means that for any fixed $z\in N$, $K_1^z,\dotsc,K_T^z$ is a martingale difference sequence.
By the lemma assumptions, $|K^z_i| \le \beta\beta'$. Also,
\begin{equation}
  \E\bigBracks{
    (K^z_i)^2
  \bigGiven
    h_i
  }
\le
  \beta^2
  \E\BigBracks{
    \bigParens{\score(z_i\given z) - \score(z_i\given z^*)
    }^2
  \BigGiven
    h_i
  }
=
  \beta^2
  \bigParens{\score(z_i\given z) - \score(z_i\given z^*)}^2.
\end{equation}
 Thus, by Freedman's inequality (Lemma~\ref{lemma:super_simplified_freedman}) with $\eta=1/(4\beta \beta')$ and $\delta'=\frac{\delta}{T\card{N}}$,
 we obtain that for any fixed $z\in N$ and $t \in [T]$, with probability at least $1-\delta'$,
\begin{align}
\notag
    \sum_{i=1}^{t-1}
    \xi_i\BigParens{
      \score(z_i\given z) - \score(z_i\given z^*)
    }
&\leq
    \frac{1}{4\beta \beta'}
    \sum_{i=1}^{t-1}
    \beta^2\BigParens{
      \score(z_i\given z) - \score(z_i\given z^*)
    }
    + 4\beta \beta'\ln\biggParens{\frac{T\card{N}}{\delta}}
\\
\label{eq::free_ineq}
&=
    \frac{\beta}{4\beta'}
    \sum_{i=1}^{t-1}
    \BigParens{
      \score(z_i\given z) - \score(z_i\given z^*)
    }^2\!
    + 4\beta \beta'\ln\biggParens{\frac{T\card{N}}{\delta}}.
\end{align}
Taking a union bound over all $z\in N$ and $t\in [T]$, we obtain that
Eq.~\eqref{eq::free_ineq} holds with probability at least $1-\delta$ simultaneously for all $z\in N$ and $t \in [T]$. Henceforth, we assume that we are in the event
when Eq.~\eqref{eq::free_ineq} holds for all $z\in N$ and $t \in [T]$.

Applying the bound of Eq.~\eqref{eq::free_ineq}
with $z=\hat{z}_t^\epsilon$
to the sum on the right-hand side of Eq.~\eqref{eq:ls_b_abc} then yields
\begin{equation}
\label{eq:inter}
\begin{multlined}
    \sum_{i=1}^{t-1}
      \BigParens{
        \score(z_i\given \hat{z}_t ) - \score(z_i\given z^*)
      }^2
   \le
    2t\beta\epsilon_1
    +
    \frac{\beta}{2\beta'}
    \sum_{i=1}^{t-1}
    \BigParens{
      \score(z_i\given \hat{z}_t^\epsilon) - \score(z_i\given z^*)
    }^2
\\{}
    +
    \smash[t]{
        8\beta \beta'\ln\biggParens{\frac{T\card{N}}{\delta}}
    }.
\end{multlined}
\end{equation}
Using the inequality $(a+b)^2\le 2a^2+2b^2$, which holds for any $a,b\in\reals$, and the fact that $\hat{z}_t^\epsilon$ and $\hat{z}_t$ are $\epsilon_1$-close, we obtain, for every $i=1,\dotsc,t-1$,
\begin{align*}
  \BigParens{
      \score(z_i\given \hat{z}_t^\epsilon) - \score(z_i\given z^*)
    }^2
&=
  \BigParens{
      \bigBracks{
         \score(z_i\given \hat{z}_t^\epsilon)
         - \score(z_i\given \hat{z}_t)
      }
      +
      \bigBracks{
         \score(z_i\given \hat{z}_t)
         - \score(z_i\given z^*)
      }
    }^2
\\
&\le
  2\epsilon_1^2
  +
  2\bigParens{
         \score(z_i\given z^*)
         - \score(z_i\given \hat{z}_t)
  }^2.
\end{align*}
Plugging this into the right-hand side of Eq.~\eqref{eq:inter} yields
 \begin{align*}
    \sum_{i=1}^{t-1}
      \BigParens{
        \score(z_i\given \hat{z}_t ) - \score(z_i\given z^*)
      }^2
 &\le
    2t\beta\epsilon_1
    +
    \frac{\beta}{\beta'}t\epsilon_1^2
    +
    \frac{\beta}{\beta'}
    \sum_{i=1}^{t-1}
    \BigParens{
      \score(z_i\given \hat{z}_t) - \score(z_i\given z^*)
    }^2
\\&\qquad\qquad\qquad\qquad\qquad{}
    +
    \smash[t]{
        8\beta \beta'\ln\biggParens{\frac{T\card{N}}{\delta}}
    }
\\
 &\le
    2t\beta\epsilon_1
    +
    \frac{t\beta\epsilon_1^2}{\beta'}
    +
    \frac12
    \sum_{i=1}^{t-1}
    \BigParens{
      \score(z_i\given \hat{z}_t) - \score(z_i\given z^*)
    }^2
\\&\qquad\qquad\qquad\qquad\qquad{}
    +
    \smash[t]{
        8\beta \beta'\ln\biggParens{\frac{T\card{N}}{\delta}}
    },
\end{align*}
where the last inequality follows by the assumption that $\beta' \ge 2\beta$.
Then, by re-arranging terms and multiplying by $2$, we get
\[
 \sum_{i=1}^{t-1}
   \BigParens{
     \score(z_i\given \hat{z}_t  ) - \score(z_i\given z^*)
   }^2
 \le
    4t\beta\epsilon_1
    +
    \frac{2t\beta\epsilon_1^2}{\beta'}
    +
    16\beta \beta'\ln\biggParens{
      \frac{T\card{N}}{\delta}
    }.
\]
Recall that we set $\epsilon_1 = \beta'\!/T$, and $N$ is a minimal $\epsilon_1$-cover of $\tilde{\Z}$, so $\card{N}=\mathcal{N}(\tilde{\Z}, \beta'\!/T)$. Plugging these values in the previous equation, we thus obtain
that with probability at least $1-\delta$, for all $t \in [T]$,
\begin{align}
\notag
 \sum_{i=1}^{t-1}
 \BigParens{
   \score(z_i\given \hat{z}_t ) - \score(z_i\given z^*)
 }^2
&
 \le
    4\beta\beta'
    +
    \frac{2\beta\beta'}{T}
    +
    16\beta \beta'\ln\biggParens{
      \frac{T\mathcal{N}(\tilde{\Z}, \beta'\!/T)}{\delta}
    }
\\
\label{equation::ls_guarantee_decaying_error}
&\le
    16\beta \beta'\ln\biggParens{
      \frac{2T\mathcal{N}(\tilde{\Z}, \beta'\!/T)}{\delta}
    },
\end{align}
where the last inequality follows because $4+2/T\le 16\ln 2$ for $T\ge 1$.
Finally, when Eq.~\eqref{equation::ls_guarantee_decaying_error} holds, we also have
\[
  z^* \in
  \biggBraces{
    z\in\tilde{\Z}:\:
    \sum_{i=1}^{t-1} \BigParens{\score(z_i\given z) - \score(z_i\given\hat{z}_t )}^2
    \le
    16\beta \beta'\ln\biggParens{
      \frac{2T\mathcal{N}(\tilde{\Z}, \beta'\!/T)}{\delta}
    }
  }.
\qedhere
\]
\end{proof}

Considering Algorithm~\ref{alg:MAIN1} and using Theorem~\ref{thm::new_least_squares_estimator} with $\tilde{\Z}=\Z_\alpha$, $z_t = \hat{z}_t$, $\beta=2$ and $\beta'=4$ then immediately yields the following corollary
($\alpha$-large self-evaluations follow because $\hat{z}_t\in\Z_\alpha$):

\begin{corollary}
\label{cor:alg:MAIN1}
Consider the setting from Section \ref{sec:setting} with a set of alternatives $\Z$ and an evaluation function $\score$.
Let $\alpha$ be an optimality level such that
$\mathcal{N}({\Z}_\alpha, 4/T)=e^{o(T)}$. Then Algorithm~\ref{alg:MAIN1} has $\alpha$-large self-evaluations and satisfies
the decaying error property with
$C_{T,\delta} = 128\ln\bigParens{
      2T\mathcal{N}(\Z_\alpha, 4/T) \big/ \delta
    }$.
\end{corollary}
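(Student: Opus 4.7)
The plan is to obtain the corollary as a direct specialization of Theorem~\ref{thm::new_least_squares_estimator} to the constrained least-squares rule used by Algorithm~\ref{alg:MAIN1}, by choosing $\tilde{\Z}=\Z_\alpha$ together with the constants $\beta=2$ and $\beta'=4$.

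First, I would verify the hypotheses. The $\alpha$-large self-evaluations claim is immediate: by construction (step~\ref{step:zj}), every query satisfies $z_t\in\Z_\alpha$, i.e.\ $\score(z_t\given z_t)\ge\alpha$. Next, to apply Theorem~\ref{thm::new_least_squares_estimator} I need $z^*\in\tilde{\Z}$, which holds because $\alpha\le\alpha^*=\score(z^*\given z^*)$ implies $z^*\in\Z_\alpha$. The noise bound $|r_t-\E[r_t\given z_t]|\le\beta=2$ and the deviation bound $|\score(z\given z')-\score(z\given z^*)|\le\beta'=4$ both follow from the fact that rewards and $\score$ take values in $[-1,1]$; this choice also satisfies $\beta'\ge 2\beta$ as required by the theorem. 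Finally, observe that Algorithm~\ref{alg:MAIN1} sets $z_t$ to the very same least-squares optimizer $\hat z_t$ used in Theorem~\ref{thm::new_least_squares_estimator} (now restricted to $\tilde\Z=\Z_\alpha$), so the sequence of queries coincides with the sequence of least-squares minimizers, and the theorem's conclusion applies with $z_t=\hat z_t$.

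Plugging in $\beta=2$ and $\beta'=4$ into the bound of Theorem~\ref{thm::new_least_squares_estimator} gives, with probability at least $1-\delta$, simultaneously for every $t\in[T]$,
\[
\sum_{i=1}^{t-1}\BigParens{\score(z_i\given z_t)-\score(z_i\given z^*)}^2\le 16\cdot 2\cdot 4\cdot\ln\!\biggParens{\frac{2T\mathcal{N}(\Z_\alpha,4/T)}{\delta}}=128\ln\!\biggParens{\frac{2T\mathcal{N}(\Z_\alpha,4/T)}{\delta}},
\]
which is exactly the claimed $C_{T,\delta}$. The last thing to check is the sublinearity condition $C_{T,\delta}=o(T)$ required by Definition~\ref{assume:est:error}: the hypothesis $\mathcal{N}(\Z_\alpha,4/T)=e^{o(T)}$ yields $\ln\mathcal{N}(\Z_\alpha,4/T)=o(T)$, while $\ln(2T/\delta)=O(\log T)=o(T)$, so their sum and hence $C_{T,\delta}$ is indeed $o(T)$.

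There is no real obstacle: the corollary is essentially a bookkeeping exercise that instantiates the theorem in the setting of Section~\ref{sec:setting}. The only mild subtlety is picking $\beta'$ large enough to dominate $2\beta$ while still being a valid upper bound on $|\score(z\given z')-\score(z\given z^*)|$, which forces $\beta'=4$ (rather than the tighter value $2$ coming purely from the range of $\score$) and accounts for the constant $128$ in the final expression.
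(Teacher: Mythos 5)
Your proposal matches the paper's own proof exactly: both instantiate Theorem~\ref{thm::new_least_squares_estimator} with $\tilde{\Z}=\Z_\alpha$, $z_t=\hat z_t$, $\beta=2$, $\beta'=4$, and note that $\alpha$-large self-evaluations hold by construction. You fill in the arithmetic and sublinearity bookkeeping that the paper leaves implicit, but the argument is the same.
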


Similarly, Theorem~\ref{thm::new_least_squares_estimator} also implies that Algorithm~\ref{alg:OPTIMISM} satisfies the decaying error property as well as $\alpha^*$-large self-evaluations, although $\alpha^*$ is not known:

\begin{corollary}
\label{cor:alg:OPTIMISM}
Consider the setting from Section \ref{sec:setting} with a set of alternatives $\Z$ and an evaluation function $\score$,
and assume that $\mathcal{N}(\Z, 4/T)=e^{o(T)}$. Then Algorithm~\ref{alg:OPTIMISM}
with $R=128\ln\bigParens{
      2T\mathcal{N}(\Z, 4/T) \big/ \delta
     }$
has $\alpha^*$-large self-evaluations and satisfies
the decaying error property with
$C_{T,\delta} = 4R=512\ln\bigParens{
      2T\mathcal{N}(\Z, 4/T) \big/ \delta
    }$.
\end{corollary}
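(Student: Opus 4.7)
The plan is to reduce the statement to a direct application of Theorem~\ref{thm::new_least_squares_estimator} with $\tilde{\Z}=\Z$. Since both rewards and score values lie in $[-1,1]$, the noise is bounded by $\beta=2$ and the score discrepancy $|\score(z\given z') - \score(z\given z^*)|$ is bounded by $2$, so taking $\beta'=4=2\beta$ satisfies the theorem's hypotheses. Substituting these constants, the theorem's bound becomes $16\beta\beta'\ln\bigParens{2T\mathcal{N}(\Z,4/T)/\delta}=128\ln\bigParens{2T\mathcal{N}(\Z,4/T)/\delta}$, which is precisely the value of $R$ used in Algorithm~\ref{alg:OPTIMISM}. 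The first step is therefore to observe that with probability at least $1-\delta$, simultaneously for all $t\in[T]$, the two conclusions of Theorem~\ref{thm::new_least_squares_estimator} hold: (i) $\sum_{i=1}^{t-1}\bigParens{\score(z_i\given\hat{z}_t)-\score(z_i\given z^*)}^2\le R$, and (ii) $z^*$ lies in the confidence set $\Z_t$ defined in Algorithm~\ref{alg:OPTIMISM}.

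The $\alpha^*$-large self-evaluations property follows immediately from (ii): since $z^*\in\Z_t$ and Algorithm~\ref{alg:OPTIMISM} chooses $z_t=\argmax_{z\in\Z_t}\score(z\given z)$, we have $\score(z_t\given z_t)\ge\score(z^*\given z^*)=\alpha^*$. This argument does not depend on knowing $\alpha^*$ in advance, which is the whole point of the optimistic construction.

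For the decaying estimation error, the idea is to ``route'' the comparison of $z_t$ to $z^*$ through $\hat{z}_t$. Since $z_t\in\Z_t$, the definition of $\Z_t$ directly gives $\sum_{i=1}^{t-1}\bigParens{\score(z_i\given z_t)-\score(z_i\given\hat{z}_t)}^2\le R$. Combining this with conclusion (i) of the theorem via the elementary inequality $(a+b)^2\le 2a^2+2b^2$ yields
\[
\sum_{i=1}^{t-1}\BigParens{\score(z_i\given z_t)-\score(z_i\given z^*)}^2
\le
2\sum_{i=1}^{t-1}\BigParens{\score(z_i\given z_t)-\score(z_i\given\hat{z}_t)}^2
+
2\sum_{i=1}^{t-1}\BigParens{\score(z_i\given\hat{z}_t)-\score(z_i\given z^*)}^2
\le 4R,
\]
which establishes the decaying estimation error property with $C_{T,\delta}=4R=512\ln\bigParens{2T\mathcal{N}(\Z,4/T)/\delta}$, matching the claim. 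The sublinearity $C_{T,\delta}=o(T)$ follows from the hypothesis $\mathcal{N}(\Z,4/T)=e^{o(T)}$.

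There is essentially no technical obstacle here, as Theorem~\ref{thm::new_least_squares_estimator} does all the probabilistic work; the only small subtlety is ensuring that the \emph{same} high-probability event delivers both (i) the least-squares error bound at the minimizer $\hat{z}_t$ and (ii) the inclusion $z^*\in\Z_t$, which is precisely what the theorem's ``both conclusions hold simultaneously with probability $1-\delta$'' phrasing guarantees. Once this is in hand, the optimism argument and the triangle-inequality step are immediate.
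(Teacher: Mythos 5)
Your proof is correct and follows essentially the same route as the paper's: apply Theorem~\ref{thm::new_least_squares_estimator} with $\tilde{\Z}=\Z$, $\beta=2$, $\beta'=4$ to get $z^*\in\Z_t$ (which gives $\alpha^*$-large self-evaluations by optimism), then route the comparison of $z_t$ to $z^*$ through $\hat{z}_t$ to get the decaying error bound $4R$. The only cosmetic difference is that you combine the two terms via $(a+b)^2\le 2a^2+2b^2$ while the paper applies the $\ell_2$ triangle inequality and then squares; both yield exactly $4R$.
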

\begin{proof}
We apply Theorem~\ref{thm::new_least_squares_estimator} with $\tilde{\Z} = \Z$, $\beta=2$ and $\beta'=4$. Our choice of $R$ in Algorithm~\ref{alg:OPTIMISM} coincides with the value of $C_{T,\delta}$ appearing in Theorem~\ref{thm::new_least_squares_estimator}, and therefore the theorem implies that $z^*\in\Z_t$ with probability at least $1-\delta$ for all $t\in[T]$. In that case the queries $z_t$ issued by the algorithm satisfy
$
 \score(z_t\given z_t)\ge\score(z^*\given z^*)=\alpha^*
$
and thus the algorithm has $\alpha^*$-large self-evaluations.

For the second part, the triangle inequality implies that
with probability at least $1-\delta$ for all $t\in[T]$,
\begin{align}
\notag
  \sqrt{
    \sum_{i=1}^{t-1} \BigParens{\score(z_i\given z) - \score(z_i\given z_t )}^2
  }
&\leq
  \sqrt{
    \sum_{i=1}^{t-1} \BigParens{\score(z_i\given z) - \score(z_i\given\hat{z}_t )}^2
  }
  +
  \sqrt{
    \sum_{i=1}^{t-1} \BigParens{\score(z_i\given \hat{z}_t) - \score(z_i\given z_t )}^2
  }
\\
\notag
&\le
  \sqrt{R}+\sqrt{R},
\end{align}
where the bound on the first term on the right-hand side follows by Theorem~\ref{thm::new_least_squares_estimator} and the bound on the second term by the fact that $z_t\in\Z_t$.
\end{proof}

\subsection{Proof of Theorem \ref{thm:C_T_delta_finite_Z}}
The theorem follows immediately from Corollary~\ref{cor:alg:MAIN1},
because
$\mathcal{N}(\Z_\alpha, \epsilon) \le |\Z_\alpha|\le |\Z|<\infty$ for any $\alpha$ and $\epsilon$.

\subsection{Online Regression Oracles}\label{appendix::regression_oracle}

 We assume access to an \emph{online regression oracle} $\AlgOLS$, which solves a regression problem over a function class $\Phi=\set{\phi_z: z\in\Z}$ indexed by $z\in\Z$, where $\phi_z:\Z\to\reals$ is defined as $\phi_z(z')=\score(z'\given z)$ for all $z'\in\Z$; that is, functions $\phi_z$ evaluate $\score$ in its first argument.
 
 The oracle operates in the following protocol: In each time step, the oracle receives an observation $z_t$, produces a prediction $\widehat{\score}_{t} \in \mathbb{R}$, and finally receives a response $r_t$ and incurs square loss $(\widehat{\score}_{t}-r_t)^2$.
 We assume that for any $T$ and sequence of observations and responses (even if generated adaptively), the oracle satisfies the following regret bound:
 \begin{equation}
 \label{equation::regression_oracle_regret_guarantees}
  \sum_{t=1}^T (\widehat{\score}_{t}-  \reward_{t})^2 - \inf_{z \in \Z} \sum_{t=1}^T (\score( z_{t} \given z) -\reward_{t}  )^2 \leq \RegretOLS{T},
 \end{equation}
 where $\RegretOLS{\cdot}$ is a non-decreasing sublinear function (that typically also depends on various properties of $\rho$, $\Z$, and the range of responses $r_t$).
 For many function classes,
 there are well-known constructions of online regression oracles that satisfy Eq.~\eqref{equation::regression_oracle_regret_guarantees}~\citep{cover1991universal,vovk1998game,kalai2002efficient}.
 For example, if $\Phi$ is finite, there are oracles with $\RegretOLS{T} = O(\ln\card{\Phi})$ and for parametric classes, such as linear functions, there are oracles with $\RegretOLS{T} = O(d\log(T/d))$. More examples can be found in Section~2.2 of~\citet{foster2020beyond}.

\begin{algorithm}[H]
    \caption{Interactive Estimation via Least Squares}\label{alg:MAINonlineregression}
    \begin{algorithmic}[1]
        \STATE \textbf{Input:} online regression oracle $\AlgOLS$, optimality level $\alpha$.
        \STATE Initialize $z_1$ to an arbitrary element of $\Z_\alpha$.
        \FOR{$t=1$  \ldots  $T$}
        \STATE Use $\AlgOLS$ to predict $\widehat{\score}_{t}$ given the observation $z_t$.
        \STATE Observe reward $\reward_{t+1}$ and pass it to $\AlgOLS$.
        \STATE  Set
        $\displaystyle
        z_{t+1}=  \argmin_{z \in \Z_\alpha}\sum_{i=1}^{t} \Big(\score(z_i\given z) - \widehat{\score}_i\Big)^2
        $.
        \label{step:regoracle}
        \ENDFOR
    \end{algorithmic}
\end{algorithm}

We now analyze Algorithm~\ref{alg:MAINonlineregression} under the assumption of access to an online regression oracle. This algorithm takes as input an online regression oracle $\AlgOLS$. Algorithm~\ref{alg:MAINonlineregression} can also be modified using the optimistic least squares template of~\citet{russo2013eluder} to handle the case when $\alpha^*$ is unknown. This is done by replacing step~\ref{step:regoracle} of Algorithm~\ref{alg:MAINonlineregression} with the following two steps:
\begin{align*}
   \Z_{t+1}&=\BigBraces{z\in\Z:\:\smash[t]{\sum_{i=1}^{t}} (  \score(z_i \given z) - \widehat{\score}_i)^2 \leq R},
\\
   z_{t+1} &= \argmax_{z\in\Z_{t+1}} \score(z\given z),
\end{align*}
where $R =  8 \RegretOLS{T} + 64\beta \max\set{\beta, \beta'}\ln\bigParens{\frac{T}{\delta}}$ and $\beta, \beta'$ are defined as in Lemma~\ref{lemma::online_to_confidence_bound}. The results of Lemma~\ref{lemma::online_to_confidence_bound} justify the validity of these choices (using a similar reasoning as in Corollary~\ref{cor:alg:OPTIMISM}) and
imply that Algorithm~\ref{alg:MAINonlineregression} satisfies the decaying estimation error property (Definition~\ref{assume:est:error}), provided that the regression oracle $\AlgOLS$ satisfies the regret bound of Eq.~\eqref{equation::regression_oracle_regret_guarantees}.

\begin{lemma}\label{lemma::online_to_confidence_bound}
Consider the setting defined in Section \ref{sec:setting} with $\alpha\le\alpha^*$, and assume there are $\beta, \beta' \ge 0$ such that $|r_t - \E[r_t\given z_t]|\le \beta$ for all $t$ and there is $\beta'\ge 2\beta$ s.t. for all $z \in \Z$ and $\widehat{\rho} \in \Gamma_z$,  $ \big| \widehat{\rho} - \score(z\given z^*) \big| \le \beta'$ where $\Gamma_z \subset \mathbb{R}$ is the space of plausible responses of $\AlgOLS$ for input $z$.
The sequence of queries $z_1, \dotsc, z_T$ as defined in Algorithm~\ref{alg:MAINonlineregression} satisfies with probability at least $1-\delta$ for all $t \in [T]$ simultaneously,
\begin{align*}
\sum_{i=1}^{t} \left( \score(z_i \given z_{t+1}) - \score(z_i \given z^*) \right)^2 &\leq C_{T,\delta} \quad \text{ and }
\quad z^* \in
\BigBraces{ z\in\Z_\alpha:\: \sum_{i=1}^{t} (  \score(z_i \given z) - \widehat{\score}_i)^2 \leq C_{T,\delta}}
\end{align*}
where $C_{T,\delta} =  8\RegretOLS{T} + 64\beta \max\set{\beta, \beta'}\ln\bigParens{\frac{T}{\delta}}$.
\end{lemma}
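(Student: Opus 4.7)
The strategy mirrors the analysis of Theorem~\ref{thm::new_least_squares_estimator}: first convert the oracle's pathwise regret bound into an $\ell_2$ estimation guarantee for the predictions $\widehat{\score}_i$ against the ground-truth means $\score(z_i\given z^*)$, and then transfer this guarantee to the least-squares minimizer $z_{t+1}$ via its optimality and an $\ell_2$ triangle inequality. A useful simplification relative to Theorem~\ref{thm::new_least_squares_estimator} is that no covering argument is required here: $\widehat{\score}_i$ is produced by the oracle based only on the history $h_i=(z_1,r_1,\dotsc,z_{i-1},r_{i-1},z_i)$ available \emph{before} $r_i$ is revealed, so it is predictable and Freedman's inequality applies directly to a single scalar-valued martingale.

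First I would note that $z^*\in\Z_\alpha$ since $\score(z^*\given z^*)=\alpha^*\ge\alpha$, so $z^*$ is a valid comparator. Applying Eq.~\eqref{equation::regression_oracle_regret_guarantees} at horizon $t\le T$ against this comparator (using monotonicity of $\RegretOLS{\cdot}$), and expanding $r_i=\score(z_i\given z^*)+\xi_i$ with $\xi_i=r_i-\E[r_i\mid h_i]$, the common $\sum\xi_i^2$ cancels on both sides and rearranging gives
\[
A_t \;:=\; \sum_{i=1}^{t}\bigParens{\widehat{\score}_i-\score(z_i\given z^*)}^2 \;\le\; 2\sum_{i=1}^{t}\xi_i\bigParens{\widehat{\score}_i-\score(z_i\given z^*)} + \RegretOLS{T}.
\]
Each term $X_i:=\xi_i\bigParens{\widehat{\score}_i-\score(z_i\given z^*)}$ is a martingale difference with $|X_i|\le\beta\beta'$ and conditional second moment at most $\beta^2\bigParens{\widehat{\score}_i-\score(z_i\given z^*)}^2$. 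Applying Lemma~\ref{lemma:super_simplified_freedman} with $\eta=1/(4\beta\beta')$ and a union bound over $t\in[T]$ with per-step failure probability $\delta/T$ yields, with probability at least $1-\delta$, $\sum_{i=1}^{t}X_i\le \tfrac{\beta}{4\beta'}A_t + 4\beta\beta'\ln(T/\delta)$ simultaneously for all $t$. Plugging this back in and using $\beta'\ge2\beta$ to absorb the $A_t$ term on the right (since its coefficient is at most $1/2$), I obtain $A_t\le 2\RegretOLS{T}+16\beta\beta'\ln(T/\delta)\le C_{T,\delta}$, establishing the confidence-set inclusion $z^*\in\{z\in\Z_\alpha:\sum_{i=1}^{t}(\score(z_i\given z)-\widehat{\score}_i)^2\le C_{T,\delta}\}$.

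For the first claim, the definition of $z_{t+1}$ as the minimizer of $\sum_{i=1}^{t}\bigParens{\score(z_i\given z)-\widehat{\score}_i}^2$ over $\Z_\alpha$, combined with $z^*\in\Z_\alpha$, gives $\sum_{i=1}^{t}\bigParens{\score(z_i\given z_{t+1})-\widehat{\score}_i}^2\le A_t$. An $\ell_2$ triangle inequality then yields
\[
\sqrt{\sum_{i=1}^{t}\bigParens{\score(z_i\given z_{t+1})-\score(z_i\given z^*)}^2} \;\le\; \sqrt{A_t}+\sqrt{A_t}\;=\;2\sqrt{A_t},
\]
so the squared sum is at most $4A_t\le 8\RegretOLS{T}+64\beta\beta'\ln(T/\delta)=C_{T,\delta}$, using $\max\{\beta,\beta'\}=\beta'$. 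The main obstacle will be tuning $\eta$ in Freedman's inequality so that the coefficient of $A_t$ appearing on the right is strictly smaller than one; the hypothesis $\beta'\ge2\beta$ is exactly what makes this absorption work with the stated constants.
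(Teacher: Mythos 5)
Your proof is correct and follows essentially the same route as the paper's: convert the oracle regret bound into a pathwise $\ell_2$ guarantee on $\widehat{\score}_i$ via Freedman's inequality applied to the martingale difference $\xi_i(\widehat{\score}_i-\score(z_i\given z^*))$, then transfer to $z_{t+1}$ through optimality and the $\ell_2$ triangle inequality. Your explicit observation that no covering argument is needed because $\widehat{\score}_i$ is predictable is a correct and helpful clarification of the structural difference from Theorem~\ref{thm::new_least_squares_estimator}, and you also make explicit the fact $z^*\in\Z_\alpha$, which the paper uses only implicitly when invoking the optimality of $z_{t+1}$ over $\Z_\alpha$.
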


\begin{proof}
Let $\xi_i = \reward_i - \score( z_i \given z^*) $. Recall that $\E[r_i \given z_i] = \score(z_i \given z^*)$ and therefore, by assumption, $|\xi_i| \leq \beta$ for all $i$. By definition, the online regression oracle satisfies
\begin{align}
\sum_{i=1}^{t} (\widehat{\score}_i -  \reward_i)^2 &\leq  \inf_{z \in \Z} \sum_{i=1}^{t} (\score( z_i \given z) -\reward_i  )^2 + \RegretOLS{t}\notag \\
&\stackrel{(i)}{\leq} \sum_{i=1}^{t} (\score( z_i \given z^*) -\reward_i  )^2 + \RegretOLS{t} \notag\\
&= \sum_{i=1}^{t} \xi_i^2 + \RegretOLS{t}. \label{equation::beginning_reg_oracle_eq}
\end{align}
Inequality $(i)$ holds because $z^* \in \Z$. Expanding the LHS,
\begin{equation*}
\sum_{i=1}^{t} (\widehat{\score}_i -  \reward_i)^2 = \sum_{i=1}^{t}
\BigBracks{(\widehat{\score}_i -  \score(z_i \given z^*))^2 - 2\xi_i ( \widehat{\score}_i -  \score(z_i \given z^*)  ) + \xi_i^2}.
\end{equation*}

Plugging this back into Eq.~\eqref{equation::beginning_reg_oracle_eq} and rearranging,
we obtain
\begin{equation}\label{eq::helper_eq_regoracle}
\sum_{i=1}^{t} (\widehat{\score}_i -  \score(z_i \given z^*))^2\leq \sum_{i=1}^{t} 2\xi_i ( \widehat{\score}_i -  \score(z_i \given z^*)  )  + \RegretOLS{t}.
\end{equation}
For any $i \in [T]$ we define:
\begin{equation*}
 K_i =  \xi_i ( \widehat{\score}_i -  \score(z_i \given z^*)  )
\end{equation*}
 Observe that %
\begin{equation*}
\mathbb{E}\left[ K_i   \given \{ z_\ell, \widehat{\score}_\ell \}_{\ell=1}^{i} \right] = 0.
\end{equation*}

Thus $K_1, \dotsc, K_T$ is a martingale difference sequence.
Notice that $| K_i | \leq \beta \beta'$ and that
\begin{equation*}
\mathbb{E}\bigBracks{K_i^2\bigGiven\{ z_\ell, \widehat{\score}_\ell \}_{\ell=1}^{i}} \le \beta^2 \mathbb{E}\Big[\bigl( \widehat{\score}_i - \score(z_i\given z^*) \bigr)^2\bigGiven\{ z_\ell, \widehat{\score}_\ell \}_{\ell=1}^{i}\Big] =  \beta^2 \bigl( \widehat{\score}_i  - \score(z_i\given z^*) \bigr)^2.
\end{equation*}
 Then, by plugging this into Freedman's inequality (Lemma~\ref{lemma:super_simplified_freedman}) with $\eta:=\frac{1}{4}\min(1/\beta^2, 1/\beta\beta')$ and $\delta':=\delta/T$, we get that for any fixed $t \in [T]$, with probability at least $1-\delta$,
\begin{equation}\label{eq::free_ineq_regoracle}
    \sum_{i=1}^{t} \xi_i \bigl( \widehat{\score}_i - \score(z_i\given z^*) \bigr) \leq      \frac{1}{4}\sum_{i=1}^{t} \bigl( \widehat{\score}_i  - \score(z_i\given z^*) \bigr)^2 + 4 \beta \max\set{\beta, \beta'} \ln\Parens{\frac{T}{\delta}}.
\end{equation}

Plugging Eq.~\eqref{eq::free_ineq_regoracle} back into Eq.~\eqref{eq::helper_eq_regoracle} and rearranging terms yields
\begin{equation}\label{equation::main_helper_inequalityregoracle}
\sum_{i=1}^{t} (\widehat{\score}_i -  \score(z_i \given z^*))^2 \leq 2 \RegretOLS{t}
+ 16\beta\max\set{\beta, \beta'} \ln\Parens{\frac{T}{\delta}}
\end{equation}
with probability at least $1-\delta$ for all $t \in [T]$. Thus we conclude that with probability at least $1-\delta$, for all $t \in [T]$,
\[
z^* \in
  \BigBraces{ z\in\Z_\alpha:\: \sum_{i=1}^{t} (  \score(z_i \given z) - \widehat{\score}_i)^2 \leq C_{T,\delta}
  }.
\]
By the triangle inequality,
\begin{equation*}
\sqrt{ \sum_{i=1}^{t} ( \score(z_i \given z^* ) - \score(z_i \given z_{t+1}))^2  } \leq \sqrt{ \sum_{i=1}^{t} ( \score(z_i \given z^* ) - \widehat{\score}_i)^2  }  + \sqrt{ \sum_{i=1}^{t} (  \widehat{\score}_i - \score(z_i \given z_{t+1}))^2  }.
\end{equation*}
Since by definition $z_{t+1} = \argmin_{z\in \Z_\alpha} \sum_{i=1}^{t} \Big(\score(z_i\given z) - \widehat{\score}_i\Big)^2$, we have
\[
  \sum_{i=1}^{t} \Big(\score(z_i\given z_{t+1}) - \widehat{\score}_i\Big)^2 \leq \sum_{i=1}^{t} \Big(\score(z_i\given z^*) - \widehat{\score}_i\Big)^2.
\]
Substituting back into the triangle inequality above,
\begin{equation*}
\sqrt{ \sum_{i=1}^{t} ( \score(z_i \given z^* ) - \score(z_i \given z_{t+1}))^2  } \leq 2\sqrt{ \sum_{i=1}^{t} ( \score(z_i \given z^* ) - \widehat{\score}_i)^2  },
\end{equation*}
implying
\begin{equation*}
 \sum_{i=1}^{t} ( \score(z_i \given z^* ) - \score(z_i \given z_{t+1}))^2   \leq 4 \sum_{i=1}^{t} ( \score(z_i \given z^* ) - \widehat{\score}_i)^2.
\end{equation*}
Plugging Eq.~\eqref{equation::main_helper_inequalityregoracle} on the right-hand side yields
\begin{equation*}
 \sum_{i=1}^{t} ( \score(z_i \given z^* ) - \score(z_i \given z_{t+1}))^2 \leq 8 \RegretOLS{t} + 64 \beta \max\set{\beta, \beta'} \ln\Parens{\frac{T}{\delta}}.
\end{equation*}
The result follows by using the monotonicity of $\RegretOLS{t}$.
\end{proof}

\subsection{Proof of Lemma \ref{lem:few_bad_queries}} \label{appendix:few_bad_queries_pf}

The proof uses Tur\'an's Theorem~\citep{turaan1941extremal}, a standard result from extremal graph theory that bounds the number of edges of a graph that does not contain a clique of a given size:
\begin{theorem}[Tur\'an's Theorem]
\label{theorem::turan}
Let $G=(V,E)$ be an undirected graph without self-loops and whose largest clique is of size at most $d$. Then
\[
  \Card{E}\le\Parens{1-\frac1d}\frac{\card{V}^2}{2}.
\]
\end{theorem}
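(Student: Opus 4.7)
I will prove this version of Tur\'an's theorem via the classical Caro--Wei-style probabilistic argument, which produces exactly the bound $(1-1/d)\card{V}^2/2$ stated in the theorem. The idea is to exhibit a clique whose expected size is $\sum_v 1/(n-\deg(v))$, then squeeze this sum between the clique-number hypothesis and a Cauchy--Schwarz lower bound.

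Concretely, let $n = \card{V}$ and draw a uniformly random permutation $\pi$ of $V$. Greedily build a set $C \subseteq V$ by scanning vertices in the order of $\pi$ and including $v$ in $C$ iff $v$ is adjacent to every vertex already in $C$. By construction $C$ is always a clique in $G$, so the hypothesis on the largest clique gives $\card{C} \le d$ deterministically. A convenient reformulation of the inclusion rule is that $v\in C$ iff $v$ precedes all of its non-neighbors (vertices $u\ne v$ with $\{u,v\}\notin E$) in $\pi$. Letting $S_v$ denote $v$ together with its non-neighbors, we have $\card{S_v}=n-\deg(v)$ (using the no-self-loops assumption so that $\deg(v)\le n-1$, hence $n-\deg(v)\ge 1$), and $v\in C$ iff $v$ is first within $S_v$ under $\pi$, an event of probability $1/(n-\deg(v))$. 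By linearity of expectation,
\[
\sum_{v\in V}\frac{1}{n-\deg(v)} \;=\; \E[\card{C}] \;\le\; d.
\]

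Next, I will lower-bound the same sum using the Cauchy--Schwarz inequality in AM--HM form: for any positive reals $a_1,\dotsc,a_n$, $(\sum_v a_v)(\sum_v 1/a_v)\ge n^2$. Applied with $a_v=n-\deg(v)$ and using the handshake identity $\sum_v \deg(v)=2\Card{E}$, which gives $\sum_v(n-\deg(v))=n^2-2\Card{E}$, this yields
\[
\sum_{v\in V}\frac{1}{n-\deg(v)} \;\ge\; \frac{n^2}{n^2-2\Card{E}}.
\]
Chaining the two displays gives $n^2\le d\bigParens{n^2-2\Card{E}}$, which rearranges to the desired bound $\Card{E}\le (1-1/d)\,n^2/2$.

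The argument has no real technical obstacles. The only minor checks are (i) positivity of $n-\deg(v)$, which follows from the no-self-loops hypothesis, and (ii) the degenerate case $n^2=2\Card{E}$ (i.e., $G$ complete), where one must have $d=n$ and both sides reduce to $n(n-1)/2$, so the bound holds with equality and the division-by-zero issue does not arise. The heart of the proof is the two-line identification of $\Pr[v\in C]$ and the standard Cauchy--Schwarz step; the rest is algebraic rearrangement.
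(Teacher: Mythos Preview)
The paper does not prove Tur\'an's theorem; it is stated as a classical cited result and used as a black box. So there is no paper proof to compare against, and your proposal supplies a standard probabilistic proof.

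Your overall strategy is correct, but one step is misstated. The greedy rule ``include $v$ iff $v$ is adjacent to every vertex already in $C$'' is \emph{not} equivalent to ``$v$ precedes all of its non-neighbors in $\pi$.'' Counterexample: take $V=\{a,b,c,d\}$ with edges $\{a,b\}$ and $\{c,d\}$, and $\pi=(a,c,b,d)$. The greedy procedure gives $C=\{a,b\}$ (after $c$ is rejected, $b$ is adjacent to everything currently in $C=\{a\}$ and is accepted), yet $b$ does \emph{not} precede its non-neighbor $c$, so under your claimed reformulation $b\notin C$. In general the ``precedes all non-neighbors'' set is only a subset of the greedy clique.

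The fix is easy: drop the greedy description and define $C=\{v:\ v\text{ precedes every non-neighbor of }v\text{ in }\pi\}$ directly. This $C$ is a clique (if $u,v\in C$ with $u$ before $v$ and $u\not\sim v$, then $u$ is a non-neighbor of $v$ preceding $v$, contradicting $v\in C$), so $\card{C}\le d$ and $\Pr[v\in C]=1/(n-\deg(v))$ exactly as you compute. Alternatively, keep the greedy $C$ and observe it \emph{contains} the ``precedes'' set, whence $d\ge\E\card{C}\ge\sum_v 1/(n-\deg(v))$, which is all you need. The Cauchy--Schwarz step and the final algebra are correct. (Your worry about $n^2=2\Card{E}$ is moot: in a simple graph $2\Card{E}\le n(n-1)<n^2$, so the denominator is always positive.)
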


We now turn to the proof of Lemma~\ref{lem:few_bad_queries}. First note that if $\epsilon\ge 1+\alpha$ then no query is $\epsilon$-bad, because $\alpha-\epsilon\le -1\le\score(z\given z^*)$ for every $z\in\Z$,
and therefore the lemma holds. In the remainder of the proof, we assume that $0<\epsilon<1+\alpha$.

Consider the queries $z_1,\dotsc,z_T$ and their corresponding values relative to $z^*$, denoted as $v_t=\score(z_t\given z^*)$ for $t\in[T]$. A query $z_t$ is $\epsilon$-bad if its corresponding value $v_t$ is in the interval $I=[-1,\alpha-\epsilon)$. The proof proceeds by partitioning the interval $I$ into subintervals and separately bounding the number of values $v_t$ in each subinterval.

To define these subintervals, let $q=1+\smash{\frac{1}{\sqrt{d}}}$, and consider the sequence of suboptimality gaps $\epsilon_i=q^{i-1}\epsilon$ for $i=1,\dotsc,n+1$, where
\[
  n=\Ceil{\log_q\Parens{\frac{1+\alpha}{\epsilon}}}.
\]
The gaps $\epsilon_i$ form an increasing sequence $\epsilon, q\epsilon, q^2\epsilon,\dotsc$ such that the last element satisfies
\begin{align*}
  \epsilon_{n+1}&=q^n\epsilon\ge\Parens{\frac{1+\alpha}{\epsilon}}\epsilon=1+\alpha.
\end{align*}
Using these gaps we define intervals $I_i=[\alpha-\epsilon_{i+1},\alpha-\epsilon_i)$ for $i=1,\dots,n$.
Since $\epsilon_{n+1}\ge 1+\alpha$, the union $I_1\cup\dots\cup I_n=[\alpha-\epsilon_{n+1},\alpha-\epsilon)$ covers the interval $I$. We bound the number of values $v_t$ in each interval $I_i$.

Let $S_i$ be the set of query indices with values in $I_i$, that is $S_i=\bigSet{t\in[T]:\:\score(z_t\given z^*)\in I_i}$, let $m_i=\card{S_i}$, and assume that $m_i\ge 2$ (the case $m_i\le 1$ will be dealt with later).
Furthermore, let $c_i=\alpha-(\epsilon_{i+1}+\epsilon_i)/2$ be the midpoint of the interval $I_i$. Since the width of the interval $I_i$ is $\epsilon_{i+1}-\epsilon_i=(q-1)\epsilon_i=\epsilon_i/\sqrt{d}$, we obtain
\begin{equation}
\label{eq:ci:bound}
  \bigAbs{\score(z_t\given z^*)-c_i}\le\frac{\epsilon_i}{2\sqrt{d}}
\end{equation}
for all $t\in S_i$.

Let $d_i=\d_\score(\Z, \alpha, \epsilon_i)$ be the (non-monotonic) \newdim dimension with respect to the suboptimality gap $\epsilon_i$. Since $\alpha\le\alpha^*$ and $\epsilon_i\ge\epsilon$, we have $1\le d_i\le d$. We construct an upper bound on~$m_i$, exploiting the fact that $d_i$ is the dissimilarity dimension with respect to~$\epsilon_i$.

In the rest of the proof we refer to a pair of queries with indices $s,t\in S_i$ such that $s<t$ as \emph{dissimilar} if
\[
  \bigAbs{\score(z_{s}\given z_t)-c_i}\le\smash[t]{\frac{\epsilon_i}{\sqrt{d_i}}}.
\]
This is exactly the property appearing in the definition of the dissimilarity dimension with respect to~$\epsilon_i$, and so there cannot be more than $d_i$ queries such that every pair is dissimilar (note that all the queries $z_t$ satisfy $\score(z_t\given z_t)\ge\alpha$ thanks to $\alpha$-large self-evaluations).

The derivation of the bound on $m_i$ proceeds in several steps. First, we identify pairs of dissimilar queries and construct a graph where each edge corresponds to a dissimilar pair.
Second, we use Tur\'an's Theorem (Theorem~\ref{theorem::turan}) to upper bound the number of such pairs, using the fact that the graph cannot contain a clique of size greater than $d_i$. Finally, using the bound on the number of dissimilar pairs, we bound $m_i$.

To start, let $t_1<t_2<\dotsc<t_{m_i}$ be the query indices included in $S_i$, and let $k\in\set{2,\dotsc,m_i}$. Consider a uniform distribution over $\ell\in[k-1]$. Then by Markov's inequality and the decaying estimation error property, we obtain
\begin{align*}
  \frac{1}{k-1}\sum_{\ell=1}^{k-1}
  \one\BiggBracks{
    \BigParens{\score(z_{t_\ell}\given z_{t_k}) - \score(z_{t_\ell}\given z^*)}^2
    \!
    \ge
    \frac{\epsilon_i^2}{4d}
  }
&\le
 \BiggBracks{
 \frac{1}{k-1}\sum_{\ell=1}^{k-1}
    \BigParens{\score(z_{t_\ell}\given z_{t_k}) - \score(z_{t_\ell}\given z^*)}^2
 }
 \!
 \cdot
 \frac{4d}{\epsilon_i^2}
\\
&\le
 \BiggBracks{
 \frac{1}{k-1}\sum_{s=1}^{t_k-1}
    \BigParens{\score(z_{s}\given z_{t_k}) - \score(z_{s}\given z^*)}^2
 }
 \!
 \cdot
 \frac{4d}{\epsilon_i^2}
\\
&\le
 \frac{C_{T,\delta}}{k-1}
\cdot
 \frac{4d}{\epsilon_i^2}.
\end{align*}
Multiplying by $k-1$, we therefore obtain
\[
  \Card{\Set{
    s\in S_i:\:
    s<t_k
    \text{ and }
    \BigAbs{\score(z_{s}\given z_{t_k}) - \score(z_{s}\given z^*)}
    \ge
    \frac{\epsilon_i}{2\sqrt{d}}
  }}
  \le
  \frac{4dC_{T,\delta}}{\epsilon_i^2},
\]
and summing across all $k\in\set{2,\dotsc,m_i}$ then yields
\begin{equation}
\label{eq:complement-E}
  \Card{\Set{
    s,t\in S_i:\:
    s<t
    \text{ and }
    \BigAbs{\score(z_{s}\given z_{t}) - \score(z_{s}\given z^*)}
    \ge
    \frac{\epsilon_i}{2\sqrt{d}}
  }}
  \le
  m_i\frac{4dC_{T,\delta}}{\epsilon_i^2}.
\end{equation}

We next construct an undirected graph without self-loops, $G_i=(V_i,E_i)$. The vertex set of the graph is $V_i=S_i$. The edge set is defined to be
\[
  E_i=\Set{
    \set{t,s}\subseteq S_i:\:
    s<t
    \text{ and }
    \BigAbs{\score(z_{s}\given z_{t}) - \score(z_{s}\given z^*)}
    <
    \frac{\epsilon_i}{2\sqrt{d}}
  }.
\]
By comparing with Eq.~\eqref{eq:complement-E}, we obtain
\begin{align}
\label{eq:bound-E}
  \card{E_i}
&\ge
  \frac{m_i(m_i-1)}{2}
  -
  m_i\frac{4dC_{T,\delta}}{\epsilon_i^2}.
\end{align}
Note that any pair of vertices $s<t$ connected by an edge corresponds to a dissimilar pair of queries:
\begin{align*}
  \bigAbs{\score(z_{s}\given z_{t}) - c_i}
&\le
  \bigAbs{\score(z_{s}\given z_{t}) - \score(z_{s}\given z^*)}
  +
  \bigAbs{\score(z_{s}\given z^*)-c_i}
\\
&<
  \frac{\epsilon_i}{2\sqrt{d}}
  +
  \frac{\epsilon_i}{2\sqrt{d}}
\\
&\le
  \frac{\epsilon_i}{\sqrt{d_i}},
\end{align*}
where the first inequality is the triangular inequality, the second inequality follows by
combining the definition of $E_i$ and Eq.~\eqref{eq:ci:bound}, and the final one is from the fact that
$d_i\le d$. From the definition of the dissimilarity coefficient, the largest clique in $G_i$ is of size at most $d_i$. Using Tur\'an's Theorem,
we thus must have
\[
  \card{E_i}\le\Parens{1-\frac{1}{d_i}}\cdot\frac{m_i^2}{2}
            \le\Parens{1-\frac{1}{d}}\cdot\frac{m_i^2}{2}.
\]
Combining with the lower bound on $\card{E_i}$ from Eq.~\eqref{eq:bound-E}, we obtain
\[
  \frac{m_i(m_i-1)}{2}
  -
  m_i\frac{4dC_{T,\delta}}{\epsilon_i^2}
  \le
  \Parens{1-\frac{1}{d}}\cdot\frac{m_i^2}{2}.
\]
Dividing by $m_i$, multiplying by $2d$, and rearranging then yields
\[
  m_i
  \le
  2d\Parens{\frac12+\frac{4dC_{T,\delta}}{\epsilon_i^2}}
  =
  d+\frac{8d^2C_{T,\delta}}{\epsilon_i^2}.
\]
We have originally assumed that $m_i\ge 2$, but the bound that we have just derived also holds when $m_i\le 1$ (because $d\ge 1$).

To complete the proof it suffices to sum up the upper bounds on $m_i$ across $i=1,\dots,n$:
\begin{align}
\notag
  \sum_{i=1}^n m_i
  &=
  \sum_{i=1}^n \BiggBracks{d+\frac{8d^2C_{T,\delta}}{\epsilon^2}\cdot\Parens{1/q^2}^{i-1}}
\\
\label{eq:sum}
  &\le
  nd + \frac{8d^2C_{T,\delta}}{\epsilon^2}\cdot\frac{1}{1-(1/q^2)}.
\end{align}
To bound $n$, we use the fact that $\alpha\le 1$, the inequality $\ln(1+x)\ge\frac{x}{1+x}$ (which holds for $x\ge 0$), and the fact that $d\ge 1$:
\begin{align*}
  n
&\le 1 +\log_q(2/\epsilon)
\\
&
 =   1+\frac{\ln(2/\epsilon)}{\ln\bigParens{1+\frac{1}{\sqrt{d}}}}
 \le 1+[\ln(2/\epsilon)]\cdot\frac{1+\frac{1}{\sqrt{d}}}{\frac{1}{\sqrt{d}}}
  =  1+(\sqrt{d}+1)\ln(2/\epsilon)
\\
&\le 2\ln 2 + 2\sqrt{d}\ln(2/\epsilon)
 \le 2\sqrt{d}\ln(4/\epsilon).
\end{align*}
Also,
\[
  1-\frac{1}{q^2}
  =
  1-\frac{1}{1+\frac{2}{\sqrt{d}}+\frac{1}{d}}
  \ge
  1-\frac{1}{1+\frac{2}{\sqrt{d}}}
  =
  \frac{\frac{2}{\sqrt{d}}}{1+\frac{2}{\sqrt{d}}}
  =
  \frac{2}{\sqrt{d}+2}
  \ge
  \frac{2}{3\sqrt{d}}.
\]
Plugging these back in Eq.~\eqref{eq:sum} yields
\begin{align}
\notag
  \sum_{i=1}^n m_i
  &\le
  2d^{1.5}\ln(4/\epsilon)
  +
  \frac{12d^{2.5}C_{T,\delta}}{\epsilon^2},
\end{align}
completing the proof of the main claim of the lemma.

The second claim holds vacuously when $T=0$, so assume that $T\ge 1$. If $C_{T,\delta}\ge\ln(2T)$, and using the fact that $(\ln x)\le x$ and $2\le3\ln2$, we can write
\[
2d^{1.5}\ln(4/\epsilon)
=
 d^{1.5}\ln(16/\epsilon^2)
\le
 \frac{16d^{1.5}}{\epsilon^2}
\le
 \frac{24(\ln 2)d^{1.5}}{\epsilon^2}
\le
 \frac{24d^{2.5}}{\epsilon^2}\cdot\ln(2T)
\le
  \frac{24d^{2.5}C_{T,\delta}}{\epsilon^2},
\]
which yields the first part of the second claim. The second part is immediate by plugging in $C_{T,\delta}=0$ in the main claim.

\subsection{Useful lemmas}

In this subsection we prove two lemmas that will be needed for the proofs of the main results (Theorems \ref{thm:regret_optimism} and \ref{thm:main_pac}) in Appendices~\ref{appendix:pf_main_regret} and~\ref{appendix:pf_main_pac}. They both rely on a standard technique of bounding a sum by a definite integral:

\begin{proposition}
\label{prop:int}
Let $f:\reals\to\reals$ be a non-increasing function and $T\ge 1$. Then
\[
  \sum_{t=1}^T f(t)
  \le
  f(1)+\int_1^T f(t)\mathrm{d}t.
\]
\end{proposition}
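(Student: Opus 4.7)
The plan is to use the standard comparison between a sum and a Riemann-style lower approximation of the integral, exploiting monotonicity of $f$. The key observation is that for every integer $t\ge 2$, because $f$ is non-increasing, the value $f(t)$ is a lower bound for $f(s)$ on the entire interval $s\in[t-1,t]$, so $f(t)$ can be rewritten as the integral of the constant $f(t)$ over $[t-1,t]$ and then bounded above by $\int_{t-1}^{t}f(s)\,\mathrm{d}s$.

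First I would handle the trivial case $T=1$, where the claim reduces to $f(1)\le f(1)+0$, which holds because the empty integral over $[1,1]$ vanishes. For $T\ge 2$, I would write
\[
f(t) = \int_{t-1}^{t} f(t)\,\mathrm{d}s \le \int_{t-1}^{t} f(s)\,\mathrm{d}s
\qquad\text{for each } t\in\{2,\dotsc,T\},
\]
using that $s\le t$ on the interval of integration together with the non-increasing property of $f$. Summing this inequality over $t=2,\dotsc,T$ and recognizing the right-hand side as a telescoping union of intervals gives
\[
\sum_{t=2}^{T} f(t) \le \sum_{t=2}^{T}\int_{t-1}^{t} f(s)\,\mathrm{d}s = \int_{1}^{T} f(s)\,\mathrm{d}s.
\]
Adding $f(1)$ to both sides yields the desired bound.

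There is no serious obstacle here: the only mild subtlety is that the proposition does not require $f$ to be integrable a priori. However, since $f$ is non-increasing on $[1,T]$ it is automatically bounded on this compact interval (being sandwiched between $f(T)$ and $f(1)$) and is Riemann-integrable (monotonic functions on a compact interval are Riemann-integrable), so the integral $\int_{1}^{T} f(s)\,\mathrm{d}s$ is well defined and the interchange of sum and integral used above is fully justified.
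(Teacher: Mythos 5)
Your proof is correct and uses exactly the same idea as the paper's one-line argument, namely bounding $f(t)\le\int_{t-1}^{t}f(s)\,\mathrm{d}s$ for $t\ge 2$ and summing; you simply spell out the steps and the integrability justification that the paper leaves implicit.
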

\begin{proof}
The proof is immediate by noting that $f(t)\le\int_{t-1}^t f(t)\mathrm{d}t$.
\end{proof}

In the lemmas below we write $\reals_+$ to denote $[0,+\infty)$.

\begin{lemma}\label{lemma::helper_regret_result}
Let $q_1, \dotsc, q_T$ be a sequence in $\reals_+$,
and let $\kappa: \reals_+\to\reals_+$ be a non-increasing function such that
for all $\epsilon  > 0$,
\begin{equation*}
\sum_{t=1}^T \mathbf{1}(q_t \geq \epsilon )   \leq \frac{  \kappa(\epsilon)}{\epsilon^2}.
\end{equation*}
Then, for any $\tau\ge 0$,
\begin{equation*}
\sum_{t=1}^T q_t \leq T\tau + 2\sqrt{  \kappa(\tau) T}.
\end{equation*}
\end{lemma}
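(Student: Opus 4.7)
The plan is to use the standard layer-cake (tail-sum) representation of a nonnegative sum together with the given tail bound. Writing $(x)_+ = \max\{x, 0\}$, I first split each term as $q_t \le \tau + (q_t - \tau)_+$, which gives
\[
\sum_{t=1}^T q_t \;\le\; T\tau + \sum_{t=1}^T (q_t - \tau)_+ \;=\; T\tau + \int_0^\infty \sum_{t=1}^T \mathbf{1}(q_t > \tau + s)\,\mathrm{d}s.
\]
Inside the integrand I have two available upper bounds on $\sum_t \mathbf{1}(q_t > \tau + s)$: the trivial $T$, and the hypothesis $\kappa(\tau+s)/(\tau+s)^2 \le \kappa(\tau)/(\tau+s)^2$, where the last inequality uses that $\kappa$ is non-increasing. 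So the integrand is bounded by $\min\{T,\ \kappa(\tau)/(\tau+s)^2\}$.

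Next I split the integration range at the point $s^\star$ where these two bounds coincide, namely $\tau + s^\star = \sqrt{\kappa(\tau)/T}$. If $\sqrt{\kappa(\tau)/T} \le \tau$, then $s^\star \le 0$ and the $\kappa/(\tau+s)^2$ bound dominates throughout, so the integral is at most $\int_0^\infty \kappa(\tau)/(\tau+s)^2\,\mathrm{d}s = \kappa(\tau)/\tau \le \sqrt{\kappa(\tau)T}$, where the last step uses $\tau \ge \sqrt{\kappa(\tau)/T}$. Otherwise $s^\star > 0$, and
\[
\int_0^{s^\star} T\,\mathrm{d}s + \int_{s^\star}^\infty \frac{\kappa(\tau)}{(\tau+s)^2}\,\mathrm{d}s
\;=\; Ts^\star + \frac{\kappa(\tau)}{\tau + s^\star}
\;=\; \bigParens{\sqrt{\kappa(\tau)T} - T\tau} + \sqrt{\kappa(\tau)T}
\;\le\; 2\sqrt{\kappa(\tau)T}.
\]
In both cases $\sum_t (q_t - \tau)_+ \le 2\sqrt{\kappa(\tau)T}$, and adding the $T\tau$ term yields the claim.

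There is no real obstacle here; the only care needed is the $\tau = 0$ edge case (then the $\kappa(\tau)/(\tau+s)^2$ bound is not integrable at $0$, and the splitting at $s^\star = \sqrt{\kappa(0)/T}$ is essential) and the monotonicity step $\kappa(\tau+s) \le \kappa(\tau)$, which is why the hypothesis on $\kappa$ is stated. Everything else is elementary calculus.
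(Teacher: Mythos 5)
Your proof is correct, and it takes a genuinely different route from the paper's. The paper sorts the sequence in decreasing order, observes that for any $k$ with $q_k>\tau$ one has $k\le\sum_t\mathbf{1}(q_t\ge q_k)\le\kappa(\tau)/q_k^2$, hence $q_k\le\sqrt{\kappa(\tau)/k}$, and then bounds the resulting sum $\sum_k\sqrt{\kappa(\tau)/k}$ by an integral comparison (Proposition~\ref{prop:int}). You instead stay with the unsorted sequence, pass immediately to the continuous layer-cake identity $\sum_t(q_t-\tau)_+=\int_0^\infty\sum_t\mathbf{1}(q_t>\tau+s)\,\mathrm{d}s$, bound the integrand by $\min\{T,\kappa(\tau)/(\tau+s)^2\}$, and split the integral at the crossover point $\tau+s^\star=\sqrt{\kappa(\tau)/T}$. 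Both arguments use the same two ingredients (the trivial bound $T$ and the tail hypothesis with $\kappa$'s monotonicity to freeze $\kappa$ at $\tau$), and both produce exactly the stated constant $2\sqrt{\kappa(\tau)T}$; yours just moves the sum-to-integral step earlier, which avoids the reordering and the discrete Riemann comparison in favor of a single closed-form integral evaluation. Your handling of the $\tau=0$ case and of the strict-versus-nonstrict inequality in the indicator is also careful and correct.
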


\begin{proof}
 First, since we are only concerned with bounding the sum $\sum_t q_t$,
 we assume without loss of generality that the sequence is in descending order, i.e., $q_1 \ge \dotsb \ge q_T$. Then, for any $\tau\ge 0$,
\begin{equation}\label{eq:helper_lem_1}
\sum_{t=1}^T q_{t} = \sum_{t=1}^T q_{t}\mathbf{1}( q_{t} \leq \tau )+  \sum_{t=1}^T q_{t}\mathbf{1}( q_{t} > \tau) \leq T\tau + \sum_{t=1}^T q_{t}\mathbf{1}( q_{t} > \tau).
\end{equation}

Consider any $k$ such that $q_k>\tau$. Since the sequence $q_{1},\dotsc, q_{T}$ is non-increasing, we have
\[
k
  \leq \sum_{t=1}^T \mathbf{1}( q_{t} \geq q_{k})  \leq  \frac{  \kappa(q_k)}{q^2_{k}}  \leq \frac{  \kappa(\tau)}{q^2_{k}},
\]
where the last inequality follows by the monotonicity of $\kappa$. This in turn implies that $q_{k}\leq \sqrt{ \frac{  \kappa(\tau)}{k} }$. Therefore,
\begin{align}\label{eq:helper_lem_2}
\sum_{t=1}^T q_t \mathbf{1}(q_t > \tau)   &\leq \sum_{t=1}^T  \sqrt{ \frac{  \kappa(\tau)}{t} }.
\end{align}

By Proposition~\ref{prop:int},
\begin{equation}
\label{eq:helper_lem_3}
    \sum_{t=1}^T\frac{1}{\sqrt{t}}
    \le
    1+2\sqrt{T}-2\sqrt{1}
    <2\sqrt{T}.
\end{equation}
Combining Eqs.~\eqref{eq:helper_lem_1}, \eqref{eq:helper_lem_2} and~\eqref{eq:helper_lem_3},
we get
\[
    \sum_{t=1}^T q_t \leq T\tau +  2\sqrt{  \kappa(\tau)T}.
\]
which concludes the proof.
\end{proof}

\begin{lemma}\label{lemma::helper_regret_result_log}
Let $a>0$, let $q_1, \dotsc, q_T$ be a sequence of reals in $[0,a]$,
and let $\kappa: \mathbb{R}_+ \rightarrow \mathbb{R}_+$ be a non-increasing function such that
for all $\epsilon\in(0,a]$,
\begin{equation*}
\sum_{t=1}^T \mathbf{1}(q_t \geq \epsilon )   \leq  \kappa(\epsilon) \ln\left(\frac{a}{\epsilon}\right).
\end{equation*}
Then, for any $\tau\ge 0$,
\begin{equation*}
\sum_{t=1}^T q_t \leq T\tau + a[1+\kappa(\tau)]\exp\left( - \frac{1}{\kappa(\tau)}\right).
\end{equation*}
\end{lemma}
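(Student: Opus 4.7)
The plan is to mirror the structure of the previous lemma (Lemma~\ref{lemma::helper_regret_result}): assume WLOG that the sequence is non-increasing, split the sum at the threshold $\tau$, apply the hypothesis together with monotonicity to bound the $k$-th largest value, and then bound the resulting tail sum via the integral comparison in Proposition~\ref{prop:int}. The only real difference is that the logarithmic assumption produces exponential decay in $k$ rather than the $1/\sqrt{k}$ decay of the previous lemma, so the tail sum becomes geometric.

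First I would reorder so that $q_1 \ge q_2 \ge \dotsb \ge q_T$ (which does not affect $\sum_t q_t$) and split
\[
\sum_{t=1}^T q_t \;\le\; T\tau \;+\; \sum_{t=1}^T q_t \mathbf{1}(q_t > \tau),
\]
reducing the problem to bounding the second sum. For any index $k$ with $q_k > \tau \ge 0$, the sequence is non-increasing and $q_k\in(0,a]$, so the hypothesis applied at $\epsilon=q_k$, together with the monotonicity of $\kappa$, gives
\[
k \;\le\; \sum_{t=1}^T \mathbf{1}(q_t \ge q_k) \;\le\; \kappa(q_k)\ln(a/q_k) \;\le\; \kappa(\tau)\ln(a/q_k),
\]
which rearranges to $q_k \le a\exp(-k/\kappa(\tau))$. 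Substituting back,
\[
\sum_{t=1}^T q_t\,\mathbf{1}(q_t>\tau) \;\le\; \sum_{k=1}^T a\exp\!\bigParens{-k/\kappa(\tau)}.
\]

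To finish, I would apply Proposition~\ref{prop:int} to the non-increasing function $f(t)=a\exp(-t/\kappa(\tau))$:
\[
\sum_{k=1}^T a\exp\!\bigParens{-k/\kappa(\tau)} \;\le\; a\exp\!\bigParens{-1/\kappa(\tau)} + \int_1^T a\exp\!\bigParens{-t/\kappa(\tau)}\,\mathrm{d}t,
\]
and evaluating the integral gives an upper bound of $a\kappa(\tau)\exp(-1/\kappa(\tau))$ (dropping the negative $-e^{-T/\kappa(\tau)}$ term). Adding the two contributions yields $a[1+\kappa(\tau)]\exp(-1/\kappa(\tau))$, which combined with the initial split gives the claimed inequality.

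There is no real obstacle here; the only mild subtleties are the degenerate cases, which I would dispatch briefly: if $\kappa(\tau)=0$ then the hypothesis forces $q_t<\tau$ for all $t$ with $q_t\in(0,a]$, giving $\sum q_t \le T\tau$ and matching the claim under the convention $e^{-1/0}=0$; if no $q_t$ exceeds $\tau$ then the second sum is empty and the bound is immediate. The substantive step is the exponential decay of $q_k$, which is the natural analogue of the $q_k\le\sqrt{\kappa(\tau)/k}$ bound in the previous lemma.
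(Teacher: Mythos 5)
Your proof is correct and follows essentially the same approach as the paper's: reorder the sequence, split at the threshold $\tau$, derive the pointwise bound $q_k \le a\exp(-k/\kappa(\tau))$ from the hypothesis and monotonicity of $\kappa$, and close via the integral comparison in Proposition~\ref{prop:int}. Your explicit treatment of the degenerate $\kappa(\tau)=0$ case is a small addition the paper omits, but otherwise the two arguments coincide step for step.
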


\begin{proof}
We follow a similar proof strategy as in Lemma~\ref{lemma::helper_regret_result} and start by bounding the sum $\sum_t q_t$. We assume without loss of generality that the sequence is in descending order, i.e., $q_1 \ge \dotsb \ge q_T$. Then, for any $\tau\ge 0$,
\begin{equation}
\label{eq:helper_lem_log1}
\sum_{t=1}^T q_{t} = \sum_{t=1}^T q_{t}\mathbf{1}( q_{t} \leq \tau )+  \sum_{t=1}^T q_{t}\mathbf{1}( q_{t} > \tau) \leq T\tau + \sum_{t=1}^T q_{t}\mathbf{1}( q_{t} > \tau).
\end{equation}

Consider any $k$ such that $q_k>\tau$. Then
\[
    k \leq \sum_{t=1}^T \mathbf{1}( q_{t} \geq q_{k})  \leq  \kappa(q_k) \ln\left( \frac{a}{q_k}\right) \leq  \kappa(\tau) \ln\left( \frac{a}{q_k}\right),
\]
where the last inequality follows by the monotonicity of $\kappa$ and the fact that $\ln(a/q_k)\ge 0$. This in turn implies that
$q_{k}\leq a\exp\bigParens{ -\frac{k}{\kappa(\tau)}}$.
Therefore,
\begin{align}
\label{eq:helper_lem_log2}
\sum_{t=1}^T q_t \mathbf{1}(q_t > \tau)   &\leq \sum_{t=1}^T  a\exp\left( -\frac{t}{\kappa(\tau)}\right).
\end{align}

By Proposition~\ref{prop:int},
\begin{align}
\notag
\sum_{t=1}^T\exp\left( -\frac{t}{\kappa(\tau)}\right)
   &
\le
   \exp\Parens{-\frac{1}{\kappa(\tau)}}
   -\kappa(\tau)
   \Parens{
   \exp\Parens{-\frac{T}{\kappa(\tau)}}
   -
   \exp\Parens{-\frac{1}{\kappa(\tau)}}
   }
\\
\label{eq:helper_lem_log3}
&
\le
   [1+\kappa(\tau)]\exp\Parens{-\frac{1}{\kappa(\tau)}}.
\end{align}
Combining Eqs.~\eqref{eq:helper_lem_log1}, \eqref{eq:helper_lem_log2} and~\eqref{eq:helper_lem_log3},
we get
\begin{equation*}
\sum_{t=1}^T q_t \leq T\tau + a[1+\kappa(\tau)]\exp\left( - \frac{1}{\kappa(\tau)}\right),
\end{equation*}
which concludes the proof.
\end{proof}

\subsection{Proof of Theorem \ref{thm:regret_optimism}} \label{appendix:pf_main_regret}

Throughout the proof we use the shorthand $d_{\epsilon}=\dmon_\score(\Z, \alpha,\epsilon)$,
so $d=d_{1/T}$.
The proof proceeds by applying Lemmas~\ref{lemma::helper_regret_result} and~\ref{lemma::helper_regret_result_log} to the bounds on the number of bad queries from Lemma~\ref{lem:few_bad_queries}. Specifically, let $q_t = [\alpha - \rho(z_t \given z^*)]_+$ denote the suboptimality of each query $z_t$ made by the algorithm. Then, for any $\epsilon>0$, the number of $\epsilon$-bad queries can be written as $\sum_{t=1}^T \mathbf{1}(q_t \ge \epsilon)$.

First consider the case $C_{T,\delta}\ge\ln(2T)$. By Lemma~\ref{lem:few_bad_queries}, with probability at least $1-\delta$, the number of $\epsilon$-bad queries is at most $36d_\epsilon^{2.5}C_{T,\delta}/\epsilon^2$. Setting
$\kappa(\epsilon)=36d_\epsilon^{2.5}C_{T,\delta}$, we apply Lemma~\ref{lemma::helper_regret_result}, with $\tau=1/T$, to obtain that with probability at least $1-\delta$,
\begin{align*}
  \mathrm{Regret}(T,\alpha)
  \le
  \sum_{t=1}^T q_t
  \le
  1+12 d^{1.25}\sqrt{C_{T,\delta}T}.
\end{align*}

If $C_{T,\delta}=0$, then by Lemma~\ref{lem:few_bad_queries}, the number of $\epsilon$-bad queries is at most $2d_{\epsilon}^{1.5}\ln(4/\epsilon)$. Setting $a=4$ and
$\kappa(\epsilon)=2d_\epsilon^{1.5}$, we apply Lemma~\ref{lemma::helper_regret_result_log}, with $\tau=1/T$, to obtain
\begin{align*}
  \mathrm{Regret}(T,\alpha)
  \le
  \sum_{t=1}^T q_t
  \le
  1+
  4(1+2d^{1.5})\exp\Parens{-\frac{1}{2d^{1.5}}}
  \le
  1+12d^{1.5},
\end{align*}
completing the proof.

\subsection{Proof of Theorem \ref{thm:main_pac}}\label{appendix:pf_main_pac}

First, we consider the deterministic setting.
    By Lemma \ref{lem:few_bad_queries}, at most $2d^{1.5}\ln(4/\epsilon)$ of queries issued by $\Alg$ are $\epsilon$-bad. Setting $T>2d^{1.5}\ln(4/\epsilon)$ implies that at least one query is not $\epsilon$-bad. Thus, returning $\hat{z}$ for which the observed reward is the largest guarantees that $\score(\zhat\given z^*) \ge \alpha - \epsilon$, as needed.

    Next, we prove the result for the case $C_{T,\delta}\ge\ln(2T)$.
    By Lemma \ref{lem:few_bad_queries}, with probability at least $1-\delta/2$, there are at most $\frac{16}{9\epsilon^2}\cdot36d^{2.5}(C_{T,\delta/2})$ queries that are $3\epsilon/4$-bad. Setting
    $T\ge64d^{2.5}(C_{T,\delta/2})/\epsilon^2$, implies that at least half of the queries are not $3\epsilon/4$-bad. In the remainder of the proof, we only consider the high-probability event in which this is the case.

    For $n_1=\lceil\log_2(4/\delta)\rceil$ the probability that all $n_1$ samples are $3\epsilon/4$-bad is at most $(1/2)^{n_1} \le \delta/4$.

    For $n_2 = \lceil 128\ln(8n_1/\delta)/\epsilon^2\rceil$, by applying Hoeffding's inequality and union bound over each of the  $n_1$ rounds we get that with probability at most $\delta/4$ there is some index $\ell \le n_1$ for which $|\Bar{r}_{t_{\ell}} - \rho(z_{t_\ell}\given z^*)| > \epsilon/8$.

    Overall, with probability at least $1-\delta$ we get that there is at least one index $j$ of the $n_1$ sampled indices that is not $3\epsilon/4$-bad, and that $|\Bar{r}_{t_{\ell}} - \rho(z_{t_{\ell}}\given z^*)| \le \epsilon/8$ for all $\ell=1,\dotsc,n_1$. Therefore,
    \[
    \Bar{r}_{t_j} \ge \rho(z_{t_j}\given z^*) -\epsilon/8 \ge \alpha -3\epsilon/4-\epsilon/8=\alpha - 7\epsilon/8.
    \]
    For all indices $k$ that are $\epsilon$-bad we have
    \[
    \Bar{r}_{t_k} \le \rho(z_{t_k}\given z^*) + \epsilon/8 < \alpha - \epsilon + \epsilon/8 = \alpha  - 7\epsilon/8.
    \]
    Thus, for all of the $\epsilon$-bad queries we have $\Bar{r}_{t_k}<\Bar{r}_{t_j}$, and so
    Algorithm~\ref{alg:MAIN12} will not return any of the $\epsilon$-bad queries, because it is choosing the index with maximum value of $\Bar{r}_{t_\ell}$. In other words, the returned query $z_{t_{\hat{\ell}}}$ satisfies
    \[
    \rho(z_{t_{\hat{\ell}}}\given z^*) \ge \alpha - \epsilon.
    \]

\section{Missing proofs of Section \ref{sec:sq}} \label{appendix:sq}

First we discuss the connection between our SQ setting and the SQ model of~\citet{kearns1998efficient}. We focus on two aspects in which they appear to differ and explain why these models are equivalent.

\paragraph{Correlational \textit{vs} general statistical queries.}
The restriction of the SQ model in which the oracle may only output the approximate correlation between a query and the target function, termed \textit{correlational statistical query} (CSQ), was studied by \citet{bshouty2002using}. The CSQ oracle  can be viewed as providing something akin to a negative distance between the query and the target.
This is equivalent to the \emph{learning by distances} framework of \citet{ben1995learning}, who defined their model independently of~\citet{kearns1998efficient}.
\citet{bshouty2002using} showed that an arbitrary statistical query can be answered by asking two SQs that are independent of the target and two CSQs. That is, in the distribution-dependent learning model (i.e., when the learner has access to the distribution over $\X$), correlational queries can simulate general queries.

\paragraph{Adversarial \textit{vs} statistical noise.}
The setting we consider in this work assumes stochastic query responses, similar to
several previous works~\citep{feldman2017statistical,yang2005new,ben1998learning}.
On the other hand, the original SQ model~\citep{kearns1998efficient} assumed that
the query oracle can respond with an adversarial (rather than statistical) noise, up to a pre-specified tolerance parameter $\tau > 0$. The previous works have shown that the two noise models are equivalent~\citep{feldman2017statistical,yang2005new,ben1998learning}

\subsection{Proof of Proposition \ref{lem:ssqd_to_newdim}}
\label{sec:proof:lem:ssqd_to_newdim}

We first prove the first inequality of Eq.~(\ref{eq:lem:ssqd_to_newdim:1}).
Let $\epsilon>0$ and let $d=\dsq(\epsilon)$.
Then there exists a sequence $h_1,\dotsc,h_d \in \H$ satisfying both conditions of Definition~\ref{def:ssqd}.
Let $d'$ be equal to the leftmost expression
of Eq.~(\ref{eq:lem:ssqd_to_newdim:1}).
We aim to show $\drho(\epsilon)\geq d'$.
Note that $d'\leq d$.

Let $c$ be the midpoint between $c_{min} = \min_{i<j} \langle h_i, h_j \rangle$ and $c_{max} = \max_{i<j} \langle h_i, h_j \rangle$. Then $c \le 1 - \epsilon$. Moreover, for all $i\neq j$,
\[
|\langle h_i, h_j \rangle - c| \le \frac{1}{2} | c_{max} - c_{min}| \le \frac{1}{2d} \le \frac{\epsilon}{\sqrt{d'}}
\]
where the last inequality follows from our choice of $d'$
(which ensures $d'\leq 4(d\epsilon)^2$).
Thus, $h_1,\ldots,h_{d'}$, the first $d'$ elements of the original sequence of hypotheses, satisfy
Definition~\ref{def:new_dim}, proving
the claim.

We prove the first inequality of Eq.~(\ref{eq:lem:ssqd_to_newdim:2}) in a similar way.
Let us re-define $d = \drho(4\epsilon)$
and let $d'$ be equal to the leftmost expression of
Eq.~(\ref{eq:lem:ssqd_to_newdim:2}).
As before, $d'\leq d$.
Then there exists a sequence $h_1,\dotsc,h_d \in \H$
satisfying the conditions of Definition~\ref{def:new_dim}
for some $c \le 1 - 4\epsilon$.
Then for all $i\neq j$,
$\langle h_i, h_j \rangle \le c  + \frac{4\epsilon}{\sqrt{d}} \le 1 - \epsilon$, since $d \ge 2$.
Moreover, for all $i\neq j$ and $i'\neq j'$,
\begin{align*}
    |\langle h_i, h_j \rangle - \langle h_{i'}, h_{j'} \rangle| = |\langle h_i, h_j \rangle - c + c - \langle h_{i'}, h_{j'} \rangle|
    \le \frac{8\epsilon}{\sqrt{d}}
    \le \frac{1}{d'},
\end{align*}
with the last inequality following from our choice of $d'$.
Thus, $h_1,\ldots,h_{d'}$, the first $d'$ hypotheses in the original sequence, satisfy
Definition~\ref{def:ssqd}.

The second inequality of
Eq.~(\ref{eq:lem:ssqd_to_newdim:2}),
now follows from the first inequality of
Eq.~(\ref{eq:lem:ssqd_to_newdim:1}),
since if
the second inequality of
Eq.~(\ref{eq:lem:ssqd_to_newdim:2})
does not hold then the leftmost expression of
Eq.~(\ref{eq:lem:ssqd_to_newdim:1})
must be at least $\drho(\epsilon)$, a contradiction.
Likewise,
the second inequality of
Eq.~(\ref{eq:lem:ssqd_to_newdim:1}),
now follows from the first inequality of
Eq.~(\ref{eq:lem:ssqd_to_newdim:2}).

\subsection{Lower bound setting}\label{appendix:sq_lb}

\begin{definition}[SQ oracle (adversarial)]
  Let $D$ be the input distribution over the domain $X$. For a {\em tolerance} parameter $\tau > 0$, $\mathcal{O}^{adv}(\tau):= \mathcal{O}^{adv}_{D,h^*}(\tau)$ oracle is the oracle that for any query function $h \in \H$, returns a value $v \in \left[ \mu - \tau, \mu+ \tau\right]$, where $\mu = \E_{x\sim D}[h(x)h^*(x)]$.
\end{definition}

\begin{definition}[Sample oracle (statistical)]
  Let $D$ be the input distribution over the domain $X$. The Sample oracle $\mathcal{O}:= \mathcal{O}_{D,h^*}$ oracle is the oracle  that given any function $h \in \H$,
  takes an independent random sample $x$ from $D$ and returns the value $v = h(x)h^*(x)$.
\end{definition}

We will need the following results for our proof. The first is a reduction from an adversarial noise oracle to a statistical one. Specifically, consider the learning setting defined in Section \ref{sec:setting},
for a sample oracle $\mathcal{O}$. Let $\Alg$ be a (possibly randomized) algorithm for that setting. The following theorem shows a simulation of $\mathcal{O}$ via $\mathcal{O}^{adv}$ the SQ oracle.

\begin{theorem}[\cite{feldman2017statistical}, Theorem 3.13]
\label{th:unbiased-from-vstat}
Assume that $\Alg$ outputs a $\epsilon$-approximation to $h^*$ with probability at least $\delta$, using $m$ samples from
$\O$.  Then, for any $\delta' \in (0,1/4]$, there exists a SQ algorithm $\Alg'$ that uses at most $m$ queries to $\mathcal{O}^{adv}({\delta'}^2/m)$ and outputs an $\epsilon$-approximation to $h^*$ with  probability at least
$\delta - \delta'$.
\end{theorem}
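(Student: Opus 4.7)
The plan is to construct $\Alg'$ as a simulation wrapper around $\Alg$. Concretely, $\Alg'$ runs $\Alg$ internally, and whenever $\Alg$ issues its $i$-th query $h_i$ to the sample oracle $\O$, $\Alg'$ instead forwards $h_i$ to $\mathcal{O}^{adv}(\tau)$ with $\tau = {\delta'}^2/m$, obtaining a real-valued response $v_i \in [\mu_i - \tau,\, \mu_i + \tau]$, where $\mu_i \coloneqq \E_{x \sim D}[h_i(x) h^*(x)]$. It then draws an independent Rademacher variable $b_i \in \{-1,+1\}$ with $\Pr[b_i = 1] = (1+v_i)/2$ and passes $b_i$ back to $\Alg$ as the answer to the sample query. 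After $m$ rounds, $\Alg'$ outputs whatever $\Alg$ outputs. By construction, $\Alg'$ makes exactly $m$ queries to $\mathcal{O}^{adv}(\tau)$ and none to $\O$.

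\textbf{Analysis via total variation.} I would compare the joint distribution of the transcript under the true sample oracle with the joint distribution under the simulated oracle. Fix any realization of $\Alg$'s internal randomness and any strategy of the adversarial oracle. Conditional on the history up to round $i$, the true oracle's answer is a Rademacher variable with mean $\mu_i$, while the simulated $b_i$ is Rademacher with mean $v_i$; thus the per-round total variation distance is $|v_i - \mu_i|/2 \le \tau/2$. Applying the chain rule for TV across the $m$ rounds gives a transcript-level TV bound of $m\tau/2 = {\delta'}^2/2 \le \delta'$ (using $\delta' \le 1$). Since the learner's output is a deterministic function of the transcript (given its own random bits), the probability of outputting an $\epsilon$-approximation under the simulation differs from that under $\O$ by at most $\delta'$, so $\Alg'$ succeeds with probability at least $\delta - \delta'$.

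\textbf{Main subtlety.} The hard part will be handling the adversarial nature of $\mathcal{O}^{adv}$: the responses $v_i$ need not come from a fixed stochastic kernel but can be chosen adaptively as a function of the interaction history. The chain-rule argument nonetheless goes through because the one-round TV bound $\tau/2$ holds for \emph{every} admissible response $v_i$, so the coupling can be built round-by-round after the adversary reveals $v_i$, using a standard maximal coupling to match the Rademacher draws. A secondary point is that $\Alg$'s own randomness, which may be used adaptively, is independent of both oracles and is simply carried along in the coupling. Note that the chosen tolerance $\tau = {\delta'}^2/m$ is actually stricter than what the plain chain-rule argument requires; the quadratic factor provides slack that can be useful if one instead couples via Hellinger or chi-squared divergence, but the TV calculation above already suffices.
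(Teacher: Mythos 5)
Your proof is correct and follows the simulation-plus-distributional-distance approach that the paper sketches when citing \citet{feldman2017statistical}: run $\Alg$, turn each $\mathcal{O}^{adv}$ response into a coin bias, and bound the total variation between the true and simulated transcripts. The paper itself does not reprove the result (it imports it as Theorem~3.13 of Feldman), so your write-up is effectively a self-contained instantiation of that sketch, and it is sound. The round-by-round maximal coupling you invoke in the ``Main subtlety'' paragraph is exactly what makes the informal ``chain rule for TV'' legitimate in the face of an adaptive adversary, since the per-round bound $|v_i-\mu_i|/2\le\tau/2$ holds uniformly over histories. One small observation, which you correctly flag yourself: the TV coupling only needs $\tau=\Theta(\delta'/m)$ to deliver $\mathrm{TV}\le\delta'$, so the tolerance $\tau={\delta'}^2/m$ in the stated theorem is stricter than your argument requires; this slack reflects the fact that Feldman's original result is phrased for a different (VSTAT-style) oracle and more general search problems, where the quadratic dependence does matter, and the paper explicitly notes that its statement is a restriction of that more general theorem. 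Since a smaller tolerance can only help, your argument cleanly proves the stated version.
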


Their result is obtained by simulating  $\Alg$ using  $\O^{adv}$ as follows: for any query of $\Alg$ to  $\O$, the response of $\O^{adv}$ to that query is used as bias for a coin flip, which is then given to the learner as the simulated outcome of $\O$. They then prove that the true $m$ samples of $\O$ and the simulated coin flips are statistically
close by bounding their distributional distance. This implies that the success probability of $\Alg'$, the simulated algorithm, is not much worse than that of $\Alg$, the original algorithm.

We note that the result originally stated in \cite{feldman2017statistical} differs from Theorem \ref{th:unbiased-from-vstat} above in two ways. First, it reduces to a \textit{variant} of $\mathcal{O}^{adv}(\tau)$ with a tolerance $\tau' \in [\tau, \sqrt{\tau}]$. Thus, it holds for $\mathcal{O}^{adv}(\tau)$ as well. Second, it is phrased in a more general setting of search problems over distributions, which captures the SQ model, as detailed in \cite{feldman2017statistical}, Section 6.

The second result that is needed for our proof is the following lower bound due to \cite{szorenyi2009characterizing}.

\begin{theorem}[\cite{szorenyi2009characterizing}, Theorem 8]\label{thm:szorenyi_lb}
Let $\epsilon>0$, and let $\H \subseteq \{\pm 1\}^\X$ be a hypothesis space with strong SQ dimension $\dsq\coloneqq\dimSQ(\H, 2\epsilon) \ge 3$ (see Definition \ref{def:ssqd}).
Then for any SQ algorithm $\Alg$ using $m$ queries to $\O^{adv}(\tau)$ with tolerance
$\tau \ge 2/\sqrt{\dsq}$,
there exist $h^* \in \H$ such that if $\Alg$ outputs an $\epsilon$-approximation to $h^*$, then $m > \dsq\tau^2/3$.
\end{theorem}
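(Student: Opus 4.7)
The plan is an adversary argument over the sequence witnessing the strong SQ dimension. Let $d = \dsq$, and let $h_1,\dotsc,h_d \in \H$ satisfy the two conditions of Definition~\ref{def:ssqd}: all pairwise correlations $\langle h_i, h_j\rangle$ (for $i<j$) lie in $[-1+2\epsilon,\,1-2\epsilon]$, and they mutually differ by at most $1/d$, so they cluster in a band of width $1/d$ around some common value $c$. The adversary maintains a candidate set $S \subseteq [d]$ of targets still consistent with the transcript, initialized to $[d]$. On each query $h$, it responds with the centered value $v = \bar\mu := \tfrac{1}{d}\sum_i \langle h, h_i\rangle$ and removes from $S$ every index $i$ with $|\langle h, h_i\rangle - v| > \tau$; this response is a valid $\mathcal{O}^{adv}(\tau)$ answer for every surviving target. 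The goal is to show that with $m \le \dsq\tau^2/3$ queries, $S$ stays large enough to foil any output.

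The key lemma to prove is that each query removes at most $3/\tau^2$ indices from $S$. To prove it, I would exploit the near-orthogonality of the $h_i$'s after centering. Consider the centered Gram matrix $G' = PGP$, where $G_{ij} = \langle h_i, h_j\rangle$ and $P = I - \tfrac{1}{d}\mathbf{1}\mathbf{1}^\top$ is the projection onto $\mathbf{1}^\perp$. Writing $G = (1-c)I + c\mathbf{1}\mathbf{1}^\top + E$, where $E$ has zero diagonal and off-diagonal entries bounded by $1/(2d)$ in absolute value, and using $P\mathbf{1}=0$, we obtain $G' = (1-c)P + PEP$, whose spectral norm is at most $(1-c) + \|E\|_F \le 5/2$. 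Since $\sum_i \bigParens{\langle h, h_i\rangle - \bar\mu}^2 = \|PT(h)\|_2^2$ for the evaluation map $T(h) = (\langle h, h_i\rangle)_i$, and $\|PT\|_{\mathrm{op}}^2 = \|G'\|_{\mathrm{op}}$, we get
\[
\sum_i \bigParens{\langle h, h_i\rangle - \bar\mu}^2 \le 5/2
\]
for any $h$ with $\|h\|_2 \le 1$. Chebyshev's inequality then yields $\bigAbs{\set{i:\,|\langle h, h_i\rangle - \bar\mu| > \tau}} \le 5/(2\tau^2) \le 3/\tau^2$, with the hypothesis $\tau \ge 2/\sqrt{d}$ used to ensure this bound is nontrivial (strictly less than $d$).

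Given the lemma, after $m$ queries at most $3m/\tau^2$ indices have been removed, so $|S| \ge d - 3m/\tau^2$. The final step is to observe that $\Alg$'s output $\hat h$ can be an $\epsilon$-approximation to only a bounded number of the $h_i$'s: from $\langle \hat h, h_i\rangle \ge 1-\epsilon$ and $\langle \hat h, h_j\rangle \ge 1-\epsilon$, Cauchy--Schwarz gives $\langle h_i,h_j\rangle \ge 1-4\epsilon$, which combined with the banded structure $\langle h_i,h_j\rangle \in [c-\tfrac{1}{2d}, c+\tfrac{1}{2d}]$ and the upper bound $c+\tfrac{1}{2d}\le 1-2\epsilon$ constrains the number of such ``close'' $h_i$'s to be $O(1)$. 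Hence whenever $|S|$ exceeds this bounded number, the adversary can pick $h^* \in S$ on which $\hat h$ fails, contradicting the assumed success and giving $m > \dsq\tau^2/3$. The main obstacle I anticipate is this last step: reconciling the ``$\hat h$ close to multiple $h_i$'s'' scenario with the bookkeeping on $|S|$ requires a careful separation argument (most cleanly done by restricting the adversary to a well-separated sub-sequence of $\{h_1,\dotsc,h_d\}$), so that the final bound inherits only the $1/3$ constant from the key lemma rather than being diluted by separation losses, and so that the hypothesis $\dsq \ge 3$ is sharp.
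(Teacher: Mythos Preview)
The paper does not prove this statement; Theorem~\ref{thm:szorenyi_lb} is quoted from \citet{szorenyi2009characterizing} and invoked as a black box in the proof of Theorem~\ref{thm:lb}. There is therefore no in-paper argument to compare against.

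On the merits of your sketch: the adversary template and the spectral bound on per-query eliminations are the right ideas, and your computation $\|PGP\|_{\mathrm{op}}\le (1-c)+\|E\|\le 5/2$ (hence at most $5/(2\tau^2)<3/\tau^2$ indices removed per query) is correct. The gap you flag in the last step is genuine, however, and is not just bookkeeping. From $\langle \hat h,h_i\rangle\ge 1-\epsilon$ and $\langle \hat h,h_j\rangle\ge 1-\epsilon$ one indeed gets $\langle h_i,h_j\rangle\ge 1-4\epsilon$, but the strong SQ conditions only force $\langle h_i,h_j\rangle\le c+\tfrac{1}{2d}\le 1-2\epsilon$, which is \emph{compatible} with $1-4\epsilon$; no $O(1)$ bound on the number of simultaneously $\epsilon$-approximated $h_i$'s follows from this pairwise argument. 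The standard repair is to apply your spectral lemma to $\hat h$ itself (treating it as one more ``query'') and combine it with a bound on $\bar\mu(\hat h)$ via $\|\tfrac{1}{d}\sum_i h_i\|^2=\tfrac{1}{d}+\tfrac{d-1}{d}c'\le \tfrac{1}{d}+|c|+\tfrac{1}{2d}$, so that for most indices $\langle\hat h,h_i\rangle$ is pinned near a value strictly below $1-\epsilon$. Getting the constant to come out exactly as $1/3$ requires tracking this output step carefully alongside the $m$ query steps; your current accounting (removals $\le 3m/\tau^2$ with nothing charged to the output) does not close on its own.
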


\subsubsection{Proof of Theorem \ref{thm:lb}}

Set $\delta = 2/3$. Let $D$ be a distribution over $\X$. Assume towards contradiction that
there exists a learning algorithm $\Alg$ such that for any $h^* \in \H$, given oracle access to $\mathcal{O}:= \mathcal{O}_{D,h^*}$ and using  $m\le\sqrt[3]{\dsq}/12$ samples from
$\O$, the algorithm $\Alg$ outputs an $\epsilon$-approximation to $h^*$ with probability at least $\delta$.

We then apply Theorem \ref{th:unbiased-from-vstat} for $\delta' =  \delta/2$ to simulate the
algorithm using $\O^{adv}:= \O^{adv}_{D,h^*}$. The resulting algorithm uses  $m \le \sqrt[3]{\dsq}/12$ queries to $\O^{adv}(\tau)$
for $\tau ={\delta'}^2/m  > 4/(3\sqrt[3]{\dsq}) \ge 2/\sqrt{\dsq}$ and has success probability of at least $\delta - \delta' = \delta/2 > 1/3$. By Theorem \ref{thm:szorenyi_lb} we obtain a contradiction, as $m > \dsq\tau^2/3 > \sqrt[3]{\dsq}/2$.

\section{Missing proofs of Section \ref{sec:bandits}}\label{appendix:bandits}

\subsection{Proof of Theorem \ref{theorem::upper_eluder}}\label{appendix::upper_eluder_proof}

Let $d=\dmon_{\score}( \Z, \alpha, 3\epsilon/2)$. Note that the eluder dimension is always at least $1$, so the theorem trivially holds if $d\le 9$. In the remainder of the proof assume that $d\ge10$.

From the definition of the monotonic dissimilarity dimension, there exists $\tau\ge3\epsilon/2$ such that $d=\d(\Z,\alpha,\tau)$. Let $(f_1, a_1),\dotsc,(f_d, a_{d})$ be a sequence satisfying the dimension conditions for $\tau$. We will show that
the first $\Ceil{d/9}$ elements of this sequence also satisfy the conditions of the eluder dimension for some $\epsilon'\ge\epsilon$.
Specifically, we will show that there is some $\epsilon' \ge \epsilon$ such that every element $a_j$ with $j\le\Ceil{d/9}$ in the sequence above is $\epsilon'$-independent of its predecessors. That is, we will show that for every such element $a_j$, there exists a pair of functions $f,f' \in \F$ that satisfy
\[
  \sqrt{ \sum_{i=1}^{j-1} \bigParens{f(a_i) - f'(a_i)}^2 } \leq \epsilon',
\]
yet it also holds that $f(a_j) - f'(a_j) > \epsilon'$.

By definition of the dissimilarity dimension, there exists $c \le \alpha -\tau$
such that for all $i<j$,
    \begin{equation} \label{eq:lemma_eluder_pf_eq_1}
    \BigAbs{f_j(a_i) - c} = \BigAbs{\score\bigParens{(f_i,a_i)\bigGiven(f_j,a_j)} - c} \le \frac{\tau}{\sqrt{d}}.
    \end{equation}
Then, by the triangle inequality,
    \begin{equation}\label{eq:lemma_eluder_pf_eq_2}
    \bigAbs{f_j(a_i) -  f_{j+1}(a_i)}
    =
    \bigAbs{f_j(a_i) - c +c -  f_{j+1}(a_i)}
    \le
    \bigAbs{f_j(a_i) - c} + \bigAbs{f_{j+1}(a_i) - c}
    \le
    \frac{2\tau}{\sqrt{d}}.
    \end{equation}
Therefore,
    \begin{equation}\label{eq:lemma_eluder_pf_eq_3}
             \bigParens{f_j(a_i) -  f_{j+1}(a_i)}^2  \le \frac{4\tau^2}{d},
    \end{equation}
and so for all $j \le\Ceil{d/9}$ it holds that,
    \begin{equation} \label{eq:lemma_eluder_pf_eq_4}
         \sum_{i=1}^{j-1}(f_j(a_i) -  f_{j+1}(a_i))^2 <\frac{4\tau^2}{9}.
    \end{equation}

Next, recall that for all $j \le d$ we have
    $f_j(a_j) \ge\alpha\ge c+\tau$ and $f_{j+1}(a_j) \le c + \frac{\tau}{\sqrt{d}}$.
        Thus,
     \begin{equation} \label{eq:lemma_eluder_pf_eq_5}
    f_j(a_j) - f_{j+1}(a_j)\ge c+\tau - c - \frac{\tau}{\sqrt{d}} >\frac{2\tau}{3},
    \end{equation}
where the last inequality holds for $d \ge 10$.
    Overall, Eqs.~\eqref{eq:lemma_eluder_pf_eq_4} and~\eqref{eq:lemma_eluder_pf_eq_5} then demonstrate that for $\epsilon'=2\tau/3\ge\epsilon$, the element $a_j$ is $\epsilon'$-independent of its predecessors, finishing the proof.

\subsection{Proof of Theorem \ref{theorem::linear_dim_bound}} \label{appendix:linear_dim_bound_pf}

Our proof uses the following result on ranks of perturbed identity matrices (see \citep[Lemma 2.2]{alon2009perturbed}):
\begin{lemma}
\label{lemma:perturbed}
Let $\Ab\in\reals^{d\times d}$ be a symmetric matrix such that $A_{ii}=1$ for all $i$ and $\abs{A_{ij}}\le1/\sqrt{d}$ for all $i\ne j$. Then $\rank(\Ab)>d/2$.
\end{lemma}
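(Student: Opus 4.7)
The plan is to prove the rank bound via a Cauchy--Schwarz argument on the eigenvalues of $\Ab$, exploiting the symmetry of $\Ab$. Let $r = \rank(\Ab)$, and let $\lambda_1, \dotsc, \lambda_d$ be the eigenvalues of $\Ab$ (real, since $\Ab$ is symmetric), with exactly $r$ of them nonzero.

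First, I would compute the trace of $\Ab$ from below: since $A_{ii} = 1$ for all $i$, we have
\[
  \sum_{i=1}^d \lambda_i = \operatorname{tr}(\Ab) = d.
\]
Second, I would bound the Frobenius norm $\|\Ab\|_F^2 = \operatorname{tr}(\Ab^2) = \sum_i \lambda_i^2$ from above using the hypotheses on the entries:
\[
  \sum_{i=1}^d \lambda_i^2
  =
  \sum_{i,j} A_{ij}^2
  \le
  d \cdot 1 + d(d-1) \cdot \tfrac{1}{d}
  =
  2d - 1.
\]

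The key step is then to apply Cauchy--Schwarz only over the indices with $\lambda_i \ne 0$:
\[
  d^2
  =
  \Parens{\sum_{i:\,\lambda_i\neq 0} \lambda_i}^{\!2}
  \le
  r \cdot \sum_{i:\,\lambda_i\neq 0} \lambda_i^2
  \le
  r(2d-1).
\]
Rearranging yields $r \ge d^2/(2d-1) > d/2$, which is the claimed bound.

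The argument is mostly routine once one thinks of using the Cauchy--Schwarz trick on the nonzero eigenvalues of a symmetric matrix; the only subtlety is remembering to restrict the Cauchy--Schwarz sum to the nonzero eigenvalues (rather than all $d$ of them) so that the multiplicative factor is exactly $r = \rank(\Ab)$ rather than $d$. I do not anticipate any significant obstacle beyond assembling these three inequalities in the right order.
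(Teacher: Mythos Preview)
Your proof is correct and is essentially the standard argument (this is precisely the proof of Lemma~2.2 in \citet{alon2009perturbed}, which the paper cites without reproducing). The paper does not give its own proof of this lemma, so there is nothing further to compare.
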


    The proof begins by constructing a matrix $\Mb$ whose entries are derived from the evaluation values of elements that satisfy the dimension condition. Then we bound the rank of $\Mb$ from above as well as from below. The lower bound is expressed in terms of the dimension $d = \d_\score(\Z, \alpha,  \epsilon)$ while the upper bound is expressed in terms of $n$. Combining the bounds then yields the result of the theorem.

    \textbf{Construction of $\Mb$.}
    Let  $(f_{\boldsymbol{\theta}_1}, \ab_1),\dotsc,(f_{\boldsymbol{\theta_d}}, \ab_d)$ denote the alternatives that satisfy the dimension conditions, with respect to some value
    $c$ such that $c \le \alpha - \epsilon$ (see Definition \ref{def:new_dim}). Define $\Mb$ to be the $d \times d$ matrix with entries $M_{ij} = \langle \boldsymbol{\theta}_i, \ab_j \rangle-c$ for $i,j \le d$. Note that all diagonal entries of $\Mb$ are at least $\alpha-c\ge\epsilon$, and all other entries are in $\bigBracks{-\frac{\epsilon}{\sqrt{d}}, \frac{\epsilon}{\sqrt{d}}}$.

    \textbf{Upper bound on $\rank(\Mb)$.}
    Let $\Kb\in\reals^{d\times d}$ be the matrix of inner products, $K_{ij}=\langle \boldsymbol{\theta}_i, \ab_j \rangle$, and let $\Ub$ be the Gram matrix for the set of vectors $\boldsymbol{\theta}_1,\dotsc,\boldsymbol{\theta}_d,\ab_1,\dotsc,\ab_d$. Then $\Ub$ is a $2d \times 2d$ matrix
    of the rank at most $n$, because the vectors are of the dimension $n$ (see, e.g., \citep[Theorem 7.2.10]{horn2012matrix}), and $\Kb$ is a submatrix of $\Ub$, so $\rank(\Kb) \le \rank(\Ub) \le n$. Moreover,
    $\Mb=\Kb-c\one\one^\top$, where $\one$ is the all-ones vector in $\reals^d$. Therefore, by subadditivity of rank,
    \begin{equation}
    \label{eq:M:upper}
       \rank(\Mb)
       =
       \rank(\Kb-c\one\one^\top)
       \le
       \rank(\Kb) + \rank(-c\one\one^\top)
       \le
       n+1.
    \end{equation}

    \textbf{Lower bound on $\rank(\Mb)$.}
    Let $\Db\in\reals^{d\times d}$ be the diagonal matrix
    with entries $D_{ii}=1/\sqrt{M_{ii}}$. Since $M_{ii}\ge\epsilon>0$, we have $0<D_{ii}\le1/\sqrt{\epsilon}$. Consider the matrix
    $\Mb'=\Db\Mb\Db$.
    Then
    \begin{equation}
    \label{eq:M'}
       \rank(\Mb')=\rank(\Db\Mb\Db)=\rank(\Mb),
    \end{equation}
    because the matrix $\Db$ is non-singular (see \citep[Section 0.4.6(b)]{horn2012matrix}).
    Furthermore, matrix $\Mb'$ satisfies $M'_{ii}=1$ for all $i$ and
    \[
      \smash[t]{
        \abs{M'_{ij}}=\abs{D_{ii} M_{ij} D_{jj}}
        \le
        \frac{1}{\sqrt{\epsilon}}\cdot
        \frac{\epsilon}{\sqrt{d}}\cdot
        \frac{1}{\sqrt{\epsilon}}
        =
        \frac{1}{\sqrt{d}}
      }
    \]
    for all $i\ne j$.
    Consider the symmetric matrix $\Sb=(\Mb'+(\Mb')^\top)/2$. Then, we also have $S_{ii}=1$ for all $i$, and $\abs{S_{ij}}\le1/\sqrt{d}$ for all $i\ne j$. Thus, by Lemma~\ref{lemma:perturbed}, $\rank(\Sb)>d/2$. Moreover, by the subadditivity of the rank
    \begin{equation}
    \label{eq:M:lower}
      d/2<\rank(\Sb)\le\rank(\Mb'/2)+\rank\bigParens{(\Mb')^\top\!/2}=2\rank(\Mb').
    \end{equation}
    Combining Eqs.~\eqref{eq:M:lower}, \eqref{eq:M'} and \eqref{eq:M:upper}, we therefore obtain
    \[
      d/2<2\rank(\Mb')=2\rank(\Mb)\le2n+2,
    \]
    and so $d<4n+4$. Since $d$ is an integer, we must have $d\le 4n+3$.

    In the special case that $\alpha=1$, we have $\langle \boldsymbol{\theta}_i, \ab_i \rangle\ge 1$ for all $i\le d$, which is only possible when $\boldsymbol{\theta}_i=\ab_i$ for all $i\le d$. As a result, the matrices $\Mb$ and $\Mb'$ are both symmetric, and thus
    $\Sb=\Mb'$ and
    \[
      d/2<\rank(\Sb)=\rank(\Mb')=\rank(\Mb)\le n+1,
    \]
    implying that $d\le 2n+1$.

\subsection{Proof of Theorem \ref{theorem::glm_dim_bound}} \label{appendix:glm_dim_bound_pf}
Using an existing bound on the eluder dimension for GLM bandits (\cite{russo2013eluder}, Proposition 7), and the fact that our dimension is bounded by the eluder dimension (Theorem \ref{theorem::upper_eluder}) the result follows.

\subsection{Proof of Theorem \ref{theorem::relu_dim_bound}} \label{appendix:relu_dim_bound_pf}

Denote $d = \d_{\score}(\Z^{\relu}_b, 1-b,  \epsilon)$. Notice that since $b < 1$, for any $\thetab,  \ab \in \B_n$ such that $\thetab \neq  \ab$ it holds that $f_{\thetab,b}(\ab) < f_{\thetab,b}(\thetab) = 1-b$. Let $(f_{\thetab_1,b}, \thetab_1), \dotsc, (f_{\thetab_d,b}, \thetab_d)$ be a sequence of elements satisfying the dimension definition, with respect to a corresponding scalar $c \le 1-b-\epsilon$. Since the evaluation is symmetric for ReLU functions, we can view this sequence as a set, and denote $U = \{\thetab_1,\dotsc\thetab_d\}$. In addition, note that by the dimension definition, for all $\thetab \in U$, $\|\thetab\| = 1$.

We start by proving an upper bound on $d$. Assume $d \ge 9$.
First, consider the case $c \le \epsilon/3$. Let $U_0$ be any subset of the unit sphere such that for all $\thetab\neq \thetab'$ in $U_0$ it holds that $\langle \thetab,\thetab' \rangle \le b+ 2\epsilon/3$. Observe that for all such $\thetab\neq \thetab'$ we have $f_{\thetab,b}(\ub') = f_{\thetab',b}(\thetab) \in [0,2\epsilon/3]$.   Thus, we get that $d \le |U_0|$.

A standard sphere covering argument shows that the size of such a set is upper bounded as follows. The $\delta$-covering number of the unit sphere is at most $(3/\delta)^n$ (\cite{vershynin2018high},  Cor. 4.2.13). Thus, there are at most $(3/\delta)^n$  points such that each pair $\thetab\neq \thetab'$  satisfies $\| \thetab - \thetab'\| \ge \delta$, or equivalently
$\langle \thetab,\thetab' \rangle \le 1 - \delta^2/2$.
By setting $\delta = \sqrt{2(1-b-2\epsilon/3)}$  we get that $|U_0| \le (\frac{3}{\delta})^n \le (\frac{3}{2\sqrt{2(\epsilon-2\epsilon/3)}})^n \le (4/\sqrt{\epsilon})^n$, which yields the desired bound.

Now, consider the case $c > \epsilon/3$. In this case, for all $i\neq j$, we have that $f_{\thetab_j,b}(\thetab_i) \ge c - \frac{\epsilon}{\sqrt{d}} \ge c - \epsilon/3 > 0$, and so $\langle \thetab_i, \thetab_j \rangle > b$. Let $c' = c+b$. Thus, it must also hold that, $|\langle \thetab_i, \thetab_j \rangle -c'| \le \frac{\epsilon}{\sqrt{d}}$ for all $i\neq j$. Note that $c' < 1-\epsilon$. Then, by applying Lemma \ref{theorem::linear_dim_bound} we get that
$d$ is upper bounded by $2n+4$.  Overall, the bound in the claim holds.

Next, we show a lower bound on $d = \d_{\score}(\Z^{\relu}_{1-\epsilon}, \epsilon,  \epsilon)$, by lower bounding the size of the set $U$ defined above. We now apply a sphere \emph{packing} argument, which shows that there exists such a set $U$ with size $|U| \ge (1/2\epsilon)^{n/2}$. We follow a similar argument as above. Specifically, the $\delta$-packing number of the unit sphere is at most $(1/\delta)^n$ (\cite{vershynin2018high},  Cor. 4.2.13).  By plugging in $\delta = \sqrt{2\epsilon}$, yields the desired bound.

\subsection{Proof of Proposition~\ref{prop::separation_eluder_dissimilarity}}\label{appendix:separation_eluder_dissimilarity_pf}

We start with an auxiliary lemma:

\begin{lemma}[\newdim subadditivity]\label{lemma::subaditivity_newdim}
Let $\Z_1$ and $\Z_2$ be two sets, and let $\alpha\in\reals$ and $\epsilon > 0$. Denote $\Z = \Z_1 \cup \Z_2$ and let $\score : \Z \times \Z \rightarrow \reals$ be an evaluation function. Then,
\begin{equation*}
\d_\score(\Z, \alpha, \epsilon) \leq \d_\score(\Z_1, \alpha, \epsilon) + \d_\score( \Z_2, \alpha, \epsilon)
\end{equation*}
and
\begin{equation*}
\dmon_\score(\Z, \alpha, \epsilon) \leq \dmon_\score(\Z_1, \alpha, \epsilon) + \dmon_\score( \Z_2, \alpha, \epsilon).
\end{equation*}
\end{lemma}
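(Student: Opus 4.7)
The plan is to take a witnessing sequence for $\d_\score(\Z,\alpha,\epsilon)$, partition it into the two subsequences living in $\Z_1$ and $\Z_2$, and verify that each subsequence is a valid witness for the corresponding set's dimension. Let $d=\d_\score(\Z,\alpha,\epsilon)$ and let $z_1,\dotsc,z_d\in\Z$ together with the scalar $c\le\alpha-\epsilon$ satisfy the conditions of Definition~\ref{def:new_dim}. Since $\Z=\Z_1\cup\Z_2$, each $z_k$ belongs to at least one of $\Z_1,\Z_2$; assign it arbitrarily to one of them, producing index subsets $I_1,I_2\subseteq\{1,\dotsc,d\}$ with $|I_1|+|I_2|=d$. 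Let $d_1=|I_1|$, $d_2=|I_2|$, and let the elements of $I_\ell$ listed in increasing order be $k^\ell_1<\dotsb<k^\ell_{d_\ell}$ for $\ell\in\{1,2\}$.

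The two subsequences $(z_{k^\ell_1},\dotsc,z_{k^\ell_{d_\ell}})$ inherit the three properties required by Definition~\ref{def:new_dim}, using the \emph{same} scalar $c$. First, the self-evaluation bound $\score(z_{k^\ell_i}\given z_{k^\ell_i})\ge\alpha$ is immediate since each $z_{k^\ell_i}$ is an element of the original witness. Second, $c\le\alpha-\epsilon$ is unchanged. Third, for any $i<j$ in $\{1,\dotsc,d_\ell\}$, $k^\ell_i<k^\ell_j$ in the original indexing, so
\[
  \BigAbs{\score(z_{k^\ell_i}\given z_{k^\ell_j}) - c}
  \le \frac{\epsilon}{\sqrt{d}}
  \le \frac{\epsilon}{\sqrt{d_\ell}},
\]
using $d_\ell\le d$. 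Hence the subsequence in $\Z_\ell$ witnesses $\d_\score(\Z_\ell,\alpha,\epsilon)\ge d_\ell$, and summing gives
\[
  d=d_1+d_2\le \d_\score(\Z_1,\alpha,\epsilon)+\d_\score(\Z_2,\alpha,\epsilon),
\]
proving the first inequality.

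For the monotonic version, choose $\epsilon^\star\ge\epsilon$ attaining $\dmon_\score(\Z,\alpha,\epsilon)=\d_\score(\Z,\alpha,\epsilon^\star)$ (if the supremum is not attained, use a limiting argument or note that only integer values are involved, so a maximizer exists). Applying the subadditivity already proved at $\epsilon^\star$ yields
\[
  \dmon_\score(\Z,\alpha,\epsilon)
  \le \d_\score(\Z_1,\alpha,\epsilon^\star)+\d_\score(\Z_2,\alpha,\epsilon^\star)
  \le \dmon_\score(\Z_1,\alpha,\epsilon)+\dmon_\score(\Z_2,\alpha,\epsilon),
\]
where the last step uses $\epsilon^\star\ge\epsilon$ together with the definition of the monotonic dimension as a supremum over $\epsilon'\ge\epsilon$.

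There is no serious obstacle here: the only point worth flagging is the direction of the inequality $\epsilon/\sqrt{d}\le\epsilon/\sqrt{d_\ell}$, which is precisely what makes the argument go through (splitting into subsequences loosens the clustering requirement rather than tightening it). The monotonic case is then a standard supremum-comparison step.
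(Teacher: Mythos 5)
Your proof is correct and follows essentially the same route as the paper's: partition the witnessing sequence into the two sets, observe that each subsequence is itself a valid witness (since $\epsilon/\sqrt{d}\le\epsilon/\sqrt{d_\ell}$ for $d_\ell\le d$), and sum; the monotonic case is handled by the same maximizer-plus-monotonicity argument. No meaningful differences.
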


\begin{proof} Let $z_1, \dotsc, z_d \subseteq \Z$ such that there exists $c\le \alpha-\epsilon$ with $|\score(z_i | z_j) -c  | \leq \frac{\epsilon}{\sqrt{d}} $ for all $i < j$, and $\score(z_i|z_i) \ge \alpha$. Let $I_1, I_2 \subseteq [d]$ be disjoint sets of indices with $\{ z_i \}_{i \in I_1} \subseteq \Z_1$ and $I_2 = [d] \setminus I_1$. Consider the sub-sequence $z_{\ell_1}, \dotsc, z_{\ell_{|I_1|}}$ ordered by appearance in $z_1, \dotsc, z_d$ of elements in $I_1$. By definition for all $i<j$,
$$|\score( z_{\ell_i}| z_{\ell_j}) - c | \leq \frac{\epsilon}{\sqrt{d}} $$
Therefore the sequence $z_{\ell_1}, \dotsc ,z_{\ell_{|I_1|}} $  satisfies $|\score( z_{\ell_i}| z_{\ell_j}) - c | \leq \frac{\epsilon}{\sqrt{|I_1|}}$ for all $i<j$ and therefore $|I_1| \leq \d_\score(\Z_1, \alpha, \epsilon)$. The same logic implies $|I_2| \leq \d_\score(\Z_2, \alpha, \epsilon)$.

To get the monotonic version, note that if $\epsilon^* = \arg\max_{\epsilon' \ge \epsilon}\d_\score( \Z, \alpha, \epsilon')$,
\begin{align*}
\dmon_\score( \Z, \alpha, \epsilon) &= \d_\score( \Z, \alpha, \epsilon^*) \\
&\le \d_\score( \Z_1, \alpha, \epsilon^*) +\d_\score( \Z_2, \alpha, \epsilon^*) \\
&\le \dmon_\score( \Z_1, \alpha, \epsilon) +\dmon_\score( \Z_2, \alpha, \epsilon)
\end{align*}
where the first inequality is by the first statement of the lemma which was proved above, the next inequality is by definition of the monotonic dimension, and the last inequality follows by $\epsilon \le \epsilon^*$.
\end{proof}

We can now construct the classes that demonstrate the separation of the eluder and \newdim dimension.

{We consider two overlapping semicircles}, indexed by $j\in\{0,1\}$, and defined as
\[
U_0 = \BigBraces{(\cos x,\sin x):\:
                 x\in\Bigl(-\frac{\pi}{2},\frac{\pi}{2}\Bigr)},
\text{ and }
U_1 = \BigBraces{(\cos x,\sin x):\:
                 x\in\bigl(0,\pi\bigr)}.
\]
For each $j\in\{0,1\}$, and any $N\in\mathbb{N}$ and $\epsilon > 0$, we define the function class
\[
  \F_{j,N,\epsilon}\coloneqq
  \BigBraces{f_{\vb,S,\sigma}:\:
       \vb\in\C\setminus U_j,\,
       S\subseteq U_j,\,
       \card{S}=N,
       \sigma\in\{\pm\epsilon\}^{S}
  },
\]
containing functions
\begin{equation}
  f_{\vb,S, \sigma} (\ab) = \begin{cases}
        0 &\text{if } \ab \in U_j \setminus S,\\
        \sigma(\ab) &\text{if } \ab \in S, \\
        \langle \vb, \ab \rangle &\text{if $\ab\in\C\setminus U_j$}.
        \end{cases}
\end{equation}
In words, the functions in the class $\F_{j,N,\epsilon}$ are linear outside of the semicircle $U_j$,
and zero in the semicircle $U_j$, except for a set of size $N$, where they can take any combination of values $+\epsilon$ and $-\epsilon$. For any $N\in\mathbb{N}$ and $\epsilon>0$, we define
the class $\F_{N,\epsilon}\coloneqq\bigcup_{j\in\{0,1\}}\F_{j,N,\epsilon}$ and show that
this class has a constant dissimilarity dimension but its eluder dimension is at least $N$.

Finally, consider the action set $\A=\C$ and the function class $\F_{N,\epsilon}$ as defined above. Let $\Z_{N, \epsilon}=\F_{N, \epsilon}\times\A$, $\score=\scoreBandits$ and $\epsilon \in (0,1/2)$. We how show that  $\dimE(\F_{N,\epsilon},\epsilon)\geq N$, but
$\d_{\score}(\Z_{N, \epsilon}, 1, \epsilon) \leq 16$.
First we prove the following lower bound on the eluder dimension of $\F_{j, N, \epsilon}$.

\begin{lemma}\label{lemma::lower_bound_star_eluder_special_class}
The eluder dimension of $\F_{j, N, \epsilon}$ satisfies $\dimE(\F_{j, N, \epsilon}, \epsilon) \geq N$ for all $j \in \{0,1\}$.
\end{lemma}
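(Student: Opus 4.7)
The plan is to exhibit an explicit sequence of $N$ actions, all lying in $U_j$, such that each action in the sequence is $\epsilon$-independent of its predecessors with respect to $\F_{j,N,\epsilon}$. This is possible because functions in $\F_{j,N,\epsilon}$ can take independent $\pm\epsilon$ values on the distinguished set $S$ of size $N$.

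Concretely, I would choose any $N$ distinct points $\ab_1,\dotsc,\ab_N \in U_j$ and fix $S = \{\ab_1,\dotsc,\ab_N\}$ and any $\vb \in \C \setminus U_j$. For a target index $k \in \{1,\dotsc,N\}$, define two sign assignments $\sigma, \sigma' \in \{\pm\epsilon\}^S$ that agree on $\ab_1,\dotsc,\ab_{k-1}$ (set both equal to $+\epsilon$ there, say) but take opposite values at $\ab_k$, and extend arbitrarily to $\ab_{k+1},\dotsc,\ab_N$. Then $f \coloneqq f_{\vb, S, \sigma}$ and $f' \coloneqq f_{\vb, S, \sigma'}$ both lie in $\F_{j,N,\epsilon}$ by construction.

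By the definition of these functions, $f(\ab_i) = \sigma(\ab_i) = \sigma'(\ab_i) = f'(\ab_i)$ for all $i < k$, so
\[
\sqrt{\sum_{i=1}^{k-1}\bigParens{f(\ab_i) - f'(\ab_i)}^2} = 0 \le \epsilon,
\]
while $|f(\ab_k) - f'(\ab_k)| = |\sigma(\ab_k) - \sigma'(\ab_k)| = 2\epsilon > \epsilon$. Thus $\ab_k$ is $\epsilon$-independent of $\{\ab_1,\dotsc,\ab_{k-1}\}$ with respect to $\F_{j,N,\epsilon}$. Since this holds for every $k \in \{1,\dotsc,N\}$, the sequence $\ab_1,\dotsc,\ab_N$ witnesses $\dimE(\F_{j,N,\epsilon},\epsilon) \ge N$.

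There is no real obstacle here: the semicircle $U_j$ contains infinitely many points, so picking $N$ distinct ones is trivial, and the freedom in choosing signs $\sigma \in \{\pm\epsilon\}^S$ encoded into the function class is precisely what lets us flip the value at $\ab_k$ independently of the earlier coordinates. The role of $\vb$ is essentially cosmetic: it is only needed to make $f,f'$ well-defined elements of $\F_{j,N,\epsilon}$ (the behavior outside $U_j$ plays no role in the $\epsilon$-independence argument since all the $\ab_i$'s lie in $U_j$).
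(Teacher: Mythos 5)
Your proof is correct and takes essentially the same approach as the paper's: fix a set $S$ of $N$ distinct points in $U_j$ together with an arbitrary $\vb\in\C\setminus U_j$, and for each index $k$ exhibit two functions in the class that agree on $\ab_1,\dotsc,\ab_{k-1}$ yet differ by $2\epsilon > \epsilon$ at $\ab_k$. The only cosmetic difference is that the paper fixes a single reference function $f_{N+1}\equiv\epsilon$ on $S$ and compares it against $f_k$ (which flips the sign only at $\ab_k$), whereas you construct a fresh pair $(f,f')$ for each $k$; the witnesses and the inequality are identical.
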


\begin{proof} Let $\ab_1, \dotsc, \ab_N$ be an arbitrary set of points in $U_j$ and consider the functions $\{f_i\}_{i=1}^{N+1} \subseteq \F_{j, N, \epsilon, S, \vb}$ that for all $i, i' \le N$ satisfy:
\begin{equation*}
f_i(\ab_{i'}) = \begin{cases}
                 \epsilon &\text{if } {i'} \neq i\\
                - \epsilon &\text{if } {i'} = i,
            \end{cases}
\end{equation*}
and $f_{N+1}(\ab_i) = \epsilon$ for all $i \leq N$. We now show that for all $i \leq N$, the action $\ab_i$ is $\epsilon$-independent of $\ab_1, \dotsc, \ab_{i-1}$ with respect to $\F_{j, N, \epsilon, S, \vb}$. This holds since $\sqrt{\sum_{j=1}^{i-1} (f_i(\ab_j) - f_{N+1}(\ab_j))^{2 }  } = 0 $ while $|f_i(\ab_i) - f_{N+1}(\ab_i)| = 2\epsilon > \epsilon$. This finalizes the proof.
\end{proof}

\begin{lemma}\label{lemma::dissimilarity_dimension_upper_bound}
Denote $\Z_{N, \epsilon} = \F_{N, \epsilon} \times \A$ and let $\epsilon \in (0,1/2)$. Then,  $\d_\rho(\Z_{N, \epsilon}, 1, \epsilon) \leq 16$.
\end{lemma}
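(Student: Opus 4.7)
The plan is to apply the subadditivity of the dissimilarity dimension (Lemma~\ref{lemma::subaditivity_newdim}) to the decomposition $\Z_{N,\epsilon} = \Z_{0,N,\epsilon} \cup \Z_{1,N,\epsilon}$, where $\Z_{j,N,\epsilon} = \F_{j,N,\epsilon} \times \A$. It then suffices to bound $\d_\rho(\Z_{j,N,\epsilon}, 1, \epsilon)$ for each $j \in \{0,1\}$ and sum the bounds.

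The key observation is that the self-evaluation condition $\rho(z \given z) \geq 1$ is extremely restrictive on $\Z_{j,N,\epsilon}$. For an alternative $z = (f_{\vb,S,\sigma}, \ab)$, we have $\rho(z \given z) = f_{\vb,S,\sigma}(\ab)$. Inspecting the three cases in the definition of $f_{\vb,S,\sigma}$: the value is $0$ on $U_j \setminus S$; it is $\pm\epsilon \in (-1,1)$ on $S$; and on $\C \setminus U_j$ it equals $\langle \vb, \ab\rangle \le \norm{\vb}\norm{\ab} = 1$, with equality iff $\ab = \vb$. Since $\epsilon \in (0,1/2)$, the only way to reach value $1$ is therefore $\ab = \vb$, and in particular $\vb \in \C \setminus U_j$. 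So any alternative $z_i$ appearing in a dissimilarity-dimension-realizing sequence must have the form $z_i = (f_{\vb_i, S_i, \sigma_i}, \vb_i)$ with $\vb_i \in \C \setminus U_j$.

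I would then reduce the cross-evaluations to inner products. For any two such $z_i, z_k$ (with $i < k$), the cross-evaluation is $\rho(z_i \given z_k) = f_{\vb_k, S_k, \sigma_k}(\vb_i)$; since $\vb_i \in \C \setminus U_j$ lies outside $U_j$, this evaluates to the linear value $\langle \vb_k, \vb_i\rangle$. Thus the dissimilarity conditions for the sequence $z_1,\dotsc,z_d$ are exactly the dissimilarity conditions for the linear-bandit alternatives $(f_{\vb_i}, \vb_i)$ in $\reals^2$ with action set and parameter set both equal to $\C \setminus U_j \subseteq \ball_2$ and optimality level $\alpha=1$. The sets $S_i$ and signs $\sigma_i$ play no role in the reduction.

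Invoking Theorem~\ref{theorem::linear_dim_bound} with $n = 2$ and $\alpha = 1$ then yields $\d_\rho(\Z_{j,N,\epsilon}, 1, \epsilon) \leq 2n+1 = 5$ for each $j \in \{0,1\}$, and Lemma~\ref{lemma::subaditivity_newdim} gives $\d_\rho(\Z_{N,\epsilon}, 1, \epsilon) \leq 5 + 5 = 10 \leq 16$. There is no serious obstacle: the only point requiring care is verifying that the reduction to linear bandits is legitimate, i.e., that once the self-evaluation condition pins the action to $\vb$, the remaining dissimilarity constraints involve only the linear part of each function. This is automatic because the arguments $\vb_i$ all lie in $\C \setminus U_j$.
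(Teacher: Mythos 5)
Your proof is correct and takes essentially the same route as the paper: split $\Z_{N,\epsilon}$ into the two semicircle-indexed classes, use subadditivity (Lemma~\ref{lemma::subaditivity_newdim}), observe that the self-evaluation constraint $\rho(z\given z)\ge 1$ forces $\ab=\vb\in\C\setminus U_j$ so that all cross-evaluations reduce to inner products, and then invoke Theorem~\ref{theorem::linear_dim_bound}. The only difference is numerical: since the reduction produces the symmetric case $\thetab_i=\ab_i=\vb_i$, you correctly apply the $\alpha=1$ bound $2n+1=5$ per semicircle and obtain $\d_\rho(\Z_{N,\epsilon},1,\epsilon)\le 10$, which is tighter than the paper's $16$; the paper's proof quotes ``$2\times 2+4=8$'' per semicircle, a figure that matches neither the $4n+3$ nor the $2n+1$ form of Theorem~\ref{theorem::linear_dim_bound}, but since $8\ge 5$ the conclusion $\le 16$ still holds.
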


\begin{proof}
Denote by $\Z_{j,N, \epsilon} = \F_{j,N, \epsilon} \times \A$ for all $j \in \{0,1\}$. We start by showing that for all $j \in \{ 0, 1\}$, $\d_\rho(\Z_{j,N, \epsilon}, 1, \epsilon) \leq 8$.  Let $z_1, \dotsc, z_d$ be a maximal sequence certifying the dissimilarity dimension $\d_\rho(\Z_{N, \epsilon}, 1, \epsilon) \geq d$, i.e., it holds that,
\begin{equation*}
 |   \score( z_i | z_{i'}) -c |  \leq \frac{\epsilon}{\sqrt{d}} \text{ for all } i <{i'}, \qquad \text{while } \score( z_i | z_i ) \geq 1.
\end{equation*}
Since $\score(z_i | z_i ) \geq 1 $, it must be the case that $z_i = (f_{\vb_i, S_i, \sigma_i}, \vb_i)$ with $ \vb_i \not\in U_j$ (since otherwise the self evaluations would be strictly less than $1$).  This implies that $\score( z_i |z_j ) = \langle \vb_i, \vb_j\rangle$ for all $i< j$. Consequently, the score evaluations of all $z_1, \dotsc, z_d$ are equivalent to the score evaluations of the linear problem defined by $\vb_1, \dotsc, \vb_d$. Thus Theorem~\ref{theorem::linear_dim_bound} implies the maximum length of such a sequence can be of size at most $2\times 2 + 4 = 8 $. Finally the sub-additivity of the dissimilarity dimension (see Lemma~\ref{lemma::subaditivity_newdim}) implies,
\[
  \d_\rho(\Z_{N, \epsilon}, 1, \epsilon)  \leq \sum_{j=0}^1 \d_\rho(\Z_{j,N, \epsilon}, 1, \epsilon) \leq 16.
\qedhere
\]
\end{proof}

Combining the results of Lemmas~\ref{lemma::lower_bound_star_eluder_special_class} and~\ref{lemma::dissimilarity_dimension_upper_bound} finalizes the proof of Proposition~\ref{prop::separation_eluder_dissimilarity}.

\section{Multi-Armed Bandits}\label{appendix:mab}

In this section we explore the dissimilarity dimension of the $K$-armed bandit problem. In this setting the learner interacts with a set of $K$ arms and at every step of a sequential interaction pulls an arm $a_t \in [K]$ and receives a reward $r_t $ such that $\E[r_t] =  \mu_{i_t}$ where $\mu_{a_t}$ is the mean reward of arm $a_t$. For simplicity we will assume $\mu_a \in [0,1]$ for all $a \in [K]$ and that $|r_t | \leq 1$.

The $K$-armed bandit problem is an instance of structured bandits where $\A = [K]$ and $\F = [0,1]^{K}$. The dissimilarity dimension of the $K$-armed bandit problem satisfies,

\begin{proposition}
\label{prop::multi_armed_bandit}
Consider the action set $\A=[K]$ and the function class $\F =  [0,1]^{K}$ as defined above. Let $\alpha \in [0,1]$ and $\Z=\F\times\A$, $\score=\scoreBandits$ and $\epsilon \in (0,1/2)$. Then $\d_{\score}(\Z, \alpha, \epsilon)  \leq K$.
\end{proposition}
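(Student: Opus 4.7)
The plan is a short pigeonhole argument that hinges on the observation that in the $K$-armed bandit instantiation, $\score\bigParens{(f,a)\bigGiven(f',a')}=f'(a)$ depends only on the \emph{action} coordinate of its first argument. So the only way for the sequence witnessing the dissimilarity dimension to be long is to use many distinct actions, and there are only $K$ of them.

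Concretely, let $d=\d_{\score}(\Z,\alpha,\epsilon)$ and suppose $d\ge 2$ (the cases $d\in\{0,1\}$ are immediate since $K\ge 1$). Let $z_1=(f_1,a_1),\dotsc,z_d=(f_d,a_d)\in\Z$ and $c\le\alpha-\epsilon$ witness the dimension, so $f_i(a_i)=\score(z_i\given z_i)\ge\alpha$ for every $i$, and $\bigAbs{f_j(a_i)-c}\le\epsilon/\sqrt{d}$ for every $i<j$. I would then argue that the actions $a_1,\dotsc,a_d$ are pairwise distinct. Suppose towards contradiction that $a_i=a_j$ for some $i<j$. On one hand, $f_j(a_i)=f_j(a_j)\ge\alpha$ by the self-evaluation condition applied to $z_j$. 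On the other hand, the dissimilarity inequality gives
\[
  f_j(a_i)\le c+\frac{\epsilon}{\sqrt{d}}\le \alpha-\epsilon+\frac{\epsilon}{\sqrt{d}}.
\]
Combining these two bounds yields $\epsilon\bigl(1-1/\sqrt{d}\bigr)\le 0$, which contradicts $d\ge 2$ and $\epsilon>0$.

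Since $a_1,\dotsc,a_d$ are therefore $d$ distinct elements of $[K]$, we conclude $d\le K$, as required. There is no real obstacle here: the argument uses nothing beyond the definition of $\d_{\score}$, the fact that $\scoreBandits$ ignores the function component of its first argument, and a one-line contradiction from the gap between $\alpha$ and $c$.
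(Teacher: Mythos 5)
Your proof is correct and takes essentially the same route as the paper: both arguments show that the actions $a_1,\dotsc,a_d$ must be pairwise distinct because the self-evaluation lower bound $f_j(a_j)\ge\alpha$ is incompatible with the dissimilarity upper bound $f_j(a_i)\le c+\epsilon/\sqrt{d}\le\alpha-\epsilon+\epsilon/\sqrt{d}$ whenever $d\ge2$, and then invoke $|[K]|=K$. Your contradiction-based phrasing is marginally cleaner than the paper's, which passes through an unnecessary intermediate bound $f_j(a_i)<\alpha-\epsilon/4$, but the substance is identical.
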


\begin{proof}
Let $c \leq  \alpha-\epsilon$ and $z_1, \dotsc, z_d \in \Z$ with $z_i = (f_i, a_i)$ be a maximal sequence such that, $\rho(z_i | z_i) \geq \alpha$ while
\begin{equation*}
    | \rho(z_i | z_j) - c| \leq \frac{\epsilon}{\sqrt{d}}.
\end{equation*}
For $i < j$. Substituting the definition of $\rho$, this implies $f_i(a_i) \geq \alpha$ for all $i \in [K]$ while $| f_j(a_i) - c| \leq \frac{\epsilon}{\sqrt{d}} $ for all $i <j$. By definition of $c$, if $d  \geq 2$
\begin{equation}\label{equation::upper_bounding_function_value}
    f_j(a_i ) \leq \alpha - \epsilon +  \frac{\epsilon}{\sqrt{d}} \leq \alpha - \left( 1-\frac{1}{\sqrt{2}}\right)\epsilon < \alpha - \frac{\epsilon}{4}, \quad \text{ for all } i <j.
\end{equation}
Let $I_i = \{ a_\ell \}_{\ell=1}^i$ be the set of actions up to index $i$ in the tuple sequence $z_1, \dotsc, z_i$. Equation~\ref{equation::upper_bounding_function_value} implies that $f_j(a) \leq \alpha-\frac{\epsilon}{4}$ for all $j > i$. Since $a_j$ satisfies $f_j(a_j) \geq \alpha > \alpha - \frac{\epsilon}{4}$ this implies $a_j \not\in I_i$. We conclude that  $a_i \neq a_j$ for all $i <j$. Since there are at most $K$ different arm values, this implies $d \leq K$.
\end{proof}

\subsection{Structured Bandits}

We will now explain in detail how what Algorithm~\ref{alg:MAIN1} reduces to in the structured bandits setting from Example~\ref{ex:bandits}. We write $z_i = (f_i, a_i)$ for all $i \in [T]$. The large evaluation set $\Z_\alpha$ can be reduced to the following set of functions,
\begin{equation*}
    \F_\alpha = \{ f \in \F \text{s t.} \max_{a \in \A} f(a) \geq \alpha \}.
\end{equation*}
Since the $\scoreBandits$ function $\scoreBandits( z_i | z ) $ is independent on $a$ for $z = (f,a)$, the least squares equation $\argmin_{z\in\Z_\alpha}
                      \sum_{i=1}^{t-1}
                      \BigParens{\score(z_i\given z) - \reward_i}^2$  reduces to,
\begin{equation*}
f_t = \argmin_{f \in \F_\alpha} \sum_{i=1}^{t-1} \left( f(a_i) - r_i\right)^2.
\end{equation*}
finally, to ensure the action query has a self evaluation of at least $\alpha$, we output $a_t = \argmax_{a \in \A} f_t(a)$.
\begin{algorithm}[t]
    \caption{Interactive Estimation for Structured Bandits}\label{alg:STRUCTURED}
    \begin{algorithmic}[1]
        \STATE \textbf{Input:} action set $\A$, function class $\F$, optimality level $\alpha$, number of steps $T$.
        Compute large-evaluation function-action set $\F_\alpha = \{ f \in \mathcal{F} \text{ s.t. } \max_{ a\in \A} f(a) \geq \alpha \}$ .
        \FOR{$t=1,\dotsc,T$}
     \STATE Compute regression function
     \begin{align*}
        f_t &= \argmin_{f \in \F_\alpha} \sum_{i=1}^{t-1} \left( f(a_i ) - r_i\right)^2.
\end{align*}
        \STATE Submit the query
               $
               a_t = \argmax_{a \in \A} f_t(a)$.
        \STATE Observe reward $\reward_t$.
        \ENDFOR
    \end{algorithmic}
\end{algorithm}

We will now explain in detail what the Optimistic Interactive Estimation Algorithm~\ref{alg:OPTIMISM} reduces to in the structured bandits setting from Example~\ref{ex:bandits}. We write $z_i = (f_i, a_i)$ for all $i \in [T]$.

 The least squares objective ($ z_t = \argmin_{z\in\Z_\alpha}
                      \sum_{i=1}^{t-1}
                      \BigParens{\score(z_i\given z) - \reward_i}^2$)
can be written as,
\begin{equation*}
\hat{f}_t = \argmin_{f \in \F} \sum_{i=1}^{t-1} \left( f(a_i ) - r_i\right)^2.
\end{equation*}
The action component of the $z$ element in this objective can be ignored since the $\scoreBandits$ evaluation function does not depend on it. The confidence ball $\Z_t = \biggBraces{z\in\Z:\:\smash[b]{\sum_{i=1}^{t-1}} \BigParens{\score(z_i\given z) - \score(z_i\given\hat{z}_t )}^2 \leq R}$ reduces to,
\begin{equation*}
\F_t = \biggBraces{f\in\F:\:\smash[b]{\sum_{i=1}^{t-1}} \BigParens{ f(a_i) - \hat{f}_t(a_i)}^2 \leq R}.
\end{equation*}
The query can be reduced to the action component of $z_t$,
\begin{equation*}
a_t = \argmax_{f,a \in \F_t \times \A} f(a).
\end{equation*}
Algorithm~\ref{alg:OPTIMISM_STRUCTURED} summarizes this reduction and corresponds to the standard optimistic least squares for sturctured bandit problems from~\cite{russo2013eluder}.

\begin{algorithm}[H]
    \caption{Optimistic Interactive Estimation for Structured Bandits}\label{alg:OPTIMISM_STRUCTURED}
    \begin{algorithmic}[1]
        \STATE \textbf{Input:} action set $\A$, function class $\F$, confidence-set radius $R$, number of steps $T$.
        \FOR{$t=1,\dotsc,T$}
     \STATE Compute confidence set
     \begin{align*}
        \hat{f}_t &= \argmin_{f \in \F} \sum_{i=1}^{t-1} \left( f(a_i ) - r_i\right)^2.
    \\
      \F_t &= \biggBraces{f\in\F:\:\smash[b]{\sum_{i=1}^{t-1}} \BigParens{ f(a_i) - \hat{f}_t(a_i)}^2 \leq R}.
\end{align*}
        \STATE Submit the query
               $
               a_t = \argmax_{f,a \in \F_t \times \A} f(a)$.
        \STATE Observe reward $\reward_t$.
        \ENDFOR
    \end{algorithmic}
\end{algorithm}

\end{document}